\DeclareRobustCommand{\eg}{e.g.,\@\xspace}
\DeclareRobustCommand{\ie}{i.e.,\@\xspace}
\DeclareRobustCommand{\wrt}{w.r.t.\@\xspace}
\def \regret {\mathcal{R}}
\def \horizon {H}
\newcommand{\Sdist}[1]{ \rho_{\cX}\left(#1\right) }
\newcommand{\Adist}[1]{ \rho_{\cA}\left(#1\right) }
\newcommand{\Wassdist}[1]{ W_1\left(#1\right) }
\def \Sdistfunc {\rho_{\cX} }
\def \Adistfunc {\rho_{\cA} }
\def \distfunc {\rho }
\newcommand{\dist}[1]{ \rho\left[#1\right] }
\newcommand{\dirac}[1]{ \delta_{#1} }
\def \bonus {\mathbf{\mathtt{B}}}
\def \rbonus {{}^r\mathbf{\mathtt{B}}}
\def \pbonus {{}^p\mathbf{\mathtt{B}}}
\def \lipucbvi { \hyperref[alg:lipucbvi]{{Kernel-UCBVI}}\xspace}
\def \lipucbvigreedy {\hyperref[alg:lipucbvigreedy.maintext]{{Greedy-Kernel-UCBVI}}\xspace}
\def \optimisticQ {\texttt{optimisticQ}}
\def \data {\cD}
\def \kernel {\psi}
\def \kernelfunc {g}
\def \favevent {\cG}
\newcommand{\Lip}[1]{ \mathrm{Lip}\pa{#1} }
\def \bonusvarP { \mathbf{v}_{\mathrm{p}} }
\def \bonusbiasP { \mathbf{b}_{\mathrm{p}} }
\def \bonusvarR { \mathbf{v}_{\mathrm{r}} }
\def \bonusbiasR { \mathbf{b}_{\mathrm{r}} }
\def \gencount {\mathbf{C}}
\def \partitioncount {\mathbf{N}}
\def \tx {\tilde{x}}
\def \ta {\tilde{a}}
\def \weight {w}
\def \normweight {\widetilde{w}}
\def \hoeffdingVarP  {\mathbf{v}_\mathrm{p}}
\def \hoeffdingBiasP {\mathbf{b}_\mathrm{p}}
\def \hoeffdingVarR {\mathbf{v}_\mathrm{r}}
\def \hoeffdingBiasR {\mathbf{b}_\mathrm{r}}
\def \bernvar {\theta_{\mathrm{v}}}
\def \bernbias {\theta_{\mathrm{b}}^1}
\def \bernbiastwo {\theta_{\mathrm{b}}^2}
\def \trueP { P }
\def \estP { \widehat{P} }
\def \trueR { \reward }
\def \estR { \widehat{\reward} }
\def \totalcovdim  {d}
\def \XAcovdim  {d_1}
\def \Xcovdim  {d_2}
\newcommand{\XAcovnumber}[1]{ \cN\pa{ #1, \stateactionspace, \distfunc} }
\newcommand{\Xcovnumber}[1]{ \cN\pa{ #1, \statespace, \Sdistfunc} }
\def \sigmacov {|\cC_\sigma|}
\def \Xsigmacov {|\widetilde{\cC}_\sigma|}
\def \logplus {\log^+}
\def \rmd {\mathrm{d}}
\def \termA {\mathbf{(A)}}
\def \termB {\mathbf{(B)}}
\def \termC {\mathbf{(C)}}
\def \termD {\mathbf{(D)}}
\def \mdp {\mathcal{M}}
\def \actionspace {\mathcal{A}}
\def \statespace {\mathcal{X}}
\def \stateactionspace{\statespace\times\actionspace}
\def \transition {P}
\def \reward {r}
\def \policy {\pi}
\def \tV{\widetilde{V}}
\def \tQ{\widetilde{Q}}
\def \eqdef { \overset{\mathrm{def}}{=} }
\newcommand{\pa}[1]{ \left(#1\right) }
\newcommand{\abs}[1]{ \left|#1\right| }
\newcommand{\braces}[1]{ \left\lbrace#1\right\rbrace  }
\newcommand{\sqrbrackets}[1]{\left[ #1 \right]}
\newcommand{\given}{\Big|}
\newcommand{\prob}[2][]{ \mathbb{P}_{#1} \left[ #2 \right] }
\newcommand{\expect}[2][]{ \mathbb{E}_{#1} \left[ #2 \right] }
\newcommand{\variance}[2][]{ \mathbb{V}_{#1} \left[ #2 \right] }
\newcommand{\Exponential}{\cE}
\newcommand{\BigO}[1]{ \mathcal{O}\pa{#1} }
\newcommand{\BigOtilde}[1]{ \widetilde{\mathcal{O}}\pa{#1} }
\newcommand{\indic}[1]{ \mathbb{I}\braces{#1} }
\newcommand{\ceil}[1]{\left\lceil #1 \right\rceil}
\DeclareMathOperator*{\argmin}{argmin}
\DeclareMathOperator*{\argmax}{argmax}
\newcommand{\norm}[1]{ \left\Vert #1 \right\Vert }
\newtheorem{assumption}{Assumption}
\newtheorem{fact}{Fact}
\newtheorem{lemma}{Lemma}
\newtheorem{proposition}{Proposition}
\newtheorem{theorem}{Theorem}
\newtheorem{definition}{Definition}
\newtheorem{corollary}{Corollary}
\newtheorem{remark}{Remark}
\newtheorem{example}{Example}
\newtcolorbox{blockquote}{colback=orange!15!white,boxrule=0pt}
\newenvironment{fproposition}
{\begin{blockquote}\begin{proposition}}
		{\end{proposition}\end{blockquote}}
\newenvironment{fcorollary}
{\begin{blockquote}\begin{corollary}}
		{\end{corollary}\end{blockquote}}
\newenvironment{fdefinition}
{\begin{blockquote}\begin{definition}}
		{\end{definition}\end{blockquote}}
\newenvironment{ftheorem}
{\begin{blockquote}\begin{theorem}}
		{\end{theorem}\end{blockquote}}
\newcommand{\cA}{\mathcal{A}}
\newcommand{\cB}{\mathcal{B}}
\newcommand{\cC}{\mathcal{C}}
\newcommand{\cD}{\mathcal{D}}
\newcommand{\cE}{\mathcal{E}}
\newcommand{\cF}{\mathcal{F}}
\newcommand{\cG}{\mathcal{G}}
\newcommand{\cN}{\mathcal{N}}
\newcommand{\cO}{\mathcal{O}}
\newcommand{\cT}{\mathcal{T}}
\newcommand{\cU}{\mathcal{U}}
\newcommand{\cX}{\mathcal{X}}
\newcommand{\bP}{\mathbb{P}}
\newcommand{\NN}{\mathbb{N}}
\newcommand{\RR}{\mathbb{R}}
\newcommand{\ZZ}{\mathbb{Z}}
\newcommand{\tcC}{\widetilde{\cC}}
\newcommand{\tdelta}{\widetilde{\delta}}
\definecolor{Bleu}{RGB}{0,0,204}
\definecolor{Violet}{RGB}{102,0,204}
\definecolor{Rouge}{RGB}{204,0,0}
\definecolor{Highlight}{RGB}{251,0,0}
\definecolor{darkblue}{RGB}{0,0,120}
\definecolor{darkred}{RGB}{140,0,0}
\definecolor{darkgreen}{RGB}{0,120,0}
\icmltitlerunning{Kernel-Based Reinforcement Learning: A Finite-Time Analysis}
\begin{document}

\twocolumn[
\icmltitle{Kernel-Based Reinforcement Learning: A Finite-Time Analysis}



\icmlsetsymbol{equal}{*}

\begin{icmlauthorlist}
	\icmlcorrespondingauthor{Omar D.\,Domingues}{omar.darwiche-domingues@inria.fr}
	\icmlauthor{Omar D. Domingues}{inria,ulille}
	\icmlauthor{Pierre M\'enard}{ovgu}
	\icmlauthor{Matteo Pirotta}{fair}
	\icmlauthor{Emilie Kaufmann}{inria,cnrs}
	\icmlauthor{Michal Valko}{inria,cnrs,deepmind}
\end{icmlauthorlist}

\icmlaffiliation{ovgu}{Otto von Guericke University}
\icmlaffiliation{inria}{Inria Lille}
\icmlaffiliation{fair}{Facebook AI Research, Paris}
\icmlaffiliation{deepmind}{DeepMind Paris}
\icmlaffiliation{cnrs}{CNRS}
\icmlaffiliation{ulille}{Universit\'e de Lille}

\icmlkeywords{exploration, kernel-based, reinforcement learning, regret}

\vskip 0.3in
]



\printAffiliationsAndNotice{}  

\begin{abstract}
We consider the exploration-exploitation dilemma in finite-horizon reinforcement learning problems whose state-action space is endowed with a metric. We introduce Kernel-UCBVI, a model-based optimistic algorithm that leverages the smoothness of the MDP and a non-parametric kernel estimator of the rewards and transitions to efficiently balance exploration and exploitation. For problems with $K$ episodes and horizon $H$, we provide a regret bound of $\widetilde{O}\left( H^3 K^{\frac{2d}{2d+1}}\right)$, where $d$ is the covering dimension of the joint state-action space. This is the first regret bound for kernel-based RL using smoothing kernels, which requires very weak assumptions on the MDP and has been previously applied to a wide range of tasks. We empirically validate our approach in continuous MDPs with sparse rewards.%
\end{abstract}

	\section{Introduction}

	Reinforcement learning (RL) is a learning paradigm in which an agent interacts with an environment by taking actions and receiving rewards. At each time step $t$, the environment is characterized by a state variable $x_t \in \statespace$, which is observed by the agent and influenced by its actions $a_t \in \actionspace$.
    In this work, we consider the online learning problem where the agent has to learn how to act optimally by interacting with an unknown environment.
    To learn efficiently, the agent has to trade-off exploration to gather information about the environment and exploitation to act optimally with respect to the current knowledge.
    The performance of the agent is measured by the \emph{regret}, i.e., the difference between the rewards that would be gathered by an optimal agent and the rewards obtained by the agent.
    This problem has been extensively studied for Markov Decision Processes (MDPs) with finite state-action space.
    \emph{Optimism in the face of uncertainty} (OFU,~\citealt{jaksch2010near}) and \emph{Thompson Sampling}~\citep{strens2000bayesian, osband2013more} principles have been used to design algorithms with sublinear regret.
    However, the guarantees for these approaches cannot be naturally extended to an arbitrarily large state-action space since the regret depends on the number of states and actions.
    When the state-action space is continuous, additional structure in MDP is required to efficiently solve the exploration-exploitation dilemma.

    In this paper, we focus on the online learning problem in MDPs with large or continuous state-action spaces.
    We suppose that the state-action set $\stateactionspace$ is equipped with a known \emph{metric}. For instance, this is typically the case in continuous control problems in which the state space is a subset of $\RR^d$ equipped with the Euclidean metric. 
    As shown by \citet{Ormoneit2002} and \citet{barreto2016practical}, smoothing-kernel approaches converge asymptotically to an optimal policy and perform well empirically in a wide range of continuous MDPs.
   	In this paper, we tackle the problem of \emph{exploration} in such approaches, by proposing an \emph{optimistic} algorithm based on smoothing-kernel estimators of the reward and transition functions of the underlying MDP. The advantages of this approach are:
	 \begin{enumerate*}[label=(\roman*)]
	 	\item it requires weak assumptions on the MDP,
	 	\item it allows us to easily provide expert knowledge to the algorithm through kernel design, and
	 	\item it applies to problems with possibly infinite states without relying on any kind of discretization. 
	 \end{enumerate*}

    \paragraph{Related work} Kernel-based RL (KBRL) using smoothing kernels has been initially proposed by \citet{Ormoneit2002}, who analyzed the algorithm assuming that transitions are generated from \emph{independent} samples, and provide  \emph{asymptotic} convergence guarantees. \citet{barreto2016practical} propose a stochastic factorization technique to reduce the computational complexity of KBRL. 
    In this paper, we provide a modification of KBRL that collects data \emph{online} and for which we prove \emph{finite-time} regret guarantees under weak conditions on the MDP. Under stronger conditions, that use positive-define kernels defining reproducing kernel Hilbert spaces (RKHS) or Gaussian Processes, regret bounds are provided by \citet{pmlr-v89-chowdhury19a}, \citet{chowdhury2020no} and \citet{yang2020provably}.

    Regret minimization in finite MDPs has been extensively studied both in model-based and model-free settings.
    While model-based algorithms~\citep{jaksch2010near, Azar2017, Zanette2019} use the estimated rewards and transitions to perform planning at each episode, model-free algorithms~\citep{Jin2018} directly build an estimate of the optimal Q-function that is updated incrementally.

    For MDPs with continuous state-action space, the sample complexity~\citep{Kakade2003a,Kearns2002near,Lattimore2013general,Pazis13pac} or regret have been studied under structural assumptions.
    Regarding regret minimization,  
    a standard assumption is that rewards and transitions are Lipschitz continuous. \citet{ortner2012online} studied this problem in average reward problems.
    They combined the ideas of UCRL2~\citep{jaksch2010near} and uniform discretization, proving a regret bound of $\BigOtilde{T^{\frac{2d+1}{2d+2}}}$ for a learning horizon $T$ in $d$-dimensional state spaces. This work was later extended by~\citet{Lakshmanan2015} to use a kernel density estimator instead of a frequency estimator for each region of the fixed discretization.
    For each \emph{discrete} region $I(x)$, the density $p(\cdot|I(x), a)$ of the transition kernel is computed through kernel density estimation. The granularity of the discretization is selected in advance based on the properties of the MDP and the learning horizon $T$. As a result, they improve upon the bound of~\citet{ortner2012online}, but require the transition kernels to have densities that are $\kappa$ times differentiable.\footnote{For instance, when $d=1$ and $\kappa\to\infty$, their bound approaches $T^{\frac{2}{3}}$, improving the previous bound of $T^{\frac{3}{4}}$.} However, these two algorithms rely on an intractable optimization problem for finding an optimistic MDP. \citet{QianFPL19} solve this issue by providing an algorithm that uses exploration bonuses, but they still rely on a uniform discretization of the state space.
    \citet{Ok2018exploration} studied the asymptotic regret in Lipschitz MDPs with \emph{finite} state and action spaces, providing a nearly asymptotically optimal algorithm. Their algorithm leverages ideas from asymptotic optimal algorithms in structured bandits~\citep{Combes2017structured} and tabular RL~\citep{burnetas1997optimal}, but does not scale to continuous state-action spaces.
    
    Regarding exploration for finite-horizon MDP with continuous state-action space, \citet{Yang2019b} present an algorithm for deterministic MDPs with Lipschitz transitions.  Assuming that the Q-function is Lipschitz continuous,
    \citet{Song2019} provided a model-free algorithm by combining the ideas of tabular optimistic Q-learning~\citep{Jin2018} with uniform discretization, showing a regret bound of $O(H^{\frac{5}{2}}K^{\frac{d+1}{d+2}})$ where $d$ is the covering dimension of the state-action space. This approach was  extended by~\citet{Sinclair2019} and~\citet{Touati2020zoomRL} to use adaptive partitioning of the state-action space, achieving the same regret bound. \citet{osband2014model} prove a \emph{Bayesian} regret bound in terms of the eluder and Kolmogorov dimension, assuming access to an approximate MDP planner.  
    In addition, there are many results for facing the exploration problem in continuous MDP with \emph{parametric} structure, e.g., linear-quadratic systems~\citep{pmlr-v19-abbasi-yadkori11a} or other linearity assumptions~\citep{Yang2019, Jin2019}, which are outside the scope of our paper.

   \paragraph{Contributions}
   The main contributions of this paper are the following. \textbf{(1)} We provide the first regret bound for KBRL, which applies to a wide range of RL tasks with an entirely \emph{data-dependent} approach;  \textbf{(2)} In order to derive our regret bound, we provide novel concentration inequalities for weighted sums (Lemmas~\ref{lemma:self-normalized-weighted-sum} and \ref{lemma:bernstein-freedman-weighted-sum}) that permit to build confidence intervals for non-parametric kernel estimators (Propositions~\ref{prop:concentration-of-rewards-continuous} and \ref{prop:concentration-of-transitions-continuous}) that are of independent interest. 
   \textbf{(3)} We show that the regret of model-based algorithms, although having a better empirical performance, seem to suffer from a worse dependence on the state-action dimension $d$ than model-free ones. We discuss the origins of this issue by looking at the regret bounds of tabular algorithms.

	\section{Setting}

	\paragraph{Notation}  For any $j \in \ZZ_+$, we define $[j] \eqdef \braces{1, \ldots, j}$. For a measure $P$ and any function $f$, let $Pf \eqdef \int f(y) \mathrm{d}P(y)$.
    If $P(\cdot | x,a)$ is a measure for all $(x,a)$, we let $Pf(x,a) = P(\cdot |x,a) f = \int f(y) \mathrm{d}P(y | x,a)$.

	\paragraph{Markov decision processes} Let $\statespace$ and $\actionspace$ be the sets of states and actions, respectively. We assume that there exists a metric $\rho: (\stateactionspace)^2 \to \RR_{\geq 0}$ on the state-action space and that $(\statespace, \cT_\statespace)$ is a measurable space with $\sigma$-algebra $\cT_\statespace$. We consider an episodic Markov decision process (MDP), defined by the tuple $\mdp \eqdef (\statespace, \actionspace, \horizon, P, r)$ where $\horizon \in \ZZ_+$ is the length of each episode, $P = \braces{P_h}_{h\in[H]}$ is a set of transition kernels from $(\stateactionspace)\times \cT_\statespace$ to $\RR_{\geq 0}$, and $r = \braces{r_h}_{h\in[H]}$ is a set of reward functions from $\stateactionspace$ to $[0, 1]$. A  policy $\pi$ is a mapping from $[H] \times \statespace$ to $\actionspace$, such that $\pi(h, x)$ is the action chosen by $\pi$ in state $x$ at step $h$. The Q-value of a policy $\pi$ for state-action $(x,a)$ at step $h$ is the expected sum of rewards obtained by taking action $a$ in state $x$ at step $h$ and then following the policy $\pi$, that is
	\begin{align*}
	Q_h^\pi(x,a) & \eqdef  \expect{\sum_{h'=h}^H \reward_{h'}(x_{h'}, a_{h'}) \given x_h = x,a_h=a},
	\end{align*}
	where the expectation is under transitions in the MDP $x_{h'+1} \sim P_{h'}(\cdot | x_{h'},a_{h'})$ and $a_{h'}=\pi(h',x_{h'})$. 
	The value function of policy $\pi$ at step $h$ is $V_h^\pi(x) = Q_h^\pi(x,\pi(h,x))$. 
	The optimal value functions, defined by $V_h^*(x) \eqdef \sup_{\pi} V_h^\pi(x)$ for $h \in [H]$, satisfy the optimal Bellman equations \citep{Puterman1994}: $V_h^*(x) = \max_{a \in \actionspace} Q_h^*(x, a)$, where 
	\begin{align*}
	 Q_h^*(x, a) \eqdef \reward_h(x,a) + \int_{\statespace} V_{h+1}^*(y)\mathrm{d}P_h(y|x,a)
	\end{align*}
	and, by definition, $V_{H+1}^*(x) = 0$ for all $x \in \statespace$.

    \paragraph{Learning problem}
    A reinforcerment learning agent interacts with $\mdp$ in a sequence of episodes $k \in [K]$ of fixed length $H$ by playing a policy $\pi_k$ in each episode, where
    the initial state $x_{1}^{k}$ is chosen arbitrarily and revealed to the agent.
    The learning agent does not know $P$ and $r$ and it selects the policy $\pi_k$ based on the samples observed over previous episodes.
    Its performance is measured by the regret
	$\regret(K) \eqdef \sum_{k=1}^K \pa{V_1^*(x_1^k) - V_1^{\policy_k}(x_1^k)}$.

	We make the following assumptions:

	\begin{assumption}
		\label{assumption:metric-state-space}
		The metric $\distfunc$ is given to the learner. Also, there exists a metric $\Sdistfunc$ on $\statespace$ and a metric $\Adistfunc$ on $\actionspace$ such that, for all $(x, x',a, a')$,
		$
		 \dist{(x, a), (x', a')} = \Sdist{x, x'} + \Adist{a, a'}.
		$
	\end{assumption}

	\begin{assumption}
		\label{assumption:lipschitz-rewards-and-transitions}
		The reward functions are $\lambda_r$-Lipschitz and the transition kernels are $\lambda_p$-Lipschitz with respect to the 1-Wasserstein distance: $\forall (x, a, x', a')$ and $\forall h \in [H]$,
		$$
		\abs{r_h(x,a) - r_h(x',a')} \leq \lambda_r \dist{(x,a), (x',a')}, 
		\quad \text{and}
		$$
		$$
		\Wassdist{P_h(\cdot|x, a), P_h(\cdot|x', a')} \leq \lambda_p \dist{(x,a), (x',a')}
		$$ where, for two measures $\mu$ and $\nu$, we have
		$
		\Wassdist{\mu, \nu} \eqdef \sup_{f: \mathrm{Lip}(f) \leq 1}  \int_{\statespace} f(y)(\mathrm{d}\mu(y)-\mathrm{d}\nu(y))
		$
		and where, for a function $f:\statespace\to\RR$, $\mathrm{Lip}(f)$ denotes its Lipschitz constant \wrt $\Sdistfunc$.
	\end{assumption}

To assess the relevance of these assumptions, we show below that they apply to deterministic MDPs with Lipschitz reward and transition functions (whose transition kernels are \emph{not} Lipschitz \wrt the total variation distance).   

\begin{example}[Deterministic MDP in $\RR^d$]
	Consider an MDP $\mdp$ with a finite action set, with a compact state space $\statespace \subset \RR^d$, and deterministic transitions $y = f(x, a)$, \ie $P_h(y|x, a) = \delta_{f(x,a)}(y)$. Let $\Sdistfunc$ be the Euclidean distance on $\RR^d$ and $\Adist{a, a'} = 0$ if $a = a'$ and $\infty$ otherwise. Then, if for all $a\in\actionspace$, $x\mapsto r_h(x, a)$ and $x\mapsto f(x, a)$ are Lipschitz, $\mdp$ satisfies assumptions \ref{assumption:metric-state-space} and \ref{assumption:lipschitz-rewards-and-transitions}.
\end{example}

Under our assumptions, the optimal $Q$ functions are Lipschitz continuous:

	\begin{lemma}
		\label{lemma:q_function_is_lipschitz}
		Let $L_h \eqdef \sum_{h'= h}^H \lambda_r \lambda_p^{H-h'}$. Under Assumption \ref{assumption:lipschitz-rewards-and-transitions}, for all $(x, a, x', a')$ and for all $h \in [H]$, we have $\abs{Q_h^*(x, a) - Q_h^*(x', a')} \leq L_h \dist{(x,a), (x', a')}$, \ie the optimal $Q$-functions are Lipschitz continuous.
	\end{lemma}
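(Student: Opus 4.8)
The plan is to argue by backward induction on $h$, from $h=H+1$ down to $h=1$, proving the stronger joint statement that for every $h$ the function $Q_h^*$ is $L_h$-Lipschitz with respect to $\distfunc$ \emph{and} $V_h^*$ is $L_h$-Lipschitz with respect to $\Sdistfunc$. Keeping track of the Lipschitz constant of $V_{h+1}^*$ along the way (and not only that of $Q$) is the crux: the transition term in the Bellman equation can only be controlled through the Wasserstein assumption, whose dual formulation requires a Lipschitz test function.

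For the base case I set $L_{H+1} \eqdef 0$ and use that $V_{H+1}^* \equiv 0$ is trivially $0$-Lipschitz, so the claim holds at step $H+1$. For the inductive step, assume $V_{h+1}^*$ is $L_{h+1}$-Lipschitz with respect to $\Sdistfunc$. Writing the Bellman equation as $Q_h^*(x,a) = \reward_h(x,a) + P_h V_{h+1}^*(x,a)$ and applying the triangle inequality, I would split $\abs{Q_h^*(x,a) - Q_h^*(x',a')}$ into a reward term and a transition term. The reward term is at most $\lambda_r \dist{(x,a),(x',a')}$ by Assumption~\ref{assumption:lipschitz-rewards-and-transitions}. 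For the transition term, when $L_{h+1} > 0$ the function $V_{h+1}^*/L_{h+1}$ is $1$-Lipschitz, so the Kantorovich--Rubinstein definition of $\Wassdist{\cdot,\cdot}$ yields
\[
\abs{P_h V_{h+1}^*(x,a) - P_h V_{h+1}^*(x',a')} \leq L_{h+1}\Wassdist{P_h(\cdot|x,a), P_h(\cdot|x',a')} \leq \lambda_p L_{h+1}\dist{(x,a),(x',a')},
\]
the last step again using Assumption~\ref{assumption:lipschitz-rewards-and-transitions} (and the bound is trivial when $L_{h+1}=0$, since then $V_{h+1}^*$ is constant). Summing the two terms shows $Q_h^*$ is Lipschitz with constant $\lambda_r + \lambda_p L_{h+1}$.

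A short telescoping check confirms $\lambda_r + \lambda_p L_{h+1} = L_h$: the sequence $L_h = \sum_{h'=h}^H \lambda_r \lambda_p^{H-h'}$ satisfies the recursion $L_h = \lambda_r + \lambda_p L_{h+1}$ with $L_{H+1}=0$, so no genuine computation is needed. It then remains to transfer Lipschitzness from $Q_h^*$ to $V_h^*$ and close the induction. Since $V_h^*(x) = \max_{a} Q_h^*(x,a)$ is a maximum over a common action set, $\abs{V_h^*(x) - V_h^*(x')} \leq \max_a \abs{Q_h^*(x,a) - Q_h^*(x',a)} \leq L_h \max_a \dist{(x,a),(x',a)} = L_h \Sdist{x,x'}$, where the final equality uses Assumption~\ref{assumption:metric-state-space} together with $\Adist{a,a} = 0$. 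I expect the only real obstacle to be conceptual rather than computational: one must recognize that the Lipschitz continuity of the value function is precisely what makes the Wasserstein assumption exploitable, which forces the induction to be phrased on $Q$ and $V$ jointly; everything else (the telescoping identity and the stability of $\max$ under a common action set) is routine.
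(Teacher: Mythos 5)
Your proof is correct and follows essentially the same route as the paper's own argument (Lemma \ref{lemma:value-funcs-are-liptschitz} in Appendix \ref{proof:q_function_is_lipschitz}): backward induction, the Kantorovich--Rubinstein duality applied to the $1$-Lipschitz function $V_{h+1}^*/L_{h+1}$, the recursion $L_h = \lambda_r + \lambda_p L_{h+1}$, and Assumption \ref{assumption:metric-state-space} to transfer Lipschitzness from $Q_h^*$ to $V_h^*$. The only differences are cosmetic: you start the induction at $h = H+1$ with $L_{H+1} = 0$ rather than at $h = H$, and you explicitly handle the degenerate case $L_{h+1} = 0$ (e.g., $\lambda_r = 0$), which the paper's division by $L_{h+1}$ silently assumes away.
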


	\section{Algorithm}
    \label{sec:algorithm}
    
	  In this section, we present \lipucbvi, a model-based algorithm for exploration in MDPs in metric spaces that employs \emph{kernel smoothing} to estimate the rewards and transitions, for which we derive confidence intervals. \lipucbvi uses exploration bonuses based on these confidence intervals to efficiently balance exploration and exploitation. Our algorithm requires the knowledge of the metric $\distfunc$ on $\stateactionspace$ and of the Lipschitz constants of the rewards and transitions.\footnote{This assumption is standard in previous works in RL~\citep[\eg][]{ortner2012online,Sinclair2019}. Theoretically, we could replace the Lipschitz constant $L_1$ by $\log k$, in each episode $k$, and our regret bound would have an additive term of order $H e^{L_1}$, since $Q_h^k$ would be optimistic for $\log k \geq L_1$~\citep[see \eg][]{ReeveM018}. However, this would degrade the performance of the algorithm in practice.}

	\subsection{Kernel Function}

	 We leverage the knowledge of the state-action space metric to define the kernel function.
	Let $u, v \in \stateactionspace$. For some function $\kernelfunc: \RR_{\geq 0} \to [0, 1]$, we define the kernel function as
	$$
	\kernel_{\sigma}(u, v) \eqdef \kernelfunc\pa{\dist{u, v}/\sigma}
	$$
	where $\sigma$ is the bandwidth parameter that controls the degree of ``smoothing'' of the kernel.
	In order to be able to construct valid confidence intervals, we require certain structural properties for $\kernelfunc$.

	\begin{assumption}
		\label{assumption:kernel-behaves-as-gaussian}
		The function $\kernelfunc: \RR_{\geq 0} \to [0, 1]$ is differentiable, non-increasing, $g(4) > 0$, and there exists two constants $C_1^g, C_2^g > 0$ that depend only on $\kernelfunc$ such that 
		\begin{align*}
			\kernelfunc(z) \leq C_1^g\exp(-z^2/2) \mbox{ and } \sup_{z}\abs{\kernelfunc'(z)} \leq  C_2^g.
		\end{align*}
	\end{assumption}
	
	This assumption is trivially verified by the Gaussian kernel $\kernelfunc(z) = \exp(-z^2/2)$. Other examples include the kernels $\kernelfunc(z) = \exp(-|z|^p/2)$ for $p > 2$.

\begin{algorithm}[t]
	\caption{\lipucbvi}
	\label{alg:lipucbvi}
	\begin{small}
		\begin{algorithmic}
			\STATE {\bfseries Input:} global parameters $K, H, \delta, \lambda_r, \lambda_p, \sigma, \beta$
			\STATE initialize data lists $\data_h = \emptyset$ for all $h \in [H]$
			\FOR{episode $k = 1, \ldots, K$}
			\STATE get initial state $x_1^k$
			\STATE $Q_h^k = \optimisticQ(k, \braces{\data_h}_{h\in[H]})$
			\FOR{step $h=1, \ldots, H$}
			\STATE execute $a_h^k = \argmax_a Q_h^k(x_h^k, a)$
			\STATE observe reward $r_h^k$ and next state $x_{h+1}^k$
			\STATE add sample $(x_h^k, a_h^k, x_{h+1}^k, r_h^k)$ to $\data_h$
			\ENDFOR
			\ENDFOR
		\end{algorithmic}
	\end{small}
\end{algorithm}

 	\begin{algorithm}[t]
 	\caption{\optimisticQ}
 	\label{alg:optimisticQ}
 	\begin{small}
 		\begin{algorithmic}
 			\STATE {\bfseries Input:} episode $k$, data $\braces{\data_h}_{h\in[H]}$
 			\STATE Initialize $V_{H+1}^k(x) = 0$ for all $x$
 			\FOR{step $h=H, \ldots, 1$}
 			\STATE \textcolor{darkgreen}{// Compute optimistic targets}
 			\FOR{$m = 1, \ldots, k-1$}
 			\STATE $\widetilde{Q}_h^k(x_h^m, a_h^m) = \sum_{s=1}^{k-1} \widetilde{w}_h^s(x_h^m, a_h^m) \pa{r_h^s + V_{h+1}^k(x_{h+1}^s)}$
 			\STATE $\widetilde{Q}_h^k(x_h^m, a_h^m) =\widetilde{Q}_h^k(x_h^m, a_h^m) + \bonus_h^k(x_h^m, a_h^m)$
 			\ENDFOR
 			\STATE \textcolor{darkgreen}{// Interpolate the Q function}
 			\STATE $Q_h^k(x,a) = \min\limits_{s \in [k-1]} \left (
 			\widetilde{Q}_h^k(x_h^{s},a_h^{s}) + L_h\dist{(x,a), (x_h^{s},a_h^{s})}
 			\right)$
 			\FOR{$m = 1, \ldots, k-1$}
 			\STATE$V_h^k(x_h^{m}) = \min\pa{H-h+1, \max_{a\in\actionspace}Q_h^k(x_h^{m},a)}$
 			\ENDFOR
 			\ENDFOR
 			\STATE {\bfseries return} $Q_h^k$
 		\end{algorithmic}
 	\end{small}
 \end{algorithm}

	\subsection{Kernel Estimators and Optimism}
    In each episode $k$, \lipucbvi computes an optimistic estimate $Q_h^k$ for all $h$, which is an upper confidence bound on the optimal $Q$ function $Q^*_h$, and plays the associated greedy policy.
    Let $(x_h^s, a_h^s, x_{h+1}^s, r_h^s)$ be the random variables representing the state, the action, the next state and the reward at step $h$ of episode $s$, respectively.
    We denote by $\data_h = \braces{(x_h^s, a_h^s, x_{h+1}^s, r_h^s)}_{s \in [k-1]}$ for $h \in [H]$ the samples collected at step $h$ before episode $k$. 
    
    For any $(x, a)$ and $(s, h) \in [K]\times[H]$, we define the \emph{weights} and the \emph{normalized weights} as
    \begin{align*}
    & w_h^s(x,a) \eqdef  \kernel_{\sigma}((x,a),(x_h^s, a_h^s)) 
    \quad\text{and }\quad \\
    & \widetilde{w}_h^s(x,a) \eqdef \frac{ w_h^s(x,a) }{ \beta +  \sum_{l=1}^{k-1}w_h^l(x,a) } ,
    \end{align*}
    where $\beta >0$ is a regularization term.
    These weights are used to compute an estimate of the rewards and transitions for each state-action pair\footnote{Here, $\dirac{x}$ denotes the Dirac measure with mass at $x$.}:
	\begin{align*}
            & \widehat{\reward}_h^k(x, a)   \eqdef  \sum_{s=1}^{k-1}\widetilde{w}_h^s(x,a) \reward_h^s,
            \quad \\ 
            & \widehat{\transition}_h^k(y|x, a)\eqdef  \sum_{s=1}^{k-1} \widetilde{w}_h^s(x,a) \dirac{x_{h+1}^s}(y).
	\end{align*}
    As other algorithms using OFU, \lipucbvi computes an optimistic Q-function $\widetilde{Q}_h^k$ through value iteration, a.k.a.\ backward induction:
    \begin{align}
        \label{eq:optimistic.kernelbellman}
        \widetilde{Q}_h^k(x, a) = \widehat{\reward}_h^k(x, a) + \widehat{P}_h^k V_{h+1}^k(x, a) + \bonus_h^k(x, a)
    \end{align}
    where $V_{H+1}^k(x) =0$ for all $x \in \statespace$ and $\bonus_h^k(x, a)$ is an exploration bonus described later.      From Lemma~\ref{lemma:q_function_is_lipschitz}, the true $Q$ function $Q_h^*$ is $L_h$-Lipschitz. Computing $\widetilde{Q}_h^k$ for all previously visited state action pairs $(x_h^s, a_h^s)$ for $s \in [k-1]$  permits to define a \emph{$L_h$-Lipschitz upper confidence bound} and the associated value function:
    \begin{align*}
    	 Q_h^k(x,a) \eqdef \min_{s \in [k-1]} \pa{
    	\widetilde{Q}_h^k(x_h^{s},a_h^{s}) + L_h\dist{(x,a), (x_h^{s},a_h^{s})}}
    \end{align*}
    and $ V_h^k(x) \eqdef \min\pa{H-h+1, \max_{a'} Q_h^k(x, a')}$. The policy $\pi_k$ executed by \lipucbvi is the greedy policy with respect to $Q_h^k$ (see Alg.~\ref{alg:lipucbvi}).
     
    Let $\gencount_h^k(x, a) \eqdef \beta + \sum_{s=1}^{k-1} w_h^s(x, a)$ be the \emph{generalized counts}, which are a proxy for the number of visits to $(x, a)$. The exploration bonus is defined based on the uncertainties on the transition and reward estimates and takes the form
	\begin{align*}
	\bonus_h^k(x, a) \approx
	\frac{H}{\sqrt{\gencount_h^k(x, a)}} + \frac{\beta H}{\gencount_h^k(x, a)} + L_1\sigma 
	\end{align*} 
	where we omit constants and logarithmic terms. Refer to Eq.~\ref{eq:bonus_full} in App.~\ref{app:algorithm} for the exact definition.


	\section{Theoretical Guarantees \& Discussion}

	The theorem below gives a high probability regret bound for $\lipucbvi$. It features the  $\sigma$-covering number of the state-action space. The $\sigma$-covering number of a metric space, formally defined in Def.\,\ref{def:covering} (App.~\ref{app:preliminaries}), is roughly the number of $\sigma$-radius balls required to cover the entire space. The covering dimension of a space is the smallest number $d$ such that its $\sigma$-covering number is $\BigO{\sigma^{-d}}$. For instance, the covering number of a ball in $\RR^d$ with the Euclidean distance is $\BigO{\sigma^{-d}}$ and its covering dimension is $d$.
		
	\begin{theorem}
		\label{theorem:regret-bound-restated}
		With probability at least $1-\delta$, the regret of $\lipucbvi$ for a bandwidth $\sigma$ satisfies
		\begin{align*}
		\regret(K) \leq \BigOtilde{H^2 \sqrt{\abs{\cC_\sigma}K} +  L_1 K H \sigma + H^3 \sigmacov\Xsigmacov  },  
		\end{align*}
		where $\sigmacov$ and $\Xsigmacov$ are the $\sigma$-covering numbers of $(\stateactionspace,\distfunc)$ and $(\statespace, \Sdistfunc)$, respectively, and $L_1$ is the Lipschitz constant of the optimal $Q$-functions. 
	\end{theorem}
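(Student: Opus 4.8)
The plan is to follow the standard optimism-based template for model-based regret analysis, adapted to the kernel estimators and to the Wasserstein--Lipschitz structure. First I would define a favorable event $\favevent$ on which the kernel estimates of rewards and transitions concentrate around their targets \emph{simultaneously} for every episode $k$, step $h$, and state-action pair, and show $\prob{\favevent} \geq 1 - \delta$. The concentration of the weighted estimators $\estR_h^k$ and $\estP_h^k V$ is exactly what Propositions~\ref{prop:concentration-of-rewards-continuous} and~\ref{prop:concentration-of-transitions-continuous} provide, themselves built on the self-normalized and Bernstein--Freedman inequalities for weighted sums (Lemmas~\ref{lemma:self-normalized-weighted-sum} and~\ref{lemma:bernstein-freedman-weighted-sum}). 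Since these estimators must be controlled uniformly over a continuous space, I would pass to finite $\sigma$-nets: a net of $\stateactionspace$ (of size $\sigmacov$) to localize the evaluation point, and a net of $\statespace$ (of size $\Xsigmacov$) to discretize the integrand $V_{h+1}^k$ appearing in $\estP_h^k V_{h+1}^k$, extending off the net by Lipschitzness. This is the origin of both covering numbers and of the $\sigma$-scale bias terms.

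On $\favevent$ I would prove optimism, namely $Q_h^k(x,a) \geq Q_h^*(x,a)$ for all $x,a,k$, by backward induction on $h$. At the sample points the bonus $\bonus_h^k$ is designed to dominate the sum of the reward statistical error, the transition statistical error, and the smoothing bias; the transition bias is where the construction pays off. Writing $\estP_h^k V_{h+1}^k - \trueP_h V_{h+1}^* = \estP_h^k(V_{h+1}^k - V_{h+1}^*) + (\estP_h^k - \trueP_h)V_{h+1}^*$, the first term is nonnegative by the induction hypothesis, and the second is controlled through $\Wassdist{\estP_h^k(\cdot|x,a), \trueP_h(\cdot|x,a)}$ precisely because $V_{h+1}^*$ is Lipschitz (Lemma~\ref{lemma:q_function_is_lipschitz}). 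The $L_h$-Lipschitz envelope defining $Q_h^k$ then lifts the bound from the visited points to the whole space, again using that $Q_h^*$ is $L_h$-Lipschitz.

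With optimism in hand, $\regret(K) \leq \sum_k (V_1^k(x_1^k) - V_1^{\pi_k}(x_1^k))$, and I would set $\delta_h^k \eqdef V_h^k(x_h^k) - V_h^{\pi_k}(x_h^k)$ and unroll it across $h$. Bounding $V_h^k(x_h^k)$ by the nearest-sample envelope term, substituting the Bellman definition of $\widetilde Q_h^k$, and comparing to $Q_h^{\pi_k}$ yields, per step, an exploration bonus at $(x_h^k,a_h^k)$, an $O(L_1\sigma)$ Lipschitz/smoothing correction, a transition-estimation error already absorbed by the bonus, a martingale increment from replacing $\trueP_h V_{h+1}^k$ by $V_{h+1}^k(x_{h+1}^k)$, and the recursive term $\delta_{h+1}^k$. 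Iterating over $h$ and summing over $k$ reduces the regret to: (i) $\sum_{k,h}\bonus_h^k(x_h^k,a_h^k)$; (ii) the $\sum_{k,h} L_1\sigma = L_1 K H \sigma$ smoothing term; and (iii) martingale sums handled by Azuma/Freedman, giving lower-order contributions. For the dominant bonus sum I would group the $KH$ visited pairs by a $\sigma$-net of $\stateactionspace$; within each of the $\sigmacov$ cells the generalized counts $\gencount_h^k$ grow, so a pigeonhole/integral argument bounds $\sum_k 1/\sqrt{\gencount_h^k(x_h^k,a_h^k)}$ by $\BigOtilde{\sqrt{\sigmacov K}}$ and $\sum_k 1/\gencount_h^k$ by $\BigOtilde{\sigmacov}$; multiplying by the $O(H)$ bonus magnitude and summing over $H$ steps produces the leading $H^2\sqrt{\sigmacov K}$ term together with the lower-order $H^2\sigmacov$ term. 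The additive $H^3\sigmacov\Xsigmacov$ term I would trace to the uniform transition concentration of $\estP_h^k V_{h+1}^k$, whose control costs a union bound over the product net (of size $\sigmacov\Xsigmacov$) and contributes a burn-in cost of this order once the $O(H)$ value range and horizon factors are accounted for.

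The main obstacle is the interplay between kernel smoothing and the counting argument. Unlike the tabular case, the generalized counts $\gencount_h^k$ are smeared across neighbouring pairs by the kernel, so the pigeonhole bound on $\sum_k 1/\sqrt{\gencount_h^k(x_h^k,a_h^k)}$ must be carried out at the scale $\sigma$, relating $\gencount_h^k$ to genuine visit counts inside covering balls while simultaneously paying the $O(\sigma)$ bias; this is what forces the bias--variance tradeoff governed by the bandwidth and what ultimately couples the two covering numbers. Establishing the weighted-sum concentration uniformly over the continuous space (the content of Lemmas~\ref{lemma:self-normalized-weighted-sum} and~\ref{lemma:bernstein-freedman-weighted-sum}) with the correct dependence on $\sigmacov$ and $\Xsigmacov$ is the other delicate point and the technical heart of the argument.
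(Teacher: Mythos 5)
Your skeleton (good event from Propositions~\ref{prop:concentration-of-rewards-continuous} and~\ref{prop:concentration-of-transitions-continuous}, optimism by backward induction lifted by the $L_h$-Lipschitz envelope, the $\delta_h^k$ recursion via the nearest past sample, and pigeonhole bounds $\sum_{k,h} 1/\sqrt{\gencount_h^k} \lesssim H\sqrt{\sigmacov K}$ and $\sum_{k,h} 1/\gencount_h^k \lesssim H\sigmacov$ over covering cells) is indeed the paper's route. The genuine gap is in your accounting for the $H^3\sigmacov\Xsigmacov$ term, which is the distinctive part of this theorem. In your per-step decomposition you claim the ``transition-estimation error [is] already absorbed by the bonus'', but the bonus only dominates $(\estP_h^k - \trueP_h)V_{h+1}^*$ for the \emph{fixed} function $V_{h+1}^*$, while the optimistic recursion involves $\estP_h^k V_{h+1}^k$; the data-dependent cross term $(\estP_h^k - \trueP_h)(V_{h+1}^k - V_{h+1}^*)$ is precisely what the bonus does \emph{not} cover (term $\termC$ in Proposition~\ref{prop:regret-aux-result}). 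Your proposed mechanism for it --- a ``union bound over the product net (of size $\sigmacov\Xsigmacov$)'' contributing ``a burn-in cost of this order'' --- fails on both counts. First, uniformity over the random function requires covering the Lipschitz class $\cF_{2L_1}$, whose $8L_1\sigma$-covering number is $(2H/(L_1\sigma))^{\Xsigmacov}$ (Lemma~\ref{lemma:lipschitz-covering}), exponential in $\Xsigmacov$, and a union bound injects only the \emph{logarithm} of the cover size into the deviation width --- it never produces an additive burn-in term. Second, if you pair that union bound with a Hoeffding-type width, the entropy lands under the square root, $\abs{(\estP_h^k-\trueP_h)f} \lesssim H\sqrt{\Xsigmacov/\gencount_h^k}$, and summing with your own pigeonhole gives a \emph{leading-order} contribution $H^2\sqrt{\sigmacov\Xsigmacov K}$; with $\sigma=(1/K)^{1/(2d+1)}$ this is $K^{(4d+1)/(4d+2)}$, strictly worse than the claimed $K^{2d/(2d+1)}=K^{4d/(4d+2)}$.

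The paper's mechanism is different and essential: Proposition~\ref{prop:uniform-bernstein-transitions}, built on the Bernstein-type Lemma~\ref{lemma:bernstein-freedman-weighted-sum}, trades the variance contribution for the first-order term $\frac{1}{H}\trueP_h\abs{f}(x,a)$; applied to $f = V_{h+1}^k - V_{h+1}^*$, this piece is folded back into the recursion as $(1+1/H)\delta_{h+1}^k$ plus a martingale increment, at the benign multiplicative cost $(1+1/H)^H \leq e$. The class entropy $\BigOtilde{\Xsigmacov}$ then multiplies only the $1/\gencount_h^k$ correction, giving $H^2\Xsigmacov/\gencount_h^k$ per step, and $\sum_{k,h} 1/\gencount_h^k \lesssim H\sigmacov$ yields $H^3\sigmacov\Xsigmacov$: the $\sigmacov$ factor comes from the pigeonhole sum, not from any union bound. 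Two smaller inaccuracies: your optimism paragraph proposes controlling $(\estP_h^k-\trueP_h)V_{h+1}^*$ through $\Wassdist{\estP_h^k(\cdot|x,a), \trueP_h(\cdot|x,a)}$, but $\estP_h^k$ is a weighted sum of Diracs and empirical Wasserstein convergence is too slow (rate $\gencount^{-1/\Xcovdim}$); the paper bounds the scalar functional deviation directly by martingale concentration and uses $W_1$-Lipschitzness only of the \emph{true} kernel, for the bias term. And your recursion silently assumes the nearest sample lies within $O(\sigma)$: the steps with $\dist{(\tx_h^k,\ta_h^k),(x_h^k,a_h^k)} > 2\sigma$ must be handled separately by bounding $\delta_h^k \leq H$ and pigeonholing, which is the actual source of the $H^2\sigmacov$ term in the statement.
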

	\begin{proof}
		Restatement of Theorem \ref{theorem:regret-bound} in App.\,\ref{app:optimism-regret}. A proof sketch is given in Section \ref{app:proof_sketch}.
	\end{proof}
	\begin{corollary}
		\label{corollary:main-text-corollary}
		By taking $\sigma = (1/K)^{1/(2d+1)}$, $\regret(K) = \BigOtilde{ H^3 K^{\max\pa{\frac{1}{2}, \frac{2d}{2d+1}}}}$, where $d$ is the covering dimension of the state-action space.
	\end{corollary}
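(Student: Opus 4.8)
The plan is to prove the corollary by substituting the bandwidth $\sigma = \pa{1/K}^{1/(2\totalcovdim+1)}$ into the bound of Theorem~\ref{theorem:regret-bound-restated} and tracking, term by term, the resulting dependence on $K$. First I would record the covering estimates: since $\totalcovdim$ is the covering dimension of $\stateactionspace$ we have $\sigmacov = \BigO{\sigma^{-\totalcovdim}}$, and since a cover of the joint space induces a cover of its state marginal, $\Xsigmacov \leq \sigmacov$. With the prescribed $\sigma$ this yields $\Xsigmacov \leq \sigmacov = \BigO{K^{\totalcovdim/(2\totalcovdim+1)}}$.

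Next I would evaluate each of the four terms. The second-order term becomes $\horizon^3\sigmacov\Xsigmacov = \BigO{\horizon^3 K^{2\totalcovdim/(2\totalcovdim+1)}}$, and the bias term becomes $L_1 K\horizon\sigma = L_1\horizon\, K^{2\totalcovdim/(2\totalcovdim+1)}$, since $1 - 1/(2\totalcovdim+1) = 2\totalcovdim/(2\totalcovdim+1)$. The decisive point is that the chosen $\sigma$ is exactly the one balancing these two contributions: solving $K\sigma \asymp \sigma^{-2\totalcovdim}$ gives $\sigma^{2\totalcovdim+1}\asymp 1/K$. Using $L_1 = \BigO{\horizon}$, which follows from $L_1 = \sum_{h'=1}^{\horizon}\lambda_r\lambda_p^{\horizon-h'} = \BigO{\horizon}$ for bounded non-expansive Lipschitz constants, the bias term is absorbed into the second-order term, and both scale as $\BigOtilde{\horizon^3 K^{2\totalcovdim/(2\totalcovdim+1)}}$. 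The remaining low-order term $\horizon^2\sigmacov = \BigO{\horizon^2 K^{\totalcovdim/(2\totalcovdim+1)}}$ has a strictly smaller exponent and is dominated.

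It then remains to control the leading statistical term $\horizon^2\sqrt{\sigmacov K}$, which evaluates to $\horizon^2 K^{(3\totalcovdim+1)/(2(2\totalcovdim+1))}$, and I expect this comparison to be the only real obstacle. A short exponent check shows $(3\totalcovdim+1)/(2(2\totalcovdim+1)) \leq 2\totalcovdim/(2\totalcovdim+1)$ precisely when $\totalcovdim\geq 1$, so for the continuous state-action spaces of interest (where $\statespace\subseteq\RR^{\totalcovdim}$ forces $\totalcovdim\geq 1$) this term too is dominated by the second-order term; moreover it never decays faster than $\sqrt{K}$, which supplies the floor. Collecting the surviving contributions and using $2\totalcovdim/(2\totalcovdim+1)\geq 1/2 \iff \totalcovdim\geq 1/2$ gives the stated exponent $\max\pa{1/2, 2\totalcovdim/(2\totalcovdim+1)}$, completing the argument. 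I would close by noting that it is precisely the second-order term $\sim\sigma^{-2\totalcovdim}$ — balanced against the bias rather than against the variance — that yields the exponent $2\totalcovdim/(2\totalcovdim+1)$ and makes the model-based rate heavier in high dimension than the model-free rate.
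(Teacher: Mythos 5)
Your proposal is correct and takes essentially the same route as the paper's own proof: both substitute $\sigmacov = \BigO{\sigma^{-\totalcovdim}}$ and $\Xsigmacov \leq \sigmacov$ into Theorem~\ref{theorem:regret-bound-restated}, plug in $\sigma = (1/K)^{1/(2\totalcovdim+1)}$, and conclude via the identical exponent comparison $(3\totalcovdim+1)/(4\totalcovdim+2) \leq 2\totalcovdim/(2\totalcovdim+1)$ for $\totalcovdim \geq 1$. Your extra remarks — that this $\sigma$ exactly balances the bias $L_1 KH\sigma$ against the second-order term $H^3\sigma^{-2\totalcovdim}$, and that $L_1 = \BigO{H}$ when $\lambda_p \leq 1$ — are consistent with how the paper silently absorbs $L_1$ into the $\BigOtilde{\cdot}$ notation (a point it only acknowledges in its appendix on the Lipschitz constant).
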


    \begin{remark}
    	\label{remark:stationary}
     As for other model-based algorithms, the dependence on $H$ can be improved if the transitions are stationary, \ie do not depend on $h$. In this case, the regret of  
	\lipucbvi becomes $\BigOtilde{H^2K^{\frac{2d}{2d+1}}}$ due to a gain a factor of $H$ in the second order term (see App.~\ref{app:variants}). 
	\end{remark}

	To the best of our knowledge, this is the first regret bound for kernel-based RL using smoothing kernels, and we present below further discussions on this result.

	\paragraph{Comparison to lower bound for Lipschitz MDPs} In terms of the number of episodes $K$ and the dimension $d$, the lower bound for Lipschitz MDPs is $\Omega(K^{(d+1)/(d+2)})$, which is a consequence of the result for contextual Lipschitz bandits~\citep{Slivkins2014contextual}. In terms of $H$, the optimal dependence can be conjectured to be $H^{3/2}$, which is the case for tabular MDPs \cite{Jin2018}.\footnote{See also \citep[][Sec. 4.4]{Sinclair2019}.} For $d=1$, our bound has an optimal dependence on $K$, leading to a regret of order $\BigOtilde{H^3K^{2/3}}$, or $\BigOtilde{H^2 K^{2/3}}$ when the transitions are stationary (see Remark~\ref{remark:stationary}). 
	

	\paragraph{Comparison to other upper bounds for Lipschitz MDPs} The best available upper bound in this setting, in terms of $K$ and $d$, is $\cO\pa{H^{5/2} K^{\frac{d+1}{d+2}}}$, which is achieved by model-free algorithms performing either uniform or adaptive discretization of the state-action space \cite{Song2019, Sinclair2019, Touati2020zoomRL}.

	\paragraph{Relevance of a kernel-based algorithm} Although our upper bound does not match the lower bound for Lipschitz MDPs, kernel-based RL can be a very useful tool in practice to handle the bias-variance trade-off in RL. It allows us to easily provide expert knowledge to the algorithm through kernel design, which can be seen as introducing more bias to reduce the variance of the algorithm and, consequently, improve the learning speed. As shown by \citet{kveton2012kernel} and \citet{barreto2016practical},  KBRL are empirically successful in medium-scale tasks ($d\approx 10$), such as control problems, HIV drug scheduling and an epilepsy suppression task. In such problems, \lipucbvi can be used to enhance exploration, and the confidence intervals we derive here may also be useful in settings such as robust planning \citep{lim19robust}. Interestingly, \citet{badia2020never} have shown that kernel-based exploration bonuses similar to the ones derived in this paper can improve exploration in Atari games.

	\paragraph{Regularity assumptions} The regret bound we provide only requires only weak assumptions on the MDP: we assume that both the transitions and rewards are Lipschitz continuous, but we have no constraints on the behavior of the Bellman operator. As a consequence, the regret bounds suffer from the curse of dimensionality: as $d$ goes to infinity, both the lower and upper bounds become linear in the number of episodes $K$. 
	Other settings, such as low-rank MDPs \citep{Jin2019} and RKHS approximations \citep{yang2020provably, chowdhury2020no} achieve regret bounds scaling with $\sqrt{K}$, but they require much stronger assumptions on the MDP, such as the closedness of the Bellman operator in the function class used to represent $Q$ functions, which is a condition that is much harder to verify. \citet{barreto2016practical} show that KBRL (with smoothing kernels) can be related to low-rank MDPs, and we believe that our analysis brings new elements to study this trade-off that exists between regularity assumptions and regret bounds.

	\paragraph{Model-free vs. Model-based}
	An interesting remark comes from the comparison between our algorithm and recent model-free approaches in continuous MDPs~\citep{Song2019,Sinclair2019,Touati2020zoomRL}.
	These algorithms are based on optimistic Q-learning~\citep{Jin2018}, to which we refer as OptQL, and achieve a regret of order $\BigOtilde{H^{\frac{5}{2}}K^{\frac{d+1}{d+2}}}$, which has an optimal dependence on $K$ and $d$. While we achieve the same $\BigOtilde{K^{2/3}}$ regret when $d=1$, our bound is slightly worse for $d>1$. To understand this gap, it is enlightening to look at the regret bound for tabular MDPs.

	Since our algorithm is inspired by UCBVI \citep{Azar2017} with Chernoff-Hoeffding bonus, we compare it to OptQL, which is used by \citep{Song2019,Sinclair2019,Touati2020zoomRL}, with the same kind of exploration bonus. Consider an MDP with $X$ states and $A$ actions and non-stationary transitions. 
	UCBVI has a regret bound of $\BigOtilde{H^2\sqrt{XAK} + H^3X^2A}$ while OptQL has $\BigOtilde{H^{5/2}\sqrt{XAK} + H^2XA}$.
    As we can see, OptQL is a $\sqrt{H}$-factor worse than UCBVI when comparing the first-order term, but it is $HX$ times better in the second-order term. For large values of $K$, second-order terms can be neglected in the comparison of the algorithms in tabular MDPs, since they do not depend on $K$. However, they play an important role in continuous MDPs, where $X$ and $A$ are replaced by the $\sigma$-covering number of the state-action space, which is roughly $1/\sigma^d$.
     In tabular MDPs, the second-order term is constant (\ie does not depend on $K$).
    On the other hand, in continuous MDPs, the algorithms define the granularity of the representation of the state-action space based on the number of episodes, connecting the number of states $X$ with $K$. For example, in~\citep{Song2019} the $\epsilon$-net used by the algorithm is tuned such that $\epsilon = (HK)^{-1/(d+2)}$ (see also~\citealt{ortner2012online,Lakshmanan2015,QianFPL19}). Similarly, in our algorithm we have that $\sigma = K^{-1/(2d+1)}$.
    For this reason, the second-order term in UCBVI becomes the dominant term in our analysis, leading to a worse dependence on $d$ compared to model-free algorithms, as highlighted in the proof sketch (Sec.~\ref{app:proof_sketch}).
    For similar reasons, \lipucbvi has an additional $\sqrt{H}$ factor compared to model-free algorithms based on \citep{Jin2018}.
    This shows that the direction of achieving first-order optimal terms at the expense of higher second-order terms may not be justified outside the tabular case. Whether this is a flaw in the algorithm design or in the analysis is left as an open question. However, as observed in Section~\ref{sec:experiments}, model-based algorithms seem to enjoy a better empirical performance.

    \section{Improving the Computational Complexity}

    \lipucbvi is a non-parametric model-based algorithm and, consequently, it inherits the weaknesses of these approaches.
     In order to be data adaptive, it needs to store all the samples $(x_h^k, a_h^k, x_{h+1}^k, r_h^k)$ and their optimistic values $\widetilde{Q}_h^k$ and $V_h^k$ for $(k, h) \in [K]\times[H]$, leading to a total memory complexity of $\BigO{HK}$.
    Like standard model-based algorithms, it needs to perform planning at each episode which gives a total runtime of $\BigO{HAK^3}$\footnote{Since the runtime of an episode $k$ is $\BigO{HAk^2}$.}, where the factor $A$ takes into account the complexity of computing the maximum over actions.\footnote{While in theory the algorithm works with a compact action space, the main practical issue resides in the computation of the best action (\ie $a_h^k = \argmax_a Q_h^k(x_h^k, a)$). In this case, we could resort to black box optimization algorithms~\citep[\eg][Sec. 3.3]{Munos2014from}, which might require the discretization of the action space. This is however less critical than the discretization of the state-space, since the possible actions must always be known in advance, unlike the set of possible states.}
    \lipucbvi has similar time and space complexity of recent approaches for low-rank MDPs~\citep{Jin2019,zanette2019tslinear}.

    To alleviate the computational burden of \lipucbvi, we leverage Real-Time Dynamic Programming (RTDP), see \citep{barto1995learning}, to perform incremental planning.
    Similarly to OptQL, RTDP-like algorithms maintain an optimistic estimate of the optimal value function that is updated incrementally by interacting with the MDP. The main difference is that the update is done by using an estimate of the MDP (\ie model-based) rather than the observed transition sample.
    In episode $k$ and step $h$, our algorithm, named \lipucbvigreedy, computes an upper bound $\widetilde{Q}_h^k(x_h^k,a)$ for each action $a$ using the kernel estimate as in Eq.~\eqref{eq:optimistic.kernelbellman}.
    Then, it executes the greedy action $a_h^k = \argmax_{a \in \mathcal{A}} \widetilde{Q}_h^k(x_h^k,a)$.
    As a next step, it computes $\tV_h^k(x_h^k) = \widetilde{Q}_h^k(x_h^k,a_h^k)$ and refines the previous $L_h$-Lipschitz upper confidence bound on the value function
    \[V_h^{k+1}(x) =\min\pa{V_h^{k}(x), \tV_h^k(x_h^k) + L_h\Sdist{x,x_h^k}}.\]

    The complete description of \lipucbvigreedy is given in Alg.~\ref{alg:lipucbvigreedy.maintext} in App.~\ref{app:efficient_implementation}. The total runtime of this efficient version is $\BigO{HAK^2}$ with total memory complexity of $\BigO{HK}$.

    RTDP has been recently analyzed by~\cite{efroni2019tight} in tabular MDPs.
    Following their analysis, we prove the following theorem, which shows that \lipucbvigreedy achieves the same guarantees of \lipucbvi with a large improvement in computational complexity.

    \begin{theorem}
    	\label{theorem:regret-greedy-main-text}
    With probability at least $1-\delta$, the regret of $\lipucbvigreedy$ for a bandwidth $\sigma$ is of order
      $ \regret(K) =  \BigOtilde{\regret(K, \lipucbvi)+  H^2\Xsigmacov}$,
    where $\Xsigmacov$ is the $\sigma$-covering number of state space. This results in a regret of $\BigOtilde{H^3 K^{2d/(2d+1}}$ when $\sigma=(1/K)^{1/(2d+1)}$.
    \end{theorem}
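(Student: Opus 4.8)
The plan is to reduce the analysis of \lipucbvigreedy to that of \lipucbvi (Theorem~\ref{theorem:regret-bound-restated}) by isolating the extra error introduced by \emph{incremental} rather than full planning, following the RTDP analysis of~\cite{efroni2019tight}. First I would establish the two structural invariants that make the incremental value estimates usable: \emph{optimism}, $V_h^k(x) \geq V_h^*(x)$ for all $(x,h,k)$, and \emph{monotonicity}, $V_h^{k+1}(x) \leq V_h^k(x)$ for all $x$. Both follow by induction (backward in $h$, forward in $k$) from the update rule $V_h^{k+1}(x) = \min\!\pa{V_h^k(x),\, \tV_h^k(x_h^k) + L_h\Sdist{x,x_h^k}}$ together with the concentration events already used for \lipucbvi: on the good event $\tQ_h^k \geq Q_h^*$ whenever $V_{h+1}^k \geq V_{h+1}^*$, and the $L_h$-Lipschitz interpolation through Lemma~\ref{lemma:q_function_is_lipschitz} preserves this upper bound at every state. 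Optimism gives the usual starting point $\regret(K) \leq \sum_k \pa{V_1^k(x_1^k) - V_1^{\policy_k}(x_1^k)}$.

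Next I would unroll this gap along the executed trajectory. The only departure from \lipucbvi is that the stored value $V_h^k(x_h^k)$ need not equal the freshly computed target $\tV_h^k(x_h^k) = \tQ_h^k(x_h^k,a_h^k) = \estR_h^k + \estP_h^k V_{h+1}^k + \bonus_h^k$ evaluated at $(x_h^k,a_h^k)$, because $V_h^k$ was set in an earlier episode from a ``staler'' step-$(h+1)$ value. Writing $V_h^k(x_h^k) = \tV_h^k(x_h^k) + \pa{V_h^k(x_h^k) - \tV_h^k(x_h^k)}$ and expanding $\tV_h^k(x_h^k) - V_h^{\policy_k}(x_h^k)$ through the kernel Bellman update reproduces exactly the reward/transition concentration, bonus, and martingale terms analysed for \lipucbvi, whose total over $(k,h)$ is $\regret(K,\lipucbvi)$. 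Hence
\begin{align*}
\regret(K) \leq \regret(K,\lipucbvi) + \sum_{k=1}^K \sum_{h=1}^H \pa{V_h^k(x_h^k) - \tV_h^k(x_h^k)},
\end{align*}
and it remains only to bound this staleness sum.

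The key step is a potential/covering argument. From the update rule, $V_h^{k+1}(x_h^k) \leq \tV_h^k(x_h^k)$, so each summand is at most the nonnegative per-episode decrement $V_h^k(x_h^k) - V_h^{k+1}(x_h^k)$. Fixing a $\sigma$-cover $\widetilde{\cC}_\sigma$ of $(\statespace,\Sdistfunc)$ of size $\Xsigmacov$, let $c_h^k$ be a center with $\Sdist{x_h^k,c_h^k} \leq \sigma$. Using that $V_h^k$ is $L_h$-Lipschitz and that the update forces $V_h^{k+1}(c_h^k) \leq \tV_h^k(x_h^k) + L_h\sigma$, I would show $V_h^k(x_h^k) - \tV_h^k(x_h^k) \leq \pa{V_h^k(c_h^k) - V_h^{k+1}(c_h^k)} + 2L_h\sigma$. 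Grouping by center and telescoping in $k$ (each $V_h^{\cdot}(c)$ is monotone and lies in $[0,H]$) bounds $\sum_k \pa{V_h^k(c_h^k) - V_h^{k+1}(c_h^k)} \leq H\Xsigmacov$; summing both contributions over $h$ yields $H^2\Xsigmacov + 2\pa{\sum_h L_h}\sigma K$. Since $\sum_h L_h = \BigO{H L_1}$, the second piece is of the same order as the $L_1 KH\sigma$ term already present in Theorem~\ref{theorem:regret-bound-restated} and is absorbed, leaving the advertised extra term $\BigOtilde{H^2\Xsigmacov}$. Substituting $\sigma = (1/K)^{1/(2d+1)}$ and $\Xsigmacov = \BigO{\sigma^{-d}}$ then gives the $\BigOtilde{H^3 K^{2d/(2d+1)}}$ rate.

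The main obstacle I anticipate is the first step: verifying that optimism and monotonicity genuinely survive the interleaved incremental updates, since the step-$(h+1)$ value used to build $\tQ_h^k$ lags by one episode, and one must check that the $L_h$-Lipschitz interpolation neither breaks the upper bound on $Q_h^*$ nor monotonicity across episodes. Once these invariants are secured, the regret decomposition and the covering/telescoping bound on the staleness sum are essentially bookkeeping, and every other ingredient is inherited verbatim from the \lipucbvi analysis.
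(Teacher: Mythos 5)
Your proposal is correct and follows essentially the same route as the paper's proof in App.~\ref{app:efficient_implementation}: the same two invariants (optimism and pointwise monotonicity of $V_h^k$, Proposition~\ref{prop:optimism_greedy}), reduction of the per-step expansion to the \lipucbvi analysis, and a $\sigma$-covering plus telescoping argument on the nonnegative decrements $V_h^k - V_h^{k+1}$ that yields the extra $e H^2 \Xsigmacov$ term and an $\BigO{L_1 H K \sigma}$ bias absorbed into the existing bound. The only (immaterial) difference is bookkeeping: you charge the staleness $V_h^k(x_h^k) - \tV_h^k(x_h^k)$ at the current state, while the paper injects the decrement $\pa{V_{h+1}^{k} - V_{h+1}^{k+1}}(x_{h+1}^k)$ at the next state inside the recursion via $\delta_{h+1}^k = \tdelta_{h+1}^k + \pa{V_{h+1}^{k}-V_{h+1}^{k+1}}(x_{h+1}^k)$.
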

    \emph{Proof.}
    The complete proof is provided in App.~\ref{app:efficient_implementation}.
    The key properties for proving this regret bound are: \emph{(i)} optimism, and \emph{(ii)} the fact that $(V_h^k)$ are point-wise non-increasing. 

	Besides RTDP, other techniques previously proposed to accelerate KBRL can also be applied, notably the use of \emph{representative states} \cite{kveton2012kernel, barreto2016practical} that merge states that are close to each other to avoid a per-episode runtime that increases with $k$.

\section{Proof sketch}
	\label{app:proof_sketch}
	
	We now provide a sketch of the proof of our main result, Theorem \ref{theorem:regret-bound-restated}. The complete proof is given in the Appendix. The analysis splits into three parts: (i) deriving confidence intervals for the reward and transition kernel estimators; (ii) proving that the algorithm is optimistic, i.e., that $V_h^k(x) \geq V_h^*(x)$ for any $(x, k, h)$ on a high probability event $\favevent$; and (iii) proving an upper bound on the regret by using the fact that $\regret(K) = \sum_k \pa{V_1^*(x_1^k) - V_1^{\policy_k}(x_1^k)} \leq \sum_k \pa{V_1^k(x_1^k) - V_1^{\policy_k}(x_1^k)}$.
	
	\subsection{Concentration}
	
	The most interesting part is the concentration of the transition kernel.
	Since $\widehat{P}_h^k(\cdot|x, a)$ are  weighted sums of Dirac measures, we cannot bound the distance between $P_h(\cdot|x, a)$ and $\widehat{P}_h^k(\cdot|x, a)$ directly. Instead, for $V_{h+1}^*$ the optimal value function at step $h+1$, we bound the difference
	{\small
	\begin{align*}
	& \abs{(\widehat{P}_h^k-P_h)V_{h+1}^*(x, a)} 
	\\
	& 
	= \abs{ \sum_{s=1}^{k-1}\widetilde{w}_h^s(x,a)V_{h+1}^*(x_{h+1}^s) - P_hV_{h+1}^*(x, a)} \\
	& \leq \underbrace{ \abs{ \sum_{s=1}^{k-1}\widetilde{w}_h^s(x,a) \pa{ V_{h+1}^*(x_{h+1}^s) - P_hV_{h+1}^*(x_h^s, a_h^s) } }}_{\mathbf{(A)}}  \\
	& 
	 + \underbrace{\lambda_p L_{h+1}\sum_{s=1}^{k-1}\widetilde{w}_h^s(x,a)\dist{(x,a), (x_h^s,a_h^s)}}_{\mathbf{(B)}}
	+ \underbrace{\frac{ \beta \norm{V_{h+1}^*}_\infty }{ \gencount_h^k(x,a) }}_{\mathbf{(C)}}\,\cdot	
	\end{align*}
	}
	The term $\mathbf{(A)}$ is a weighted sum of a martingale difference sequence. To control it, we propose a new Hoeffding-type inequality, Lemma \ref{lemma:self-normalized-weighted-sum}, that applies to weighted sums with random weights. The term $\mathbf{(B)}$ is a bias term that is obtained using the fact that $V_{h+1}^*$ is $L_{h+1}$-Lipschitz and that the transition kernel is $\lambda_p$-Lipschitz, and can be shown to be proportional to the bandwidth $\sigma$ under Assumption \ref{assumption:kernel-behaves-as-gaussian} (Lemma \ref{lemma:kernel-bias}). The term $\mathbf{(C)}$ is the bias introduced by the regularization parameter $\beta$. Hence, for a fixed state-action pair $(x, a)$, we show that\footnote{Here, $\lesssim$ means smaller than or equal up to logarithmic terms.}, with high-probability,
	\begin{align*}
	\abs{(\widehat{P}_h^k-P_h)V_{h+1}^*(x, a)} \lesssim  \frac{H}{\sqrt{\gencount_h^k(x, a)}} + \frac{\beta H}{\gencount_h^k(x, a)} + L_1\sigma.
	\end{align*}
	Then, we extend this bound to all $(x, a)$ by leveraging the continuity of all the terms involving $(x, a)$ and a covering argument. This continuity is a consequence of kernel smoothing, and it is a key point in avoiding a discretization of $\stateactionspace$ in the algorithm.
	
	In Theorem~\ref{theorem:favorable-event}, we define a favorable event $\cG$, of probability larger than $1-\delta/2$, in which (a more precise version of) the above inequality holds, the mean rewards belong to their confidence intervals, and we further control the deviations of $(\widehat{P}_h^k-P_h)f(x, a)$  for \emph{any} $2L_1$-Lipschitz function $f$. This last part is obtained thanks to a \emph{new Bernstein-like concentration inequality} for weighted sums (Lemma~\ref{lemma:bernstein-freedman-weighted-sum}).

	\subsection{Optimism}
	To prove that the optimistic value function $V_h^k$ is indeed an upper bound on $V_h^*$, we proceed by induction on $h$ and we use the $Q$ functions. When $h = H+1$, we have $Q_{H+1}^k(x, a) = Q_{H+1}^*(x, a) = 0$ for all $(x, a)$, by definition. Assuming that $Q_{h+1}^k(x, a) \geq Q_{h+1}^*(x, a)$ for all $(x, a)$, we have $V_{h+1}^k(x) \geq V_{h+1}^*(x)$ for all $x$. Then, the bonuses are defined so that $\widetilde{Q}_h^k(x,a) \geq Q_h^*(x,a)$ for all $(x, a)$, on the event $\favevent$.

	In particular $\widetilde{Q}_h^k(x_h^s,a_h^s) - Q_h^*(x_h^s,a_h^s) \geq 0$ for all $s \in [k-1]$, which gives us%
	\begin{align*}
	& \widetilde{Q}_h^k(x_h^{s},a_h^{s}) + L_h\dist{(x,a), (x_h^{s},a_h^{s})} 
	\\
	& 
	 \geq Q_h^*(x_h^{s},a_h^{s}) + L_h\dist{(x,a), (x_h^{s},a_h^{s})} \geq Q_h^*(x,a)
	\end{align*}
	for all $s \in [k-1]$, since $Q_h^*$ is $L_h$-Lipschitz. It follows from the definition of $Q_h^k$ that $
	Q_h^k(x,a) \geq Q_h^*(s,a)$, which in turn implies that, for all $x$, $V_h^k(x) \geq V_h^*(x)$ in $\favevent$.
	
	\subsection{Bounding the regret}
	\label{subsec:boundingregret}
	
	To provide an upper bound on the regret in the event $\favevent$, let $\delta_h^k \eqdef V_h^k(x_h^k) - V_h^{\pi_k}(x_h^k)$. The fact that $V_h^k \geq V_h^*$ gives us $\regret(K) \leq \sum_k \delta_1^k$.
	Introducing $(\tilde{x}_h^k,\tilde{a}_h^k)$, the state-action pair in the past data $\data_h$ that is the closest to $(x_h^k,a_h^k)$ and letting $\square_h^k = \dist{(\tilde{x}_h^k,\tilde{a}_h^k), (x_h^k,a_h^k)}$,
	we bound $\delta_h^k$ using the following decomposition:
	{\small
	\begin{align*}
	\delta_h^k 
	& 
	\leq Q_h^k(x_h^k,a_h^k) - Q_h^{\pi_k}(x_h^k,a_h^k) 
	\\
	& 
	 \leq  \widetilde{Q}_h^k(\tilde{x}_h^k,\tilde{a}_h^k) - Q_h^{\pi_k}(x_h^k,a_h^k) + L_h \square_h^k 
	\\
	& 
	\leq 2\,\bonus_h^k(\tilde{x}_h^k,\tilde{a}_h^k) + (L_h + \lambda_p L_h +\lambda_r) \square_h^k  
	\\
	& 
	  + \pa{\widehat{P}_h^k - P_h} V_{h+1}^*(\tilde{x}_h^k, \tilde{a}_h^k)
  	    \quad\quad\quad\quad\quad\;\, \mathbf{(1)}
	 \\
	 & 
	  + P_h\pa{V_{h+1}^k-V_{h+1}^{\pi_k}}(x_h^k,a_h^k) 
	  	 \quad\quad\quad\quad\;\;\, \mathbf{(2)}
	 \\
	 & 
	   + \pa{\widehat{P}_h^k - P_h}\pa{V_{h+1}^k-V_{h+1}^*}(\tilde{x}_h^k, \tilde{a}_h^k)
	   \quad \mathbf{(3)}
   	.
	\end{align*}
	}
	The term $\mathbf{(1)}$ is shown to be smaller than $\bonus_h^k(\tilde{x}_h^k,\tilde{a}_h^k)$, by definition of the bonus. The term $\mathbf{(2)}$ can be rewritten as $\delta_{h+1}^k$ plus a martingale difference sequence $\xi_{h+1}^k$. To bound the term $\mathbf{(3)}$, we use that $V_{h+1}^k-V_{h+1}^*$ is $2L_1$-Lipschitz. The uniform deviations that hold on event $\cG$ yield
	\begin{align*}
	\text{\ding{194}} \lesssim \frac{1}{H}\pa{\delta_{h+1}^k + \xi_{h+1}^k} + \frac{ H^2 \Xsigmacov}{\gencount_h^k(\tilde{x}_h^k,\tilde{a}_h^k)}  + L_1\square_h^k + L_1 \sigma\,.
	\end{align*}
	
	When $\square_h^k > 2\sigma$, we bound $\delta_h^k$ by $H$ and we verify that $H \sum_{h=1}^H\sum_{k=1}^K \indic{\square_h^k > 2\sigma} \leq H^2 \sigmacov$ by a pigeonhole argument. Hence, we can focus on the case where $\square_h^k \leq 2\sigma$, and add $H^2 \sigmacov$ to the regret bound, to take into account the steps $(k, h)$ where $\square_h^k > 2\sigma$. The sum of $\xi_{h+1}^k$ over $(k, h)$ is bounded by $\BigOtilde{H^\frac{3}{2}\sqrt{K}}$ by Hoeffding-Azuma's inequality, on some event $\cF$ of probability larger than $1-\delta/2$.
	Now, we focus on the case where $\square_h^k \leq 2\sigma$ and we omit the terms involving $\xi_{h+1}^k$. Using the definition of the bonus, we obtain
	{\small
	\begin{align*}
	\delta_h^k \lesssim \pa{1+\frac{1}{H}}\delta_{h+1}^k + \frac{H}{\sqrt{\gencount_h^k(\tilde{x}_h^k,\tilde{a}_h^k)}}  + \frac{H^2\Xsigmacov}{\gencount_h^k(\tilde{x}_h^k,\tilde{a}_h^k)}  + L_1 \sigma.
	\end{align*}
	}
	Using the fact that $(1+1/H)^{H} \leq e$, we have, on $\cG\cap \cF$,
	{\small
	\begin{align*}
	\regret(K) \lesssim \sum_{h, k} \pa{ \frac{H}{\sqrt{\gencount_h^k(\tilde{x}_h^k,\tilde{a}_h^k)}}  + \frac{H^2\Xsigmacov}{\gencount_h^k(\tilde{x}_h^k,\tilde{a}_h^k)} } + L_1 KH \sigma.
	\end{align*}
	}
	The term in $1/\gencount_h^k(\tilde{x}_h^k,\tilde{a}_h^k)$ is the \emph{second order term} (in $K$). 

	In the tabular case, it is multiplied by the number of states. Here, it is multiplied by the covering number of the state space $\Xsigmacov$.
	
	From there it remains to bound the sum of the first and second-order terms, and we specifically show that
	{\small
	\begin{align}
	\sum_{h, k} \frac{1}{\sqrt{\gencount_h^k(\tilde{x}_h^k,\tilde{a}_h^k)}} &\lesssim H \sqrt{\sigmacov K} \label{bound:order1}\\
	\text{and } \ \ \ \ \sum_{h, k} \frac{1}{\gencount_h^k(\tilde{x}_h^k,\tilde{a}_h^k)} &\lesssim H \sigmacov\log K,\label{bound:order2}
	\end{align}
	}
	where we note that \eqref{bound:order2} has a worse dependence on $\sigmacov$. As mentioned before, unlike in the tabular case the sum of ``second-order'' terms will actually be the leading term, since the choice of $\sigma$ that minimizes the regret depends on $K$. 
	
	Finally, we obtain that on $\cG\cap \cF$ (of probability $\geq 1-\delta$)
	\begin{align*}
	\regret(K) \lesssim H^2 \sqrt{\sigmacov K} +  H^3 \sigmacov\Xsigmacov + L_1 KH \sigma + H^2\sigmacov\,,
	\end{align*}
	where the extra $H^2\sigmacov$ takes into account the episodes where $\square_h^k > 2\sigma$.

	If the transitions kernels are stationary, \ie $P_1=\ldots=P_H$, the bounds \eqref{bound:order1} and \eqref{bound:order2} can be improved to $\sqrt{\sigmacov KH}$ and $\sigmacov\log (KH)$ respectively, thus improving the final scaling in $H$.\footnote{This is because, in the non-stationary case, we bound the sums over $k$ and then multiply the resulting bound by $H$. In the stationary case, we can directly bound the sums over $(k, h)$.} See App.~\ref{app:variants} for details.

\section{Experiments}
\label{sec:experiments}

To illustrate experimentally the properties of \lipucbvi, we consider a Grid-World environment with continuous states. This Grid-World is composed of two rooms separated by a wall, such that $\statespace = ([0, 1-\Delta] \cup [1+\Delta, 2]) \times [0, 1])$ where $2\Delta=0.1$ is the width of the wall, as illustrated by Figure~\ref{fig:environment}. There are four actions: left, right, up, and down, each one resulting to a displacement of $0.1$ in the corresponding direction. A two-dimensional Gaussian noise is added to the transitions, and, in each room, there is a single region with non-zero reward. The agent has $0.5$ probability of starting in each of the rooms, and the starting position is at the room's bottom left corner. 

We compare \lipucbvi and \lipucbvigreedy to the following baselines:
\begin{enumerate*}[label=(\roman*)]
	\item UCBVI~\cite{Azar2017} using a uniform discretization of the state-space;
	\item OptQL~\cite{Jin2018} also on a uniform discretization;
	\item Adaptive-Q-Learning~\cite{Sinclair2019} that uses an adaptive discretization of the state-space.
\end{enumerate*}
We used the Euclidean distance and the Gaussian kernel with a fixed bandwidth $\sigma=0.025$, matching the granularity of the uniform discretization used by some of the baselines. We also implemented a version of \lipucbvi using the ``expert knowledge'' that the two rooms are equivalent under translation, by using a metric that is invariant with respect to the change of rooms. More details about the experimental setup are provided in Appendix~\ref{sec:app-experiments}.\footnote{Implementations of \lipucbvi are available on  \href{https://github.com/omardrwch/kernel_ucbvi_experiments}{GitHub}, and use the \href{https://github.com/rlberry-py/rlberry}{\texttt{rlberry}} library \cite{rlberry}.}

\begin{figure}[h]
	\centering
	\includegraphics[width=6cm]{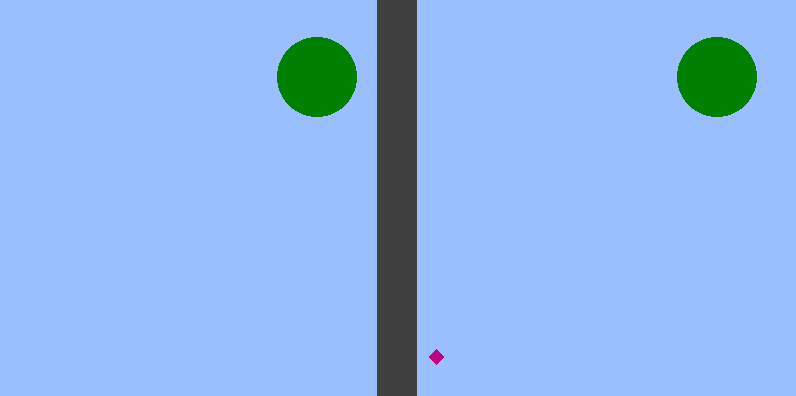}
	\vspace{-0.25cm}
	\caption{Continuous grid-world with two rooms separated by a wall. The circles represent the regions with non-zero rewards.}
	\label{fig:environment}
\end{figure}

We ran the algorithms for $5\times 10^4$ episodes, and Figure~\ref{fig:twinrooms-reward} shows the sum of rewards obtained by each of them. When the curves start behaving as a straight line, it roughly means that the algorithm has converged to a policy whose value is the slope of the line. We see that \lipucbvi is able to outperform the baselines, and that the use of expert knowledge in the kernel design can considerably increase the learning speed. Asymptotically, the extra bias introduced by the kernel (especially its bandwidth) might make \lipucbvi converge to a worse policy at the end: the kernel bandwidth and the discretization width are comparable, but the Gaussian kernel introduces more bias due to sample aggregation. On the other hand, we see that introducing more bias can greatly improve the learning speed in the beginning, especially when expert knowledge is used. This flexibility in handling the bias-variance trade-off is one of the strengths of kernel-based approaches.

\begin{figure}[h]
	\centering
	\includegraphics[width=7cm]{./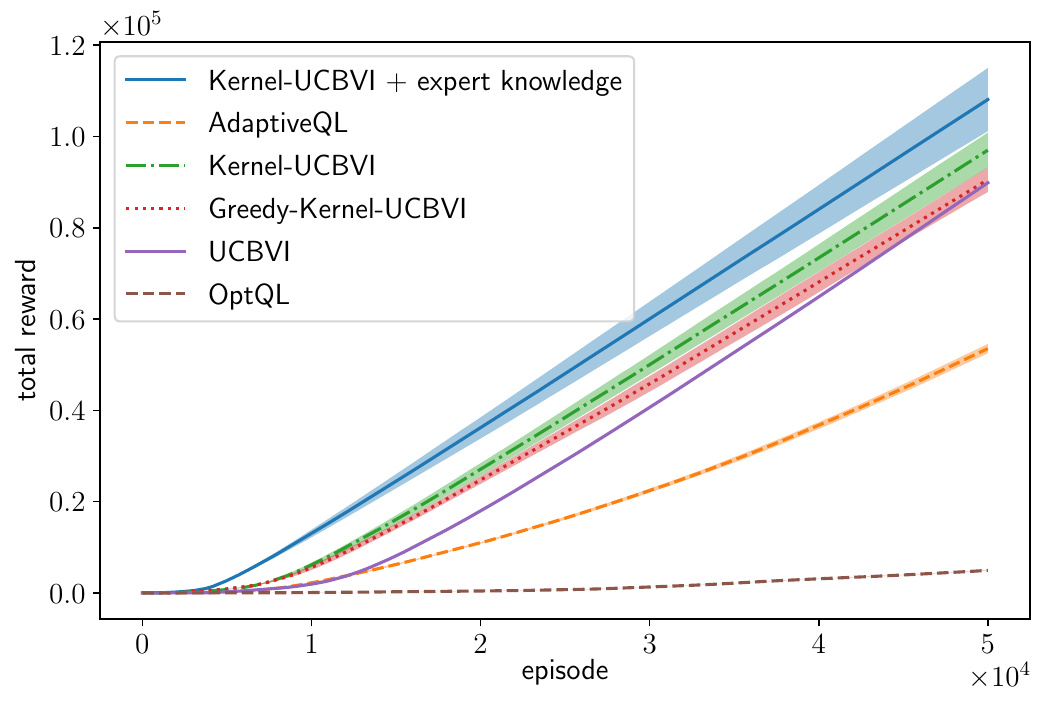}
	\vspace{-0.5cm}
	\caption{Total sum of rewards obtained by \lipucbvi and baselines versus the number of episodes. Average over 8 runs.}
	\label{fig:twinrooms-reward}
\end{figure}

\section{Conclusion}

In this paper, we introduced \lipucbvi, a model-based algorithm for finite-horizon reinforcement learning in metric spaces which employs kernel smoothing to estimate rewards and transitions. By providing new high-probability confidence intervals for weighted sums and non-parametric kernel estimators, we generalize the techniques introduced by \cite{Azar2017} in tabular MDPs to the continuous setting. We prove that the regret of \lipucbvi is of order $H^3 K^{\max\left(\frac{1}{2}, \frac{2d}{2d+1}\right)}$, which is the first regret bound for kernel-based RL using smoothing kernels. In addition, we provide experiments illustrating the effectiveness of \lipucbvi in handling the bias-variance trade-off and in the use of expert knowledge. Interesting directions for future work include the use of learned metrics (\eg using neural networks) and the use of adaptive kernel bandwidths to better handle the bias-variance trade-off asymptotically.

\section*{Acknowledgements}
At Inria and CNRS, this work was supported by the European CHIST-ERA project DELTA, French Ministry of Higher Education and Research, Nord-Pas-de-Calais Regional Council,  French National Research Agency project BOLD (ANR19-CE23-0026-04), FMJH PGMO project 2018-0045. Pierre  M\'enard  is supported by the SFI Sachsen-Anhalt for the project RE-BCI ZS/2019/10/102024 by the Investitionsbank Sachsen-Anhalt.
\bibliography{source_arxiv_mars2022/library2.bib}
\bibliographystyle{icml2021}

\clearpage
\appendix

\onecolumn



\section{Notation and preliminaries}
\label{app:preliminaries}

\subsection{Notation}

Table \ref{tab:notations} presents the main notations used in the proofs. Also, we use the symbol $\lesssim$ with the following meaning:
\begin{align*}
	A \lesssim B \iff A \leq B \times\mathrm{polynomial}\pa{ \log(k),\log(1/\delta), \lambda_r, \lambda_p, \beta, \XAcovdim, \Xcovdim}.
\end{align*}

\begin{table}[h]
	\centering
	\caption{Table of notations}
	\label{tab:notations}
	\begin{tabular}{@{}l|l@{}}
		\toprule
		Notation & Meaning \\ \midrule
		$\distfunc: (\stateactionspace)^2 \to \RR_+$ &  metric on the state-action space $\stateactionspace$ \\
		$\kernel_{\sigma}((x,a), (x', a'))$ & kernel function with bandwidth $\sigma$  \\
		$\kernelfunc: \RR_+ \to [0, 1]$  & ``mother'' kernel function such that $\kernel_{\sigma}(u, v) = \kernelfunc(\dist{u, v}/\sigma)$\\
		$C_1^g, C_2^g$ & positive constants that depend on $\kernelfunc$ (Assumption \ref{assumption:kernel-behaves-as-gaussian}) \\
		$\cN(\epsilon, \stateactionspace, \distfunc)$ & $\epsilon$-covering number of the metric space $(\stateactionspace, \distfunc)$\\
		$\favevent$ & ``good'' event (see Theorem \ref{theorem:favorable-event}) \\
		$\lambda_r,  \lambda_p$ &  Lipschitz constants of rewards and transitions, respectively \\
		$L_h$, for $h \in [H]$ & Lipschitz constant of value functions (see Lemma \ref{lemma:value-funcs-are-liptschitz}) \\
		 $\logplus(x)$  & equal to $\log(x+e)$ \\
		 $\Lip{f}$      & Lipschitz constant of the function $f$ \\
		 $\XAcovdim, \totalcovdim$    & covering dimension of $(\stateactionspace, \distfunc)$ \\
		 $\Xcovdim$     & covering dimension of $(\statespace, \Sdistfunc)$ \\
		 $\sigmacov, \Xsigmacov$ & $\sigma$-covering numbers  of $(\stateactionspace, \distfunc)$ and $(\statespace, \Sdistfunc)$, respectively \\
		\bottomrule
	\end{tabular}
\end{table}

We consider the filtration defined as follows:
\begin{definition}
	Let $\cF_h^k$ be the $\sigma$-algebra generated by the random variables $\braces{x_h^s, a_h^s, x_{h+1}^s, r_h^s}_{s=1}^{k-1} \cup \braces{x_{h'}^k, a_{h'}^k, x_{h'+1}^k, r_{h'}^k}_{h' < h}$, and let $(\cF_h^k)_{k, h}$ be its corresponding filtration.
\end{definition}




\subsection{Preliminaries}

Let $\sigma > 0$. We define the \emph{weights} and the \emph{normalized weights} as
\begin{align*}
	w_h^s(x,a) \eqdef  \kernel_{\sigma}((x,a),(x_h^s, a_h^s)) 
	\quad\text{and}\quad 
	\widetilde{w}_h^s(x,a) \eqdef \frac{ w_h^s(x,a) }{ \beta +  \sum_{l=1}^{k-1}w_h^l(x,a) }
\end{align*}
where $\beta >0$ is a regularization parameter. We define the generalized count at $(x, a)$ at time $(k, h)$ as
$
\gencount_h^k (x, a) \eqdef \beta +  \sum_{s=1}^{k-1}w_h^s(x,a).
$

We define the following estimators for the transition kernels $\braces{P_h}_{h \in [H]}$ and for  for the reward functions $\braces{r_h}_{h \in [H]}$:
\begin{align*}
\widehat{\transition}_h^k(y|x, a) \eqdef  \sum_{s=1}^{k-1} \widetilde{w}_h^s(x,a) \dirac{x_{h+1}^s}(y) 
\quad \text{and}\quad 
\widehat{\reward}_h^k(x, a) \eqdef  \sum_{s=1}^{k-1} \widetilde{w}_h^s(x,a) \reward_h^s.
\end{align*}

For any function $V:\RR \to \RR$, we recall that
\begin{align*}
P_h V(x, a) = \int_{\statespace} V(y)\mathrm{d}P_h(y|x,a) \; \mbox{ and } \; \widehat{P}_h^k V(x, a)  = \sum_{s=1}^{k-1} \widetilde{w}_h^s(x,a) V(x_{h+1}^s).
\end{align*}

We will also using the notion of covering of metric spaces, according to the definition below.

\begin{definition}[covering of a metric space] 
	\label{def:covering}
	Let $\pa{\cU, \rho}$ be a metric space. For any $u \in \cU$, let $\cB(u, \sigma) = \braces{v \in \cU: \rho(u, v)\leq \epsilon}$. We say that a set $\cC_\sigma \subset \cU$ is a $\sigma$-covering of $\pa{\cU, \rho}$ if $\cU \subset \bigcup_{u \in \cC_\sigma}\cB(u, \sigma)$. 
	In addition, we define the $\sigma$-covering number of $\pa{\cU, \rho}$ as  $$\cN(\sigma, \cU, \rho) \eqdef \min \braces{\abs{\cC_\sigma}: \cC_\sigma \text{ is a $\sigma$-covering } of  \pa{\cU, \rho} }.$$
\end{definition}




\section{Description of the algorithm}
\label{app:algorithm}

At the beginning of each episode $k$, the agent has observed the data $\data_h = \braces{(x_h^s, a_h^s, x_{h+1}^s, r_h^s)}_{s \in [k-1]}$ for $h \in [H]$. The number of data tuples in each $\data_h$ is $k-1$.

At each step $h$ of episode $k$, the agent has access to an optimistic value function at step $h+1$, denoted by $V_{h+1}^k$. Using this optimistic value function, the agent computes an upper bound for the $Q$ function at each state-action pair in the data, denoted by $\widetilde{Q}_h^k(x_h^s, a_h^s)$ for $s \in [k-1]$, which we call \emph{optimistic targets}. For any $(x, a)$, we can compute an optimistic target as
\begin{align*}
\widetilde{Q}_h^k(x, a) = \widehat{\reward}_h^k(x, a) + \widehat{P}_h^k V_{h+1}^k(x, a) + \bonus_h^k(x, a)
\end{align*}
where $\bonus_h^k(x, a)$ is an exploration bonus for the pair $(x, a)$ that represents the sum of uncertainties on the transitions and rewards estimates and is defined below:

\begin{fdefinition}[exploration bonus]
	\label{def:exploration_bonus}
	\begin{equation}
	\label{eq:bonus_full}
	\begin{aligned}
	& \bonus_h^k(x, a) = \pbonus_h^k(x, a) +  \rbonus_h^k(x, a) \\
	& = \underbrace{
		\pa{\sqrt{ \frac{H^2 \hoeffdingVarP (k,\delta/6)}{\gencount_h^k(x, a)}} + \frac{\beta H}{\gencount_h^k(x, a)} + \hoeffdingBiasP(k,\delta/6)\sigma }
	}_{\mbox{transition bonus}} 
	 + \underbrace{\pa{\sqrt{\frac{\hoeffdingVarR(k,\delta/6)}{\gencount_h^k(x, a)}} + \frac{\beta}{\gencount_h^k(x, a)} + \hoeffdingBiasR(k,\delta/6) \sigma }}_{\mbox{reward bonus}}
	\end{aligned}
	\end{equation}
	where 
	\begin{align*}
		& \hoeffdingVarR(k,\delta) =  \BigOtilde{\XAcovdim} =  
		2  \log\pa{H \XAcovnumber{\sigma^2/(KH)} \frac{\sqrt{1 + k/\beta}}{\delta}} \\
		& \hoeffdingBiasR(k, \delta)  =  \BigOtilde{L_1 + \sqrt{\XAcovdim}} =  
		\frac{4 C_2^g}{\beta}
		+  \sqrt{\hoeffdingVarR(k,\delta)} \frac{C_2^g}{\beta^{3/2}} 
		+ 
		2 \lambda_r L_1 \pa{1 + \sqrt{\logplus(C_1^g k/\beta)}}  \\
		& \hoeffdingVarP(k,\delta) =  \BigOtilde{\XAcovdim} =  
		2  \log\pa{H \XAcovnumber{\sigma^2/(KH)} \frac{\sqrt{1 + k/\beta}}{\delta}} \\
		& \hoeffdingBiasP(k, \delta)  =  \BigOtilde{L_1 + \sqrt{\XAcovdim}} =  
		\frac{4 C_2^g}{\beta}
		+  \sqrt{\hoeffdingVarP(k,\delta)} \frac{C_2^g}{\beta^{3/2}} 
		+ 
		2 \lambda_p L_1 \pa{1 + \sqrt{\logplus(C_1^g k/\beta)}} 
	\end{align*}
\end{fdefinition}



Then, we build an optimistic $Q$ function $Q_h^k$ by interpolating the optimistic targets:
\begin{equation}
 \label{eq:interpolation}
\forall (x,a), \; Q_h^k(x, a) \eqdef \min_{s\in[k-1]} \left[ \widetilde{Q}_h^k(x_h^{s},a_h^{s}) + L_h\dist{(x,a), (x_h^{s},a_h^{s})} \right]
\end{equation}
and the value function $V_h^k$ is computed as
\begin{align*}
\forall x, \; V_h^k(x) \eqdef \min\pa{H-h+1, \max_{a'} Q_h^k(x, a')}.
\end{align*}

We can check that $(x, a) \mapsto Q_h^k(x, a)$ is $L_h$-Lipschitz with respect to $\distfunc$ and that $(x) \mapsto V_h^k(x)$ is $L_h$-Lipschitz with respect to $\Sdistfunc$.



\section{Concentration}

The first step towards proving our regret bound is to derive confidence intervals for the rewards and transitions, which are presented in propositions \ref{prop:concentration-of-rewards-continuous} and \ref{prop:concentration-of-transitions-continuous}, respectively.

In addition, we need a Bernstein-type inequality for the transition kernels, which is stated in Proposition \ref{prop:uniform-bernstein-transitions}.

Finally, Theorem \ref{theorem:favorable-event} defines a favorable event in which all the confidence intervals that we need to prove our regret bound are valid and we prove that this event happens with high probability.



\subsection{Confidence intervals for the reward functions}

\begin{fproposition}
	\label{prop:concentration-of-rewards-continuous}
	For all $(k, h) \in [K]\times[H]$ and all $(x, a)\in\stateactionspace$, we have
	\begin{align*}
		\abs{\estR_h^k(x, a)-\trueR_h(x, a)} \leq  
		\sqrt{\frac{\hoeffdingVarR(k,\delta)}{\gencount_h^k(x,a)}}
		+ \frac{\beta}{\gencount_h^k(x,a)} + \hoeffdingBiasR(k, \delta)\sigma
	\end{align*}
	with probability at least $1-\delta$, where
	\begin{align*}
		& \hoeffdingVarR(k,\delta) =  \BigOtilde{\XAcovdim} =  
		2  \log\pa{H \XAcovnumber{\sigma^2/(KH)} \frac{\sqrt{1 + k/\beta}}{\delta}} \\
		& \hoeffdingBiasR(k, \delta)  =  \BigOtilde{L_1 + \sqrt{\XAcovdim}} =  
		\frac{4 C_2^g}{\beta}
		+  \sqrt{\hoeffdingVarR(k,\delta)} \frac{C_2^g}{\beta^{3/2}} 
		+ 
		2 \lambda_r L_1 \pa{1 + \sqrt{\logplus(C_1^g k/\beta)}} 
	\end{align*}
\end{fproposition}
\begin{proof}
	The proof is almost identical to the proof of Proposition \ref{prop:concentration-of-transitions-continuous}. The main difference is that the rewards are bounded by $1$, and not by $H$.
\end{proof}




\subsection{Confidence intervals for the transition kernels}

\begin{fproposition}
	\label{prop:concentration-of-transitions-continuous}
	For all $(k, h) \in [K]\times[H]$ and all $(x, a)\in\stateactionspace$, we have
	\begin{align*}
		\abs{\estP_h^k V_{h+1}^*(x, a) -P_h V_{h+1}^*(x, a)} \leq 
		\sqrt{\frac{H^2 \hoeffdingVarP(k,\delta)}{\gencount_h^k(x,a)}}
		+ \frac{\beta H}{\gencount_h^k(x,a)} + \hoeffdingBiasP(k, \delta)\sigma
	\end{align*}
	with probability at least $1-\delta$, where
	\begin{align*}
		& \hoeffdingVarP(k,\delta) =  \BigOtilde{\XAcovdim} =  
		2  \log\pa{H \XAcovnumber{\sigma^2/(KH)} \frac{\sqrt{1 + k/\beta}}{\delta}} \\
		& \hoeffdingBiasP(k, \delta)  =  \BigOtilde{L_1 + \sqrt{\XAcovdim}} =  
		  \frac{4 C_2^g}{\beta}
		  +  \sqrt{\hoeffdingVarP(k,\delta)} \frac{C_2^g}{\beta^{3/2}} 
		+ 
		2 \lambda_p L_1 \pa{1 + \sqrt{\logplus(C_1^g k/\beta)}} 
	\end{align*}
\end{fproposition}
\begin{proof}
	Consider a fixed tuple $(x, a, h)$, and let $V = V_{h+1}^*$. We have:
	\begin{align*}
		\abs{\estP_h^k V(x, a) -P_h V(x, a)} 
		& \leq
		 \abs{
		    \sum_{s=1}^{k-1}\normweight_h^s(x, a)\pa{V(x_{h+1}^s)-\trueP_h V(x,a)}
	    }
    	+\abs{
    	\frac{\beta \trueP_h V(x, a)}{\gencount_h^k(x, a)}
        } \\
    	& \leq
    	\abs{
    		\sum_{s=1}^{k-1}\normweight_h^s(x, a)\pa{V(x_{h+1}^s)-\trueP_h V(x,a)}
    	}
    	+	\frac{\beta H}{\gencount_h^k(x, a)}
	\end{align*}
	since $\norm{V}_\infty \leq H$. Now, by Assumption \ref{assumption:lipschitz-rewards-and-transitions} and the fact that $V$ is $L_1$-Lipschitz:
	\begin{align*}
		& \abs{
			\sum_{s=1}^{k-1}\normweight_h^s(x, a)\pa{V(x_{h+1}^s)-\trueP_h V(x,a)}
		} \\
		& \leq
		 	\abs{	\sum_{s=1}^{k-1}\normweight_h^s(x, a)\pa{V(x_{h+1}^s)-\trueP_h V(x_h^s,a_h^s)}	}
		+   \abs{	\sum_{s=1}^{k-1}\normweight_h^s(x, a)\pa{\trueP_h V(x_h^s,a_h^s)-\trueP_h V(x,a)}	} \\
    	& \leq
			\abs{	\sum_{s=1}^{k-1}\normweight_h^s(x, a)\pa{V(x_{h+1}^s)-\trueP_h V(x_h^s,a_h^s)}	}
		+   L_1\abs{	\sum_{s=1}^{k-1}\normweight_h^s(x, a)\Wassdist{P_h(\cdot|x_h^s,a_h^s), P_h(\cdot|x,a)}	} \\
		& \leq
		\abs{	\sum_{s=1}^{k-1}\normweight_h^s(x, a)\pa{V(x_{h+1}^s)-\trueP_h V(x_h^s,a_h^s)}	}
		+   \lambda_p L_1\abs{	\sum_{s=1}^{k-1}\normweight_h^s(x, a)\dist{(x_h^s, a_h^s),(x, a)}} \\
		& \leq
		\abs{	\sum_{s=1}^{k-1}\normweight_h^s(x, a)\pa{V(x_{h+1}^s)-\trueP_h V(x_h^s,a_h^s)}	}
		+   \lambda_p L_1 2\sigma\pa{1 + \sqrt{\logplus(C_1^g k/\beta)}} 
  	\end{align*} 
  	where, in the last inequality, we used Lemma \ref{lemma:kernel-bias}.
  	
  	Let $W_s \eqdef V(x_{h+1}^s)-\trueP_h V(x_h^s,a_h^s) $. We have $\abs{W_s}\leq 2H$, and $(W_s)_s$ is a martingale difference sequence with respect to the filtration $(\cF_h^s)_s$.  Lemma \ref{lemma:self-normalized-weighted-sum} and an union bound over $h$ gives us:
  	\begin{align*}
  		\abs{	\sum_{s=1}^{k-1}\normweight_h^s(x, a) W_s	} \leq \sqrt{ 2 H^2  \log\pa{\frac{\sqrt{1 + k/\beta}}{\delta}}  \frac{1}{\gencount_h^k(x, a)}}
  	\end{align*}
  	for all $(k, h)$ and fixed $(x, a)$, with probability at least $1-\delta H$. 
  	
	Now, let's extend this inequality for all $(x, a)$ using a covering argument. We define
  	\begin{align*}
  		f_1(x,a) \eqdef \abs{\frac{1}{\gencount_h^k(x, a)}\sum_{s=1}^{k-1} w_h^s(x,a) W_s}
  	\mbox{  and  }
  		f_2(x, a) \eqdef \sqrt{\frac{1}{\gencount_h^k(x, a)}}
  	\end{align*}
\end{proof}
Lemma \ref{lemma:lipschitz-constant-mean-and-bonuses} implies that $\Lip{f_1} \leq 4 C_2^g H k/(\beta\sigma)$ and $\Lip{f_2} \leq \pa{  C_2^g k / \sigma}\beta^{-3/2}$.  Applying Technical Lemma \ref{lemma:covering} using a $\sigma^2/(KH)$-covering of $(\stateactionspace,\distfunc)$, we obtain:
\begin{align*}
	\abs{	\sum_{s=1}^{k-1}\normweight_h^s(x, a) W_s	}  \leq & \sqrt{ 2 H^2  \log\pa{\frac{\sqrt{1 + k/\beta}}{\delta}}  \frac{1}{\gencount_h^k(x, a)}} \\
	& + \frac{\sigma^2}{KH}\Lip{f_1} +  \frac{\sigma^2}{KH} \sqrt{ 2 H^2  \log\pa{\frac{\sqrt{1 + k/\beta}}{\delta}}} \Lip{f_2}
\end{align*}
for all $(x,a,k,h)$ with probability at least $1-\delta H \XAcovnumber{\sigma^2/(KH)}$.

The fact that 
\begin{align*}
	\abs{\estP_h^k V(x, a) -P_h V(x, a)} \leq 
	\abs{	\sum_{s=1}^{k-1}\normweight_h^s(x, a) W_s	}
	+    2 \lambda_p L_1 \sigma\pa{1 + \sqrt{\logplus(C_1^g k/\beta)}} 
	+  \frac{\beta H}{\gencount_h^k(x, a)}
\end{align*}
allows us to conclude.

\subsection{A confidence interval for $P_h f$ uniformly over Lipschitz functions $f$}

In the regret analysis, we will need to control quantities like $(\hat{P}_h^k - P_h)(\hat{f}_h^k)$ for \emph{random} Lipschitz functions $\hat{f}_h^k$, which motivate us to propose a deviation inequality for $(\hat{P}_h^k - P_h)({f})$ which holds uniformly over $f$ in a class of Lipschitz functions. We provide such a result in Proposition~\ref{prop:uniform-bernstein-transitions}.


\begin{fproposition}
	\label{prop:uniform-bernstein-transitions}
	Consider the following function space:
	\begin{align*}
		\cF_{2L_1} \eqdef \braces{f: \statespace\to \RR \mbox{ such that } f \mbox{ is } 2 L_1\mbox{-Lipschitz} \mbox{ and } \norm{f}_\infty \leq 2H }.
	\end{align*}
	With probability at least $1-\delta$, for all $(x, a, h, k) \in \stateactionspace\times[K]\times[H]$ and for all $f\in\cF_{2L_1}$, we have
	\begin{align*}
		\abs{\pa{\estP_h^k-\trueP_h}f(x,a)} \leq & 
		\frac{1}{H}\trueP_h\abs{f}(x, a)
		+ \frac{11H^2\bernvar(k, \delta)+2\beta H}{\gencount_h^k(x, a)} 
		\\
		& 
		+ \bernbias(k, \delta)\sigma^{1+\Xcovdim}
		+ \bernbiastwo(k, \delta)\sigma
	\end{align*}
	with probability at least $1-\delta$, where 
	\begin{align*}
		& \bernvar(k, \delta) = \BigOtilde{\Xsigmacov+\XAcovdim\Xcovdim} 
		 = \log\pa{ \frac{4e(2k+1)}{\delta} H \XAcovnumber{  \frac{\sigma^{2+\Xcovdim}}{KH^2} }\pa{\frac{2H}{L_1\sigma}}^{\Xcovnumber{\sigma}}}
		\\
		& \bernbias(k, \delta)   = \BigOtilde{\Xsigmacov+\XAcovdim\Xcovdim} 
		= \pa{ 
			\frac{2\lambda_p L_1\sigma}{KH^2}
			+ \frac{4C_2^g}{H \beta}
			+ \frac{ 11 C_2^g \bernvar(k, \delta)}{\beta^2}}
		\\
		& \bernbiastwo(k, \delta) = \BigOtilde{L_1} 
		= 32 L_1 +  6 \lambda_p L_1 \pa{1 + \sqrt{\logplus(C_1^g k/\beta)}} 
	\end{align*}
	where $\Xsigmacov = \BigO{1/\sigma^\Xcovdim}$ is the $\sigma$-covering number of $(\statespace,\Sdistfunc)$.
\end{fproposition}
\begin{proof}
	First, consider a fixed tuple $(x,a,h,k)$. Using the same arguments as in the proof of Proposition \ref{prop:concentration-of-transitions-continuous}, we show that:

	\begin{align*}
	\abs{\estP_h^k f(x, a) -P_h f(x, a)} \leq 
	\underbrace{
	\abs{	\sum_{s=1}^{k-1}\normweight_h^s(x, a) W_s(f)	}
	}_{\termA}
	+    4 \lambda_p L_1 \sigma\pa{1 + \sqrt{\logplus(C_1^g k/\beta)}} 
	+  \frac{2 \beta H}{\gencount_h^k(x, a)}
	\end{align*}
	where $W_s(f) \eqdef f(x_{h+1}^s)-\trueP_h f(x_h^s,a_h^s)$. We have $\abs{W_s(f)}\leq 4H$, and $(W_s)_s$ is a martingale difference sequence with respect to the filtration $(\cF_h^s)_s$ for any fixed $f$. We will bound the term $\termA$ using the Bernstein-type inequality given in Lemma \ref{lemma:bernstein-freedman-weighted-sum}. We start by bounding the variance of $f(x_{h+1}^s)$ given $\cF_h^s$:
	\begin{align*}
		\variance{f(x_{h+1}^s)\given \cF_h^s}
		 & = \expect{f(x_{h+1}^s)^2\given \cF_h^s} - \pa{\int_{\statespace}f(y)\rmd\trueP_h(y|x_h^s,a_h^s)}^2 \\
		 & \leq 2 H \expect{\abs{f(x_{h+1}^s)} \given \cF_h^s} \\
		 & = 2 H \trueP_h \abs{f}(x_h^s, a_h^s)
	\end{align*}
	and, consequently,
	\begin{align}
		&  \sum_{s=1}^{k-1}\normweight_h^s(x,a) \variance{f(x_{h+1}^s)\given \cF_h^s} 
		 \leq 2H \sum_{s=1}^{k-1}\normweight_h^s(x,a) \trueP_h \abs{f}(x_h^s, a_h^s) \nonumber\\
		& = 2H \sum_{s=1}^{k-1}\normweight_h^s(x,a) \trueP_h \abs{f}(x, a) + 
		    2H \sum_{s=1}^{k-1}\normweight_h^s(x,a) \pa{ \trueP_h \abs{f}(x_h^s, a_h^s) - \trueP_h \abs{f}(x, a)} \nonumber \\
		& \leq 2H \sum_{s=1}^{k-1}\normweight_h^s(x,a) \trueP_h \abs{f}(x, a) + 4 H \lambda_p L_1 \sum_{s=1}^{k-1}\normweight_h^s(x,a) \dist{(x_h^s, a_h^s)}\nonumber \\
		& \leq 2H \trueP_h \abs{f}(x, a) + 4 H \lambda_p L_1 \sigma\pa{1+\sqrt{\logplus(C_1^g k/\beta)}} \label{eq:aux-bernstein-prop},
	\end{align}
	where, in the last two inequalities, we used Assumption \ref{assumption:lipschitz-rewards-and-transitions} and Lemma \ref{lemma:kernel-bias}.

	Let $\square(k, \delta) = \log(4e(2k+1)/\delta)$.  Using Lemma \ref{lemma:bernstein-freedman-weighted-sum} and the facts that $\sqrt{u+v}\leq\sqrt{u}+\sqrt{v}$ and $\sqrt{uv}\leq(u+v)/2$ for all $u, v > 0$, we obtain 
	\begin{align*}
	\termA & = \abs{\sum_{s=1}^{k-1}\normweight_h^s(x, a) W_s(f)} =  \abs{\frac{\sum_{s=1}^{k-1}\weight_h^s(x, a) W_s(f)}{\beta+\sum_{s=1}^{k-1}\weight_h^s(x, a)}}\\
	& 
	\leq 
	\sqrt{ 2 \square(k, \delta)\frac{ \sum_{s=1}^{k-1}\weight_h^s(x,a)^2 \variance{f(x_{h+1}^s) | \cF_h^s} + 16H^2}{\pa{\beta+\sum_{s=1}^{k-1}\weight_h^s(x, a)}^2}  } 
	+ \frac{(8/3) H \square(k, \delta)}{\beta+\sum_{s=1}^{k-1}\weight_h^s(x, a)} \nonumber \\
	& \leq \frac{H^2\square(k, \delta)}{\beta+\sum_{s=1}^{k-1}\weight_h^s(x, a)}
	+ \frac{1}{2H^2} \frac{ \sum_{s=1}^{k-1}\weight_h^s(x,a)^2 \variance{f(x_{h+1}^s) | \cF_h^s} }{\beta+\sum_{s=1}^{k-1}\weight_h^s(x, a)}
	+  \frac{(8/3 + 4\sqrt{2}) H \square(k, \delta)}{\beta+\sum_{s=1}^{k-1}\weight_h^s(x, a)} \\
	& \leq \frac{(H^2+10H)\square(k, \delta)}{\gencount_h^k(x,a)}
	+ \frac{1}{2H^2} \frac{ \sum_{s=1}^{k-1}\weight_h^s(x,a) \variance{f(x_{h+1}^s) | \cF_h^s} }{\beta+\sum_{s=1}^{k-1}\weight_h^s(x, a)} \\
	& = \frac{(H^2+10H)\square(k, \delta)}{\gencount_h^k(x,a)}
	+ \frac{1}{2H^2}  \sum_{s=1}^{k-1}\normweight_h^s(x,a) \variance{f(x_{h+1}^s) | \cF_h^s}
	\end{align*}	
	for all $k$, with probability at least $1-\delta$.  Above, we also used the fact that $\weight_h^s(x, a)^2 \leq \weight_h^s(x, a)$ since the weights are in $[0,1]$.
	
	Inequality \ref{eq:aux-bernstein-prop} yields
	\begin{align*}
		\termA &
		\leq \frac{1}{H} \trueP_h \abs{f}(x, a) + \frac{(H^2 + 10 H) \square(k, \delta)}{\gencount_h^k(x, a)} +  \frac{2 \lambda_p L_1 \sigma}{H}\pa{1+\sqrt{\logplus(C_1^g k/\beta)}}
	\end{align*}
	with probability at least $1-\delta$.
	
	\paragraph{Extending to all $(x, a)$} Assumption  \ref{assumption:lipschitz-rewards-and-transitions} implies that $(x, a)\mapsto (1/H)\trueP_h \abs{f}(x, a)$ is $2\lambda_p L_1$-Lipschitz. Let 
	\begin{align*}
		f_1(x, a) = \abs{\sum_{s=1}^{k-1} \normweight_h^s(x, a)W_s(f)}
		\quad\text{and}\quad 
		f_2(x, a) = \frac{1}{\gencount_h^k(x, a)}.
	\end{align*}
	Lemma \ref{lemma:lipschitz-constant-mean-and-bonuses} implies that $\Lip{f_1}\leq 4HC_2^gk/(\beta\sigma)$ and $\Lip{f_2}\leq C_2^gk/(\beta^2\sigma)$.  Applying Technical Lemma \ref{lemma:covering} using a $\sigma^{2+\Xcovdim}/(KH^2)$-covering of $(\stateactionspace,\distfunc)$, and doing an union bound over $[H]$, we obtain:
	\begin{align*}
		 \abs{\sum_{s=1}^{k-1}\normweight_h^s(x, a) W_s(f)} 
		 \leq & 
		 \frac{1}{H} \trueP_h \abs{f}(x, a) + \frac{(H^2 + 10 H) \square(k, \delta)}{\gencount_h^k(x, a)} +  \frac{2 \lambda_p L_1 \sigma}{H}\pa{1+\sqrt{\logplus(C_1^g k/\beta)}} \\
		 & + \frac{\sigma^{2+\Xcovdim}}{KH^2}\pa{ 
		 	2\lambda_p L_1
		 	+ \frac{4HC_2^gk}{\beta\sigma}
	 		+ \frac{C_2^gk (H^2 + 10 H) \square(k, \delta)}{\beta^2\sigma}
 		   }
	\end{align*}
	for all $(x,a,h,k)$ with probability at least $1-\delta H \XAcovnumber{  \frac{\sigma^{2+\Xcovdim}}{KH^2} }$.
	
	\paragraph{Extending to all $f \in \cF_{2L_1}$} The inequalities above give us

	\begin{align*}
		\abs{\estP_h^k f(x, a) -P_h f(x, a)} \leq 
		& 
		\frac{1}{H} \trueP_h \abs{f}(x, a) + \frac{(H^2 + 10 H) \square(k, \delta)}{\gencount_h^k(x, a)}  \\
		& + \frac{\sigma^{2+\Xcovdim}}{KH^2}\pa{ 
			2\lambda_p L_1
			+ \frac{4HC_2^gk}{\beta\sigma}
			+ \frac{C_2^gk (H^2 + 10 H) \square(k, \delta)}{\beta^2\sigma}
		   } \\
	    & +    6 \lambda_p L_1 \sigma\pa{1 + \sqrt{\logplus(C_1^g k/\beta)}} 
	    +  \frac{2 \beta H}{\gencount_h^k(x, a)}
	\end{align*}
	for all $(x,a,h,k)$ with probability at least $1-\delta H \XAcovnumber{  \frac{\sigma^{2+\Xcovdim}}{KH^2} }$. 
	
	According to Lemma \ref{lemma:lipschitz-covering}, the $8L_1\sigma$-covering number of $\cF_{2L_1}$ is bounded by $(2H/(L_1\sigma))^{\Xcovnumber{\sigma}}$. The functions $f\mapsto \abs{\estP_h^k f(x, a) -P_h f(x, a)}$ and $f\mapsto \frac{1}{H} \trueP_h \abs{f}(x, a)$ are 2-Lipschitz with respect to $\norm{\cdot}_\infty$. Hence, Lemma \ref{lemma:covering} gives us:
	\begin{align*}
		\abs{\estP_h^k f(x, a) -P_h f(x, a)} \leq 
		& 
		\frac{1}{H} \trueP_h \abs{f}(x, a) + \frac{(H^2 + 10 H) \square(k, \delta)}{\gencount_h^k(x, a)}  \\
		& + \frac{\sigma^{2+\Xcovdim}}{KH^2}\pa{ 
			2\lambda_p L_1
			+ \frac{4HC_2^gk}{\beta\sigma}
			+ \frac{C_2^gk (H^2 + 10 H) \square(k, \delta)}{\beta^2\sigma}
		} \\
		& +    6 \lambda_p L_1 \sigma\pa{1 + \sqrt{\logplus(C_1^g k/\beta)}} 
		+  \frac{2 \beta H}{\gencount_h^k(x, a)} \\
		& + 32 L_1\sigma
	\end{align*}
	for all $(x,a,h,k)$ with probability at least $1-\delta H \XAcovnumber{  \frac{\sigma^{2+\Xcovdim}}{KH^2} }(2H/(L_1\sigma))^{\Xcovnumber{\sigma}}$, which concludes the proof.
	
\end{proof}



\subsection{Good event}
\begin{ftheorem}[Good event]
	\label{theorem:favorable-event}
	Let $\favevent = \favevent_1 \cup \favevent_2 \cup \favevent_3$, where
	\begin{align*}
		 & \favevent_1 \eqdef \braces{ \forall(x,a,k,h),\; 	\abs{\estR_h^k(x, a)-\trueR_h(x, a)} \leq  
			\sqrt{\frac{\hoeffdingVarR(k,\delta/6)}{\gencount_h^k(x,a)}}
			+ \frac{\beta}{\gencount_h^k(x,a)} + \hoeffdingBiasR(k, \delta/6)\sigma}
		\\
		 & \favevent_2 \eqdef \braces{ \forall(x,a,k,h),\;
		 \abs{\estP_h^k V_{h+1}^*(x, a) -P_h V_{h+1}^*(x, a)} \leq 
		 \sqrt{\frac{H^2 \hoeffdingVarP(k,\delta/6)}{\gencount_h^k(x,a)}}
		 + \frac{\beta H}{\gencount_h^k(x,a)} + \hoeffdingBiasP(k, \delta/6)\sigma
		}
		\\
		& \favevent_3 \eqdef \Bigg\lbrace 
		 \forall(x,a,k,h,f),\;
		 \abs{\pa{\estP_h^k-\trueP_h}f(x,a)} \leq 
		 \frac{1}{H}\trueP_h\abs{f}(x, a)
		 + \frac{11H^2\bernvar(k, \delta/6)+2\beta H}{\gencount_h^k(x, a)} 
		 \\
		 & \text{~~~~~~~~~~~~~~~~~~~~~~~~~~~~~~~~~~~~~~~~~~~~~~~~~~~~~~~~~~~~~~~~~~~~~~}
		 + \bernbias(k, \delta/6)\sigma^{1+\Xcovdim}
		 + \bernbiastwo(k, \delta/6)\sigma
		\Bigg\rbrace
	\end{align*}
	for $(x,a,k,h)\in \stateactionspace\times[K]\times[H]$ and $f\in\cF_{2L_1}$, and where
	\begin{align*}
	& \hoeffdingVarR(k,\delta) =  \BigOtilde{\XAcovdim}
	,\quad 
	\hoeffdingBiasR(k, \delta)  =  \BigOtilde{L_1 + \sqrt{\XAcovdim}}
	\\
	& \hoeffdingVarP(k,\delta) =  \BigOtilde{\XAcovdim}
	,\quad 
	\hoeffdingBiasP(k, \delta)  =  \BigOtilde{L_1 + \sqrt{\XAcovdim}},
	\\
	& \bernvar(k, \delta) = \BigOtilde{\Xsigmacov+\XAcovdim\Xcovdim}
	,\quad
	\bernbias(k, \delta)   = \BigOtilde{\Xsigmacov+\XAcovdim\Xcovdim}
	,\quad
	 \bernbiastwo(k, \delta) = \BigOtilde{L_1} 
	\end{align*}
	are defined in Propositions \ref{prop:concentration-of-rewards-continuous}, \ref{prop:concentration-of-transitions-continuous} and \ref{prop:uniform-bernstein-transitions}. Then,
	\begin{align*}
		\prob{\favevent} \geq 1 - \delta/2.
	\end{align*}
\end{ftheorem}
\begin{proof}
	Immediate consequence of Propositions \ref{prop:concentration-of-rewards-continuous}, \ref{prop:concentration-of-transitions-continuous} and \ref{prop:uniform-bernstein-transitions}.
\end{proof}



\section{Optimism and regret bound }
\label{app:optimism-regret}

\begin{fproposition}[Optimism]
	\label{prop:optimism}
	In the event $\favevent$, whose probability is greater than $1-\delta/2$, we have:
	\begin{align*}
	\forall (x,a), \; Q_h^k(x,a) \geq Q_h^*(x,a)
	\end{align*}
\end{fproposition}
\begin{proof}
	We proceed by induction.
	
	\emph{Initialization \ \ } When $h = H+1$, we have $Q_h^k(x,a) = Q_h^*(x,a) = 0$ for all $(x, a)$.
	
	\emph{Induction hypothesis \ \ } Assume that  $Q_{h+1}^k(x,a) \geq Q_{h+1}^*(x,a)$ for all $(x, a)$.
	
	\emph{Induction step \ \ } The induction hypothesis implies that $V_{h+1}^k(x) \geq V_{h+1}^*(x)$ for all $x$. Hence, for all $(x, a)$, we have
	\begin{small}
		\begin{align*}
		\widetilde{Q}_h^k(x,a) - Q_h^*(x,a) = \underbrace{ (\estR_h^k(x, a) - \reward_h(x, a))+ (\widehat{P}_h^k-P_h)V_{h+1}^*(x, a) + \bonus_h^k(x,a)}_{ \geq 0 \mbox{ in } \favevent} + \underbrace{\estP_h^k (V_{h+1}^k - V_{h+1}^*)(x,a)}_{\geq 0 \mbox{ by induction hypothesis}} \geq 0.
		\end{align*}
	\end{small}
	In particular $\widetilde{Q}_h^k(x_h^s,a_h^s) - Q_h^*(x_h^s,a_h^s) \geq 0$ for all $s \in [k-1]$. This implies that
	\begin{align*}
	\widetilde{Q}_h^k(x_h^{s},a_h^{s}) + L_h\dist{(x,a), (x_h^{s},a_h^{s})} \geq Q_h^*(x_h^{s},a_h^{s}) + L_h\dist{(x,a), (x_h^{s},a_h^{s})} \geq Q_h^*(x,a)
	\end{align*}
	for all $s \in [k-1]$, since $Q_h^*$ is $L_h$-Lipschitz. Finally, we obtain
	\begin{align*}
	\forall (x,a), \; Q_h^k(x, a) = \min_{s\in[k-1]} \left[ \widetilde{Q}_h^k(x_h^{s},a_h^{s}) + L_h\dist{(x,a), (x_h^{s},a_h^{s})} \right] \geq Q_h^*(x,a).
	\end{align*}
\end{proof}

\begin{fcorollary}
	\label{corollary:upper-bound-regret-with-deltas}
	Let $ \delta_h^k \eqdef V_h^k(x_h^k) - V_h^{\pi_k}(x_h^k)$. Then, on $\favevent$,
	$\regret(K) \leq \sum_{k=1}^K \delta_1^k$.
\end{fcorollary}
\begin{proof} Combining the definition of the regret with Proposition~\ref{prop:optimism} easily yields, on the event $\favevent$,
	\begin{eqnarray*}
		\regret(K) &=& \sum_{k=1}^K \left(V_1^\star(x_1^k) - V_1^{\pi_k}(x_1^k) \right)  = \sum_{k=1}^K \left(\max_a Q_1^\star(x_1^k,a) - V_1^{\pi_k}(x_1^k) \right)  \\
		&\leq & \sum_{k=1}^K \left(\min\left[H-h +1, \max_a Q_1^k(x_1^k,a)\right] - V_1^{\pi_k}(x_1^k) \right) = \sum_{k=1}^K \left(V_1^k(x_1^k,a) - V_1^{\pi_k}(x_1^k) \right)\;,
	\end{eqnarray*}
\end{proof}

\begin{definition}
	For any $(k,h)$, we define $(\tx_h^k, \ta_h^k)$ as state-action pair in the past data $\data_h$ that is the closest to  $(x_h^k, a_h^k)$, that is
	\begin{align*}
	& (\tx_h^k, \ta_h^k) \eqdef \argmin_{(x_h^s, a_h^s): s < k} \dist{(x_h^k, a_h^k), (x_h^s, a_h^s)}\,.
	\end{align*}
\end{definition}

\begin{fproposition}
	\label{prop:regret-aux-result}
	With probability $1-\delta$, the regret of $\lipucbvi$ is bounded as follows
	\begin{align*}
	\regret(K) \lesssim  & 
	H^2 \sigmacov + L_1  KH \sigma + \sum_{k=1}^K\sum_{h=1}^H \pa{1+\frac{1}{H}}^{h} \widetilde{\xi}_{h+1}^k \\
	&  + \sum_{k=1}^K\sum_{h=1}^H
	\pa{  
		\frac{H}{\sqrt{\gencount_h^k(\tx_h^k, \ta_h^k)}} + 
		\frac{H^2 \Xsigmacov}{\gencount_h^k(\tx_h^k, \ta_h^k)}
	}
	\indic{\dist{(\tx_h^k, \ta_h^k), (x_h^k, a_h^k)} \leq 2 \sigma} 
	\end{align*}
	where $\widetilde{\xi}_{h+1}^k$ is a martingale difference sequence with respect to $(\cF_h^k)_{k, h}$ such that $\abs{\widetilde{\xi}_{h+1}^k} \leq 4H$.
\end{fproposition}
\begin{proof}
	On $\favevent$, we have
	\begin{align*}
	\delta_h^k & = V_h^k(x_h^k) - V_h^{\pi_k}(x_h^k) \\
	& \leq Q_h^k(x_h^k,a_h^k) - Q_h^{\pi_k}(x_h^k,a_h^k) \\
	& \leq Q_h^k(\tx_h^k,\ta_h^k) - Q_h^{\pi_k}(x_h^k, a_h^k) 
	+  L_1 \dist{(\tx_h^k, \ta_h^k), (x_h^k, a_h^k)}, \mbox{ since $Q_h^k$ is $L_1$-Lipschitz } \\
	& \leq \widetilde{Q}_h^k(\tx_h^k,\ta_h^k) - Q_h^{\pi_k}(x_h^k,a_h^k) 
	+   L_1  \dist{(\tx_h^k, \ta_h^k), (x_h^k, a_h^k)},
	\mbox{ since $Q_h^k(\tx_h^k,\ta_h^k) \leq \widetilde{Q}_h^k(\tx_h^k,\ta_h^k)$ by definition of $Q_h^k$ } \\
	& = \estR_h^k(\tx_h^k, \ta_h^k) - \reward_h(x_h^k, a_h^k) 
	+  L_1  \dist{(\tx_h^k, \ta_h^k), (x_h^k, a_h^k)} 
	+ \bonus_h^k(\tx_h^k,\ta_h^k) 
	+  \estP_h^k V_{h+1}^k(\tx_h^k, \ta_h^k)-P_h V_{h+1}^{\pi_k}(x_h^k, a_h^k)  \\
	&  = \underbrace{ 
	     \estR_h^k(\tx_h^k, \ta_h^k) - \reward_h(x_h^k, a_h^k) 
	    }_{\termA}
	 + 
	     \underbrace{ 
	     	\sqrbrackets{\estP_h^k - P_h} V_{h+1}^*(\tx_h^k,\ta_h^k)
	      }_{\termB}
	 + \underbrace{ 
		  \sqrbrackets{\estP_h^k - P_h}\pa{V_{h+1}^k-V_{h+1}^*}(\tx_h^k, \ta_h^k)
		}_{\termC}
	 \\ 
	& + \underbrace{ \trueP_h V_{h+1}^k(\tx_h^k, \ta_h^k) - P_h V_{h+1}^{\pi_k}(x_h^k, a_h^k) 
    	}_{\termD}	
	 +  L_1  \dist{(\tx_h^k, \ta_h^k), (x_h^k, a_h^k)}	+ \bonus_h^k(\tx_h^k,\ta_h^k) 
	\end{align*}
	
	Now, let's bound each of the terms $\termA, \termB, \termC$ and $\termD$

	\paragraph{Term $\termA$} Using the fact that $\trueR_h$ is $\lambda_r$-Lipschitz and the definition of $\favevent$:
	\begin{align*}
		\termA & = \estR_h^k(\tx_h^k, \ta_h^k) - \reward_h(x_h^k, a_h^k)  \leq \lambda_r\dist{(\tx_h^k, \ta_h^k), (x_h^k, a_h^k)} + \rbonus_h^k(\tx_h^k, \ta_h^k) \\
		& \lesssim 
		 \lambda_r\dist{(\tx_h^k, \ta_h^k), (x_h^k, a_h^k)}
		 +   
		 \sqrt{\frac{1}{\gencount_h^k(x,a)}}
		 + \frac{\beta}{\gencount_h^k(x,a)} + L_1\sigma
		 .
	\end{align*}
	
	\paragraph{Term $\termB$} Using the definition of $\favevent$:
	\begin{align*}
		\termB = \sqrbrackets{\estP_h^k - P_h} V_{h+1}^*(\tx_h^k,\ta_h^k) \lesssim 
		\sqrt{\frac{H^2}{\gencount_h^k(x,a)}}
		+ \frac{\beta H}{\gencount_h^k(x,a)} + L_1\sigma
	\end{align*}
	
	\paragraph{Term $\termC$} Using again the definition of $\favevent$, where $V_{h+1}^k\geq V_{h+1}^*$, and the fact that $V_{h+1}^*\geq V_{h+1}^{\pi_k}$:
	\begin{align*}
		\termC 
		& = \sqrbrackets{\estP_h^k - P_h}\pa{V_{h+1}^k-V_{h+1}^*}(\tx_h^k, \ta_h^k) \\
		& \lesssim \frac{1}{H}\trueP_h\pa{V_{h+1}^k-V_{h+1}^*}(\tx_h^k, \ta_h^k)
		+ \frac{H^2\Xsigmacov}{\gencount_h^k(\tx_h^k, \ta_h^k)} 
		+ L_1\sigma \\
		& \leq \frac{1}{H}\trueP_h\pa{V_{h+1}^k-V_{h+1}^*}(x_h^k, a_h^k) + 2 \lambda_p L_1\dist{(\tx_h^k, \ta_h^k), (x_h^k, a_h^k)}
		+ \frac{H^2\Xsigmacov}{\gencount_h^k(\tx_h^k, \ta_h^k)} 
		+ L_1\sigma \\
		& \lesssim \frac{1}{H}\trueP_h\pa{V_{h+1}^k-V_{h+1}^{\pi_k}}(x_h^k, a_h^k) +  L_1\dist{(\tx_h^k, \ta_h^k), (x_h^k, a_h^k)}
		+ \frac{H^2\Xsigmacov}{\gencount_h^k(\tx_h^k, \ta_h^k)} 
		+ L_1\sigma \\
		& = \frac{1}{H}\pa{ \delta_{h+1}^k + \xi_{h+1}^k}  +  L_1\dist{(\tx_h^k, \ta_h^k), (x_h^k, a_h^k)}
		+ \frac{H^2\Xsigmacov}{\gencount_h^k(\tx_h^k, \ta_h^k)} 
		+ L_1\sigma
	\end{align*}
	where 
	\begin{align*}
		\xi_{h+1}^k = \trueP_h\pa{V_{h+1}^k-V_{h+1}^{\pi_k}}(x_h^k, a_h^k) - \delta_{h+1}^k
	\end{align*}
	is a martingale difference sequence with respect to $(\cF_h^k)_{k,h}$ bounded by $4H$.

	\paragraph{Term $\termD$} We have 
	\begin{align*}
		\termD & = \trueP_h V_{h+1}^k(\tx_h^k, \ta_h^k) - P_h V_{h+1}^{\pi_k}(x_h^k, a_h^k) \\
		& \leq \lambda_p L_1 \dist{(\tx_h^k, \ta_h^k), (x_h^k, a_h^k)} + \trueP_h V_{h+1}^k(x_h^k, a_h^k) - P_h V_{h+1}^{\pi_k}(x_h^k, a_h^k) \\
		& = \delta_{h+1}^k + \xi_{h+1}^k + \lambda_p L_1 \dist{(\tx_h^k, \ta_h^k), (x_h^k, a_h^k)}.
	\end{align*}
	
	Putting together the bounds above, we obtain
	\begin{align*}
		\delta_h^k \lesssim
		 \pa{1+\frac{1}{H}}\pa{\delta_{h+1}^k + \xi_{h+1}^k}
		 +  L_1 \dist{(\tx_h^k, \ta_h^k), (x_h^k, a_h^k)}		
		 +  \sqrt{\frac{H^2}{\gencount_h^k(\tx_h^k, \ta_h^k)}}
		 +  \frac{H^2\Xsigmacov}{\gencount_h^k(\tx_h^k, \ta_h^k)} 
		 + L_1\sigma
	\end{align*}
	where the constant in front of $\delta_{h+1}^k$ is exact (not hidden by $\lesssim$). 
	
	Now, consider the event $E_h^k \eqdef \braces{ \dist{(\tx_h^k, \ta_h^k), (x_h^k, a_h^k)} \leq 2\sigma}$ and let $\overline{E}_h^k$ be its complement.  Using the fact that $\delta_{h+1}^k \geq 0$ on $\favevent$, the inequality above implies
	\begin{align}
		& \indic{E_h^k}  \delta_h^k  \lesssim
		\indic{E_h^k} \pa{1+\frac{1}{H}}\pa{\delta_{h+1}^k + \xi_{h+1}^k}
		+  3 L_1 \sigma 	
		+  \indic{E_h^k}\sqrt{\frac{H^2}{\gencount_h^k(\tx_h^k, \ta_h^k)}} 
		+  \indic{E_h^k} \frac{H^2\Xsigmacov}{\gencount_h^k(\tx_h^k, \ta_h^k)} 
		\nonumber
		\\
		& 
		\lesssim
		 \pa{1+\frac{1}{H}}\pa{\delta_{h+1}^k + \indic{E_h^k} \xi_{h+1}^k}
		+  3 L_1 \sigma 	
		+  \indic{E_h^k}\sqrt{\frac{H^2}{\gencount_h^k(\tx_h^k, \ta_h^k)}} 
		+  \indic{E_h^k} \frac{H^2\Xsigmacov}{\gencount_h^k(\tx_h^k, \ta_h^k)} \label{eq:trick-delta-one}.
	\end{align}
	Now, using the fact that $\delta_h^k \leq H$, we obtain
	\begin{align}
		& \delta_h^k 
		= \indic{E_h^k} \delta_h^k + \indic{\overline{E}_h^k} \delta_h^k 
		\label{eq:trick-delta-two}
		\\
		& \leq \indic{E_h^k} \delta_h^k + H \indic{\overline{E}_h^k}
		\nonumber
		 \\
		& \lesssim H \indic{\overline{E}_h^k} + \pa{1+\frac{1}{H}}\pa{\delta_{h+1}^k + \indic{E_h^k} \xi_{h+1}^k}
		+  3 L_1 \sigma 	
		+  \indic{E_h^k}\sqrt{\frac{H^2}{\gencount_h^k(\tx_h^k, \ta_h^k)}} 
		+  \indic{E_h^k} \frac{H^2\Xsigmacov}{\gencount_h^k(\tx_h^k, \ta_h^k)}.
		\nonumber
	\end{align}
	
	This yields
	\begin{align*}
		\delta_1^k   \lesssim & \sum_{h=1}^H  \indic{E_h^k} \pa{\sqrt{\frac{H^2}{\gencount_h^k(\tx_h^k, \ta_h^k)}}
		   +  \frac{H^2\Xsigmacov}{\gencount_h^k(\tx_h^k, \ta_h^k)} }
		   \\
		   &
		   + \sum_{h=1}^H \pa{1+\frac{1}{H}}^h   \indic{E_h^k} \xi_{h+1}^k 
		    + L_1 H \sigma
		   + H \sum_{h=1}^H \indic{\overline{E}_h^k}.
	\end{align*}
	
	Let $\widetilde{\xi}_{h+1}^k \eqdef \indic{E_h^k} \xi_{h+1}^k$. We can verify that $\widetilde{\xi}_{h+1}^k$ is a martingale difference sequence with respect to $(\cF_h^k)_{k,h}$ bounded by $4H$.
	
	Applying Corollary \ref{corollary:upper-bound-regret-with-deltas}, we obtain:
	\begin{align*}
		\regret(K) 
		\leq \sum_{k=1}^K \delta_1^k
		&  \lesssim
		 \sum_{k=1}^K\sum_{h=1}^H  \indic{E_h^k} \pa{\sqrt{\frac{H^2}{\gencount_h^k(\tx_h^k, \ta_h^k)}}
		 	+  \frac{H^2\Xsigmacov}{\gencount_h^k(\tx_h^k, \ta_h^k)} }
		 \\
		 &
		 +  \sum_{k=1}^K\sum_{h=1}^H \pa{1+\frac{1}{H}}^h   \widetilde{\xi}_{h+1}^k 
		 + L_1 K H \sigma
		 + H  \sum_{k=1}^K \sum_{h=1}^H \indic{\overline{E}_h^k}.
	\end{align*}

	Finally, we bound the sum
	\begin{align*}
		H \sum_{k=1}^K \sum_{h=1}^H \indic{\overline{E}_h^k}
		= H \sum_{h=1}^H \sum_{k=1}^K  \indic{ \dist{(\tx_h^k, \ta_h^k), (x_h^k, a_h^k)} > 2\sigma} \leq H^2 \sigmacov
	\end{align*}
	since, for each $h$, the number of episodes where the event $\braces{ \dist{(\tx_h^k, \ta_h^k), (x_h^k, a_h^k)} > 2\sigma}$ occurs is bounded by $\sigmacov$. Recalling the definition $E_h^k \eqdef \braces{ \dist{(\tx_h^k, \ta_h^k), (x_h^k, a_h^k)} \leq 2\sigma}$, this concludes the proof.

\end{proof}

\begin{fproposition}
	\label{prop:bounding-sum-of-bonuses}
	We have
	\begin{align*}
	\sum_{k=1}^K\sum_{h=1}^H \frac{1}{\gencount_h^k(\tx_h^k,\ta_h^k)}\indic{\dist{(\tx_h^k, \ta_h^k), (x_h^k, a_h^k)} \leq 2\sigma} 
	\lesssim H \sigmacov
	\end{align*}
	and
	\begin{align*}
	\sum_{k=1}^K\sum_{h=1}^H \frac{1}{\sqrt{\gencount_h^k(\tx_h^k,\ta_h^k)}}\indic{\dist{(\tx_h^k, \ta_h^k), (x_h^k, a_h^k)} \leq 2 \sigma}
	\lesssim  H \sigmacov  + H \sqrt{\sigmacov K}.
	\end{align*}
\end{fproposition}
\begin{proof}
	First, we will need some definitions. Let $\cC_\sigma = \braces{(x_j, a_j)\in\stateactionspace, j = 1,\ldots, \sigmacov}$ be a $\sigma$-covering of $(\stateactionspace, \distfunc)$. We define a partition $\braces{B_j}_{j=1}^{\sigmacov}$ of $\stateactionspace$ as follows:
	\begin{align*}
		B_j = \braces{ (x, a) \in \stateactionspace: (x_j, a_j) = \argmin_{(x_i, a_i)\in\cC_\sigma} \dist{(x,a), (x_i, a_i)} }
	\end{align*}
	where ties in the $\argmin$ are broken arbitrarily.
	
	We define the number of visits to each set $B_j$ as 
	$\partitioncount_h^k(B_j) \eqdef \sum_{s=1}^{k-1}\indic{(x_h^s, a_h^s) \in B_j}.$
	
	Now, assume that $(x_h^k, a_h^k)\in B_j$. If, in addition, $\dist{(\tx_h^k, \ta_h^k), (x_h^k, a_h^k)} \leq 2 \sigma$, we obtain
	\begin{align*}
		\gencount_h^k(\tx_h^k,\ta_h^k) & = \beta +  \sum_{s=1}^{k-1}\kernel_{\sigma}((\tx_h^k,\ta_h^k),(x_h^s, a_h^s)) 
		 = \beta + \sum_{s=1}^{k-1} \kernelfunc\pa{\frac{\dist{(\tx_h^k,\ta_h^k),(x_h^s, a_h^s)}}{\sigma}} \\
		& \geq \beta + \sum_{s=1}^{k-1} \kernelfunc\pa{\frac{\dist{(\tx_h^k,\ta_h^k),(x_h^s, a_h^s)}}{\sigma}} \indic{(x_h^s, a_h^s) \in B_j} \\
		& \geq \beta + \kernelfunc(4) \sum_{s=1}^{k-1}\indic{(x_h^s, a_h^s) \in B_j}
		=  \beta\pa{ 1 + \kernelfunc(4)\beta^{-1}\partitioncount_h^k(B_j)}
	\end{align*}
	since, if $(x_h^s, a_h^s) \in B_j$, we have $\dist{(\tx_h^k,\ta_h^k),(x_h^s, a_h^s)} \leq 4 \sigma$ and we use the fact that $\kernelfunc$ is non-increasing. 
	
	We are now ready to bound the sums involving $1/\gencount_h^k(\tx_h^k,\ta_h^k)$. We will use the fact that $\kernelfunc(4) > 0$ by Assumption \ref{assumption:kernel-behaves-as-gaussian}.

	\paragraph{Bounding the sum of the first order terms}
	\begin{align*}
	& \sum_{k=1}^K\sum_{h=1}^H \sqrt{\frac{1}{\gencount_h^k(\tx_h^k,\ta_h^k)}}\indic{\dist{(\tx_h^k, \ta_h^k), (x_h^k, a_h^k)} \leq 2\sigma} \\
	& = \sum_{k=1}^K\sum_{h=1}^H \sum_{j=1}^{\abs{\cC_\sigma}} \sqrt{\frac{1}{\gencount_h^k(\tx_h^k,\ta_h^k)}} \indic{\dist{(\tx_h^k, \ta_h^k), (x_h^k, a_h^k)} \leq 2\sigma} \indic{(x_h^k, a_h^k) \in B_j} \\
	& \leq \beta^{-1/2} \sum_{k=1}^K\sum_{h=1}^H \sum_{j=1}^{\abs{\cC_\sigma}} \frac{1}{\sqrt{1 + \kernelfunc(4)\beta^{-1}\partitioncount_h^k(B_j)}} \indic{\dist{(\tx_h^k, \ta_h^k), (x_h^k, a_h^k)} \leq 2\sigma} \indic{(x_h^k, a_h^k) \in B_j} \\
	& \leq \beta^{-1/2}\sum_{h=1}^H \sum_{j=1}^{\abs{\cC_\sigma}}\sum_{k=1}^K \frac{ \indic{(x_h^k, a_h^k) \in B_j}}{\sqrt{1 + \kernelfunc(4)\beta^{-1}\partitioncount_h^k(B_j)}}
	\leq \beta^{-1/2}\sum_{h=1}^H \sum_{j=1}^{\abs{\cC_\sigma}} \pa{ 1 +   \int_{0}^{\partitioncount_h^{K+1}(B_j)} \!\!\frac{\mathrm{d}z}{\sqrt{1+\kernelfunc(4)\beta^{-1} z}}} \mbox{ by Lemma \ref{lemma:aux-sum-of-bonuses}} \\
	& \leq \beta^{-1/2} H \abs{\cC_\sigma} + \frac{2\beta^{1/2}}{\kernelfunc(4)}\sum_{h=1}^H \sum_{j=1}^{\abs{\cC_\sigma}} \sqrt{1 + \kernelfunc(4)\beta^{-1} \partitioncount_h^{K+1}(B_j)} \\
	& \leq \beta^{-1/2} H \abs{\cC_\sigma} + \frac{2\beta^{1/2}}{\kernelfunc(4)} \sum_{h=1}^H \sqrt{ \abs{\cC_\sigma} } \sqrt{ \abs{\cC_\sigma} +  \kernelfunc(4)\beta^{-1} K} \quad \mbox{ by Cauchy-Schwarz inequality}\\
	& \leq H \pa{ \beta^{-1/2} + \frac{2\beta^{1/2}}{\kernelfunc(4)}}\abs{\cC_\sigma} + \frac{2H}{\kernelfunc(4)}\sqrt{ \kernelfunc(4)\abs{\cC_\sigma} K} \lesssim  H \sigmacov  + H \sqrt{\sigmacov K} \,.
	\end{align*}
	
	\paragraph{Bounding the sum of the second order terms}

	\begin{align*}
	& \sum_{k=1}^K\sum_{h=1}^H \frac{1}{\gencount_h^k(\tx_h^k,\ta_h^k)}\indic{\dist{(\tx_h^k, \ta_h^k), (x_h^k, a_h^k)} \leq 2\sigma} \\
	& = \sum_{k=1}^K\sum_{h=1}^H \sum_{j=1}^{\abs{\cC_\sigma}} \frac{1}{\gencount_h^k(\tx_h^k,\ta_h^k)} \indic{\dist{(\tx_h^k, \ta_h^k), (x_h^k, a_h^k)} \leq 2\sigma} \indic{(x_h^k, a_h^k) \in B_j} \\
	& \leq \beta^{-1} \sum_{k=1}^K\sum_{h=1}^H \sum_{j=1}^{\abs{\cC_\sigma}} \frac{1}{1 + \kernelfunc(4)\beta^{-1}\partitioncount_h^k(B_j)} \indic{\dist{(\tx_h^k, \ta_h^k), (x_h^k, a_h^k)} \leq 2\sigma} \indic{(x_h^k, a_h^k) \in B_j} \\
	& \leq \beta^{-1}\sum_{h=1}^H \sum_{j=1}^{\abs{\cC_\sigma}}\sum_{k=1}^K \frac{ \indic{(x_h^k, a_h^k) \in B_j}}{1 + \kernelfunc(4)\beta^{-1}\partitioncount_h^k(B_j)}
	\leq \beta^{-1}\sum_{h=1}^H \sum_{j=1}^{\abs{\cC_\sigma}} \pa{ 1 +   \int_{0}^{\partitioncount_h^{K+1}(B_j)} \frac{\mathrm{d}z}{1+\kernelfunc(4)\beta^{-1} z}} \quad \mbox{ by Lemma \ref{lemma:aux-sum-of-bonuses}} \\
	& \leq \beta^{-1} H \abs{\cC_\sigma} + \frac{1}{\kernelfunc(4)}\sum_{h=1}^H \sum_{j=1}^{\abs{\cC_\sigma}} \log\pa{1 + \kernelfunc(4)\beta^{-1} \partitioncount_h^{K+1}(B_j)} \\
	& \leq \beta^{-1} H \abs{\cC_\sigma} + \frac{1}{\kernelfunc(4)} \sum_{h=1}^H \abs{\cC_\sigma} \log \pa{  \frac{\sum_{j=1}^{\abs{\cC_\sigma}} \pa{1 +  \kernelfunc(4)\beta^{-1} \partitioncount_h^{K+1}(B_j) }}{\abs{\cC_\sigma} }  }  \quad \mbox{ by Jensen's inequality}\\
	& \leq \beta^{-1} H \abs{\cC_\sigma} +  \frac{1}{\kernelfunc(4)} H \abs{\cC_\sigma}\log \pa{  1 + \frac{1 +  \kernelfunc(4)\beta^{-1} K }{\abs{\cC_\sigma} }  } 
	\lesssim H \sigmacov\,.
	\end{align*}
\end{proof}

\begin{ftheorem}
	\label{theorem:regret-bound}
	With probability at least $1-\delta$, the regret of $\lipucbvi$ is bounded as
	\begin{align*}
		\regret(K) \lesssim  &   H^2 \sqrt{\sigmacov K}
		+ H^3 \sigmacov\Xsigmacov 
		+ H^{3/2}\sqrt{K} +  L_1  KH \sigma + H^2 \sigmacov,
	\end{align*}	
	where $\sigmacov$ and $\Xsigmacov$ are the $\sigma$-covering numbers of $(\stateactionspace,\distfunc)$ and $(\statespace, \Sdistfunc)$, respectively.
\end{ftheorem}

\begin{proof}
	The result follows from propositions \ref{prop:regret-aux-result} and \ref{prop:bounding-sum-of-bonuses} and from Hoeffding-Azuma's inequality, which ensures that the term $\sum_{k=1}^K\sum_{h=1}^H (1+1/H)^{H} \widetilde{\xi}_{h+1}^k$ is bounded by $$(\sqrt{8e^2H^2\log(2/\delta)})\sqrt{KH}$$ with probability at least $1-\delta/2$.
\end{proof}

\subsection{Proof of Corollary \ref{corollary:main-text-corollary}}
\label{sec:app-proof-corollary}

Assumption \ref{assumption:metric-state-space} states that $ \dist{(x, a), (x', a')} = \Sdist{x, x'} + \Adist{a, a'}$, which implies that $\Xsigmacov \leq \sigmacov$. Using Theorem \ref{theorem:regret-bound-restated} and the fact that $\sigmacov = \BigO{\sigma^{-d}}$, we obtain $\regret(K) = \BigOtilde{H^2 \sigma^{-d/2}\sqrt{K} + H^3 \sigma^{-2d} + H K \sigma }$. Taking $\sigma = (1/K)^{1/(2d+1)}$, we see that the regret is $\BigOtilde{ H^2 K^{\frac{3d+1}{4d+2}} + H^3K^{\frac{2d}{2d+1}}}$. The fact that $(3d+1)/(4d+2) \leq 2d/(2d+1)$ for $d \geq 1$ allows us to conclude.

\section{Remarks \& regret bounds in different settings} \label{app:variants}

\subsection{Improved regret for stationary MDPs}

The regret bound of \lipucbvi can be improved if the MDP is stationary, \ie $P_1=\ldots=P_H$ and $r_1=\ldots=r_H$.
Let $t = kh$ be the \emph{total time} at step $h$ of episode $k$, and now we index by $t$ all the quantities that were indexed by $(k,h)$, \eg  $w_t(x,a) = w_h^k(x,a)$. In the stationary case, the rewards and transitions estimates become
\begin{align*}
\widehat{\transition}_t(y|x, a) \eqdef  \frac{1}{\gencount_t (x, a)}\sum_{t'=1}^{t-1} w_{t'}(x,a)  \dirac{x_{t'+1}}(y) \quad \mbox{ and } \quad \widehat{\reward}_t(x, a) \eqdef  \frac{1}{\gencount_t (x, a)} \sum_{t'=1}^{t-1} w_{t'}(x,a) \reward_{t'} \,,
\end{align*}
respectively, where we redefine the generalized counts as
\begin{align*}
\gencount_t (x, a) \eqdef \beta + \sum_{t'=1}^{t-1}w_{t'}(x,a).
\end{align*}

The proofs of the concentration results and of the regret bound remain valid, in particular Proposition \ref{prop:regret-aux-result}, up to minor changes in the constants $\bonusvarP(k,h), \bonusbiasP(k,h), \bonusvarR(k,h), \bonusbiasR(k,h),  \bernvar(k, h)$  and $\bernbias(k, h)$ . However, the bounds presented in Proposition \ref{prop:bounding-sum-of-bonuses} can be improved to obtain a better regret bound in terms of the horizon $H$. Consider the sets $B_j$ introduced in the proof of Proposition \ref{prop:bounding-sum-of-bonuses} and let
\begin{align*}
	\partitioncount_t(B_j) \eqdef \sum_{t'=1}^{t-1}\indic{(x_t, a_t) \in B_j}.
\end{align*}
As we did in the proof Proposition \ref{prop:bounding-sum-of-bonuses}, we can show that
$\gencount_t(\tilde{x}_t,\tilde{a}_t) \geq \beta + \kernelfunc(4)\partitioncount_t(B_j)$
if $(x_t, a_t) \in B_j $ and $\dist{(\tilde{x}_t, \tilde{a}_t), (x_t, a_t)} \leq 2\sigma$.
The sum of the first order terms $\sum_{t} 1/\sqrt{\gencount_t(\tilde{x}_t,\tilde{a}_t)}$ is now bounded as
\begin{align*}
& \sum_{t=1}^{KH} \sqrt{\frac{1}{\gencount_t(\tilde{x}_t,\tilde{a}_t)}}\indic{\dist{(\tilde{x}_t, \tilde{a}_t), (x_t, a_t)} \leq 2\sigma} \\
& \leq \beta^{-1} \sum_{j=1}^{\abs{\cC_\sigma}}\sum_{t=1}^{KH} \frac{ \indic{(x_t, a_t) \in B_j}}{\sqrt{1 + \kernelfunc(4)\beta^{-1}\partitioncount_t(B_j)}}
\leq \beta^{-1} \sum_{j=1}^{\abs{\cC_\sigma}} \pa{ 1 +   \int_{0}^{\partitioncount_{KH+1}(B_j)} \frac{\mathrm{d}z}{\sqrt{1+\kernelfunc(4)\beta^{-1} z}}} \quad \mbox{ by Lemma \ref{lemma:aux-sum-of-bonuses}} \\
& \leq \beta^{-1} \abs{\cC_\sigma} + \frac{2}{\kernelfunc(4)} \sum_{j=1}^{\abs{\cC_\sigma}} \sqrt{1 + \kernelfunc(4)\beta^{-1} \partitioncount_{KH+1}(B_j)} \\
& \leq \beta^{-1} \abs{\cC_\sigma} + \frac{2}{\kernelfunc(1)} \sqrt{ \abs{\cC_\sigma} } \sqrt{ \abs{\cC_\sigma} +  \kernelfunc(4)\beta^{-1} KH} \quad \mbox{ by Cauchy-Schwarz inequality}\\
& \leq \pa{ \beta^{-1} + \frac{2}{\kernelfunc(4)}}\abs{\cC_\sigma} + \frac{2}{\kernelfunc(4)}\sqrt{ \kernelfunc(4)\beta^{-1}\abs{\cC_\sigma} H K} \\
& = \BigO{ \abs{\cC_\sigma} + \sqrt{\abs{\cC_\sigma} HK} }.
\end{align*}
When compared to the non-stationary case, where the corresponding sum is bounded by $\BigO{ H \abs{\cC_\sigma} + H \sqrt{\abs{\cC_\sigma} K} }$, we gain a factor of $\sqrt{H}$ in the term multiplying $\sqrt{K}$ and a factor of $H$ in the term multiplying $\abs{\cC_\sigma}$.

Similarly, the sum of the second order terms $\sum_{t} 1/\gencount_t(\tilde{x}_t,\tilde{a}_t)$ is now bounded as

\begin{align*}
\sum_{t=1}^{KH} \frac{1}{\gencount_t(\tilde{x}_t,\tilde{a}_t)}\indic{\dist{(\tilde{x}_t, \tilde{a}_t), (x_t, a_t)} \leq 2\sigma}
& \leq \beta^{-1} \abs{\cC_\sigma} +  \frac{1}{\kernelfunc(4)} \abs{\cC_\sigma}\log \pa{  1 + \frac{1 +  \kernelfunc(4)\beta^{-1} KH }{\abs{\cC_\sigma} }  }\\
& = \BigOtilde{\abs{\cC_\sigma}} \,.
\end{align*}
In the non-stationary case, the corresponding sum is bounded by $\BigOtilde{H \abs{\cC_\sigma}}$, thus we gain a factor of $H$.

Hence, if the MDP is stationary, we obtain a regret bound of
\begin{align*}
\regret_{\mathrm{stationary}}(K) = \BigOtilde{ H^{3/2} \sqrt{\abs{\cC_\sigma}K} + L_1 H K \sigma + H^2 \abs{\cC_\sigma}^2}
\end{align*}
which is $\BigOtilde{H^2 K^{\max\left(\frac{1}{2}, \frac{2d}{2d+1}\right)}}$ by taking $\sigma = (1/K)^{1/(2d+1)}$.

\subsubsection{Important remark}

Computationally, in order to achieve this improved regret for \lipucbvi, every time a new transition and a new reward are observed at a step $h$, the estimates $\widehat{\transition}_t(y|x, a)$ and $\widehat{\reward}_t(x, a)$ need to be updated, and the optimistic $Q$-functions need to be recomputed through backward induction, which increases the computational complexity by a factor of $H$.

The UCBVI-CH algorithm of \cite{Azar2017} in the tabular setting for stationary MDPs also suffers from this problem. If the optimistic $Q$-function is not recomputed at every step $h$, its regret is $\BigOtilde{H^{3/2}\sqrt{XAK} + H^3 X^2 A}$ and not $\BigOtilde{H^{3/2}\sqrt{XAK} + H^2 X^2 A}$, where $X$ is the number of states, as claimed in their paper. To see why, let's analyze its second order term, which is $\BigO{H^2 X \sum_{k,h} 1/N_k(x_h^k,a_h^k)}$\footnote{See page 7 of \cite{Azar2017}.}, where $N_k(x,a)$ is the number of visits to $(x, a)$ \emph{before} episode $k$, \ie

$$N_k(x,a) = \max\pa{1, \sum_{h=1}^H\sum_{s=1}^{k-1} \indic{(x_h^s, a_h^s) = (x, a)}}.$$

If $K \geq XA$, and if $N_k(x,a)$ is updated \textbf{only at the end} of each episode, we can show that there exists a sequence $(x_h^k, a_h^k)$ such that the sum $\sum_{k,h} 1/N_k(x_h^k,a_h^k)$ is greater than $HXA$. Let $(x_k, a_k)_{k\in[XA]}$ be $XA$ \emph{distinct} state-action pairs, and take the sequence $(x_h^k, a_h^k)_{h\in[H], k\in[XA]}$ such that $(x_h^k, a_h^k) = (x_k, a_k)$. That is, in each of the $XA$ episodes, the algorithm visits, in each of the $H$ steps, \emph{only one} state-action pair that \emph{has never been visited before}. Since $N_k(x,a)$ is updated only at the end of the episodes, we have $N_k(x_h^k,a_h^k) = 1$ for all $h \in[H]$ and $k \in [XA]$, with this choice of $(x_h^k,a_h^k)_{h, k}$. Hence,
\begin{align*}
	H^2 X \sum_{k=1}^{XA}\sum_{h=1}^H \frac{1}{N_k(x_h^k,a_h^k)} = H^2 X \sum_{k=1}^{XA}\sum_{h=1}^H 1 = H^3 X^2 A.
\end{align*}

 Consequently, the sum of second order term is lower bounded (in a worst case sense) by $H^3 X^2 A$ and cannot be $\BigOtilde{H^2 X^2 A}$ as claimed by \cite{Azar2017}, since their bound \emph{must hold for any possible sequence} $(x_h^k,a_h^k)_{h, k}$. An application of Lemma \ref{lemma:aux-sum-of-bonuses} with $c = H$ can be used to show that the second order term is indeed $\BigOtilde{H^3 X^2 A}$ when updates are done at the end of the episodes only. 
 
 To gain a factor of $H$ (\ie have $\BigOtilde{H^2 X^2 A}$ as second order term), one solution is to update the counts $N_k(x_h^k,a_h^k)$ every time a new state-action pair is observed, and recompute the optimistic $Q$-function. Another solution is to recompute it every time the number of visits of the current state-action pair is \emph{doubled}, as done by \cite{jaksch2010near} in the average-reward setting.

The efficient version of our algorithm, \lipucbvigreedy, does not suffer from this increased computational complexity in the stationary case. This is due to the fact that the value functions are updated in real time, and there is no need to run a backward induction every time a new transition is observed. Hence, in the stationary case, \lipucbvigreedy has a regret bound that is $H$ times smaller than in the non-stationary case, \emph{without} an increase in the computational complexity.

\subsection{Dependence on the Lipschitz constant \& regularity \wrt the total variation distance}
Notice that the regret bound of \lipucbvi has a linear dependency on $L_1$ that appears in the bias term $L_1 H K \sigma$:
\begin{align*}
\regret(K) \leq \BigOtilde{H^2 \sqrt{\abs{\cC_\sigma}K} + L_1 H K \sigma + H^3 \sigmacov\Xsigmacov + H^2\abs{\cC_\sigma} }\,.
\end{align*}
As long as the Lipschitz constant $L_1 = \sum_{h= 1}^H \lambda_r \lambda_p^{H-h}$ is $\BigO{H}$ or $\BigO{H^2}$, our regret bound has no additional dependency on $H$. However, if $\lambda_p > 1$, the constant $L_1 $ can be exponential in $H$.
This issue is caused by the smoothness of the MDP and not by algorithmic design. With minor modifications to our proof, we could also consider that the transitions are Lipschitz with respect to the total variation distance, in which case $L_1$ would always be $\BigO{H}$ and the regret of \lipucbvi would remain $\BigOtilde{H^3 K^{\max\left(\frac{1}{2}, \frac{2d}{2d+1}\right)}}$ by taking $\sigma = (1/K)^{1/(2d+1)}$.
The regret bounds of other algorithms for Lipschitz MDPs also depend on the Lipschitz constant, which always appears in a bias term (\eg \cite{ortner2012online}).

In addition, the value $L_h = \sum_{h'= h}^H \lambda_r \lambda_p^{H-h'}$ represents simply an upper bound on the Lipschitz constant of the $Q$-function $Q_h^*$. If the functions $Q_h^*$ for $h\in[H]$ are $\widetilde{L}_h$-Lipschitz with $\widetilde{L}_h$ known and such that $\widetilde{L}_h < L_h$, \lipucbvi could exploit the knowledge of $\widetilde{L}_h$ and use it instead of $L_h$, which would also improve the regret bound. For instance, if all rewards functions $r_h$ are $0$ except for $r_H$, we could use $\widetilde{L}_h = \lambda_r$, the Lipscthiz constant of $r_H$, which is independent of $H$.

\section{Efficient implementation}
\label{app:efficient_implementation}

In this Appendix, following \cite{efroni2019tight}, we show that if we only apply the optimistic Bellman operator once instead of doing a complete value iteration we obtain almost the same guaranties as for Algorithm~\ref{alg:lipucbvi} but with a large improvement in computational complexity. Indeed, the time complexity of each episode $k$ is reduced from $O(k^2)$ to $O(k)$. This complexity is comparable to other model-based algorithm in structured MDPs~\citep[\eg][]{Jin2019}.

The algorithm goes as follows. Assume we are at episode $k$ at step $h$ at state $x_h^k$. To compute the next action we will apply the optimistic Bellman operator to the previous value function. That is, for all $a\in\actionspace$ we compute the upper bounds on the $Q$-value based on a kernel estimator:
\[\widetilde{Q}_h^k(x_h^k, a) = \widehat{\reward}_h^k(x, a) + \widehat{P}_h^k V_{h+1}^{k}(x, a) + \bonus_h^k(x, a)\,.\]
Then we act greedily
\[
a_h^k = \argmax_{a\in\actionspace} \widetilde{Q}_h^k(x_h^k, a) \,,
\]
and define a new optimistic target $ \tV_h^k(x_h^k) = \min\big(H-h+1,\widetilde{Q}_h^k(x_h^k, a_h^k)\big)$ for the value function at state $x_h^k$. Then we build an optimistic value function $V_h^k$ by interpolating the previous optimistic target and the new one we just defined
\[
\forall x, V_h^{k+1}(x) =\min\!\!\left( \min_{s\in[k-1]}\left[ V_h^{k}(x_h^s) + L_h\Sdist{x,x_h^s} \right], \tV_h^k(x_h^k) + L_h\Sdist{x,x_h^k}  \right)\,.
\]
The complete procedure is detailed in Algorithm~\ref{alg:lipucbvigreedy.maintext}.

\begin{algorithm}[t]
	\caption{\lipucbvigreedy}
	\label{alg:lipucbvigreedy.maintext}
	\begin{small}
		\begin{algorithmic}
			\STATE {\bfseries Input:} global parameters $K, H, \delta, \lambda_r, \lambda_p, \sigma, \beta$
			\STATE initialize $\data_h = \emptyset$ and $V_h^1(x) = H-h+1$, for all $h \in [H]$
			\FOR{episode $k = 1, \ldots, K$}
				\STATE get initial state $x_1^k$ 
				\FOR{step $h=1, \ldots, H$}
					\STATE \textcolor{darkgreen}{// define, for all $a$: }
					\STATE $\widetilde{Q}_h^k(x_h^k, a) = \sum_{s=1}^{k-1} \widetilde{w}_h^s(x_h^k, a) \pa{r_h^s + V_{h+1}^k(x_{h+1}^s)} + \bonus_h^k(x_h^k, a)$ 
					\STATE execute $a_h^k = \argmax_a \widetilde{Q}_h^k(x_h^k, a)$
					\STATE observe $r_h^k$ and $x_{h+1}^k$
					\STATE $\tV_h^k(x_h^k) = \min\pa{H-h+1, \max_{a\in\actionspace}\widetilde{Q}_h^k(x_h^k,a)}$ 
					\STATE \textcolor{darkgreen}{// interpolate: define $V_h^{k+1}$ for all $x \in \data_h$ as}
					\STATE $V_h^{k+1}(x) =\min\Big( \min_{s\in[k-1]}\left[ V_h^{k}(x_h^s) + L_h\Sdist{x,x_h^s} \right],\tV_h^k(x_h^k) + L_h\Sdist{x,x_h^k}  \Big)$
					\STATE add $(x_h^k, a_h^k, x_{h+1}^k, r_h^k)$ to $\data_h$
				\ENDFOR
			\ENDFOR 
		\end{algorithmic}
	\end{small}
\end{algorithm}

\begin{proposition}[Optimism]
	\label{prop:optimism_greedy}
	In the event $\favevent$, whose probability is greater than $1-\delta$, we have:
	\begin{align*}
	\forall (k, h), \forall x, \; V_h^k(x) \geq V_h^*(x) \text{ and } V_h^k(x) \geq V_h^{k+1}(x)\,.
	\end{align*}
\end{proposition}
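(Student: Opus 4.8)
The plan is to establish the two claims separately: point-wise monotonicity $V_h^k(x)\ge V_h^{k+1}(x)$ follows directly from the update rule, while optimism $V_h^k(x)\ge V_h^*(x)$ is proved by induction on the episode index $k$. For monotonicity, recall that the algorithm sets $V_h^{k+1}(x)=\min\!\big(V_h^k(x),\,\tV_h^k(x_h^k)+L_h\Sdist{x,x_h^k}\big)$, so $V_h^{k+1}(x)\le V_h^k(x)$ for every $x$, $h$, and $k$; equivalently, unrolling the recursion shows $V_h^{k+1}$ is the running $L_h$-Lipschitz lower envelope $\min\big(H-h+1,\ \min_{s\le k}[\tV_h^s(x_h^s)+L_h\Sdist{x,x_h^s}]\big)$, which can only decrease as $k$ grows.

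For optimism I would run an induction over $k$ with hypothesis $\mathcal{H}_k$: ``$V_h^k(x)\ge V_h^*(x)$ for all $h\in[H+1]$ and all $x$''. The base case $\mathcal{H}_1$ holds since $V_h^1(x)=H-h+1\ge V_h^*(x)$ (optimal values are sums of at most $H-h+1$ rewards in $[0,1]$), and $V_{H+1}^k=0=V_{H+1}^*$ at every $k$. For the step, assume $\mathcal{H}_k$, fix $h$ and $a$, and first show $\widetilde{Q}_h^k(x_h^k,a)\ge Q_h^*(x_h^k,a)$ on $\favevent$ via the same decomposition as in Proposition~\ref{prop:optimism}:
\begin{align*}
\widetilde{Q}_h^k(x_h^k,a)-Q_h^*(x_h^k,a)
&= \underbrace{\big(\estR_h^k-\reward_h\big)+\big(\estP_h^k-\trueP_h\big)V_{h+1}^*+\bonus_h^k}_{\ge 0\text{ on }\favevent} \\
&\quad+\underbrace{\estP_h^k\big(V_{h+1}^k-V_{h+1}^*\big)(x_h^k,a)}_{\ge 0\text{ by }\mathcal{H}_k},
\end{align*}
where the first group is nonnegative because $\favevent_1$, $\favevent_2$ and $\bonus_h^k=\rbonus_h^k+\pbonus_h^k$ exactly cancel the reward and transition deviations, and the second is nonnegative because $\estP_h^k$ is a positive operator (nonnegative weights) and $V_{h+1}^k\ge V_{h+1}^*$ by $\mathcal{H}_k$. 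Taking the maximum over $a$ and using $V_h^*(x_h^k)\le H-h+1$ yields $\tV_h^k(x_h^k)=\min\big(H-h+1,\max_a\widetilde{Q}_h^k(x_h^k,a)\big)\ge V_h^*(x_h^k)$. Since $V_h^*$ is $L_h$-Lipschitz (Lemma~\ref{lemma:q_function_is_lipschitz}), $\tV_h^k(x_h^k)+L_h\Sdist{x,x_h^k}\ge V_h^*(x_h^k)+L_h\Sdist{x,x_h^k}\ge V_h^*(x)$; combined with $V_h^k(x)\ge V_h^*(x)$ from $\mathcal{H}_k$, the update gives $V_h^{k+1}(x)\ge V_h^*(x)$, establishing $\mathcal{H}_{k+1}$.

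The main obstacle, and the point that differs from the backward induction used for the non-greedy algorithm in Proposition~\ref{prop:optimism}, is getting the induction order right: here the value functions are refined incrementally in a forward sweep over $h$, so one cannot assume that $V_{h+1}^{k+1}\ge V_{h+1}^*$ has already been proved when updating step $h$. The key structural observation that resolves this is that the update producing $V_h^{k+1}$ uses only $V_{h+1}^k$ (the step-$(h+1)$ value function from the \emph{previous} episode) and $V_h^k$, never $V_{h+1}^{k+1}$. This ``staleness'' decouples the within-episode dependencies, so the outer hypothesis $\mathcal{H}_k$ — which already asserts $V_{h+1}^k\ge V_{h+1}^*$ — supplies exactly the positivity of $\estP_h^k(V_{h+1}^k-V_{h+1}^*)$ needed above, and no within-episode ordering is required. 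I would finally confirm that $\favevent$ (from Theorem~\ref{theorem:favorable-event}) is the event on which the bounds defining $\favevent_1,\favevent_2$ hold simultaneously for all $(x_h^k,a,h,k)$, so that the bracketed nonnegativity is valid uniformly.
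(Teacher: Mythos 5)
Your proof is correct, and its ingredients are the same as the paper's: monotonicity read off directly from the min-update, the decomposition of $\widetilde{Q}_h^k - Q_h^*$ into deviation terms cancelled by $\bonus_h^k$ on $\favevent$ plus the nonnegative term $\estP_h^k\big(V_{h+1}^k - V_{h+1}^*\big)$, and the $L_h$-Lipschitz continuity of $V_h^*$ to push optimism at $x_h^k$ through the interpolation cone $\tV_h^k(x_h^k) + L_h\Sdist{x,x_h^k}$. Where you genuinely depart is the induction structure: the paper runs a double induction (outer on $k$, inner backward on $h$, mirroring Proposition~\ref{prop:optimism}), whereas you run a single induction on $k$, justified by your ``staleness'' observation that the update producing $V_h^{k+1}$ reads only $V_{h+1}^k$ and $V_h^k$, never $V_{h+1}^{k+1}$. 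That observation is accurate for the forward sweep of \lipucbvigreedy, and it shows the paper's inner induction on $h$ is in fact redundant: the fact it supplies, $V_{h+1}^k \geq V_{h+1}^*$, is already contained in the outer hypothesis $\mathcal{H}_k$. So your version is a clarifying simplification of the same proof rather than a different route; it also sidesteps the slight index ambiguity in the paper's write-up (whose final display conflates $V_h^k$ with $V_h^{k+1}$), and your base case $V_h^1(x) = H-h+1$ matches the algorithm's initialization, where the paper's proof writes $H-h$ (a harmless typo). One cosmetic caveat: your ``unrolled envelope'' formula for $V_h^{k+1}$ matches the main-text recursion $V_h^{k+1}(x) = \min\big(V_h^k(x), \tV_h^k(x_h^k)+L_h\Sdist{x,x_h^k}\big)$, which is also the form the paper's proof uses, though the pseudocode box states a slightly different (re-evaluated) interpolation; your argument is insensitive to this difference since it only uses the min with the previous function.
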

\begin{proof}

	To show that $V_h^k(x) \geq V_h^{k+1}(x)$, notice that
	\[
	\forall x, V_h^{k+1}(x) =\min\pa{ V_h^k(x), \tV_h^k(x_h^k) + L_h\Sdist{x,x_h^k}  } \leq V_h^k(x)
	\]
	since, by definition, $ V_h^k(x) = \min_{s\in[k-1]}\left[ V_h^{k}(x_h^s) + L_h\Sdist{x,x_h^s} \right]$.

	To show that $V_h^k(x) \geq V_h^*(x)$, we proceed by induction on $k$. For $k=1$, $V_h^k(x) = H-h \geq V_h^*(x)$ for all $x$ and $h$.

	Now, assume that $V_h^{k-1} \geq V_h^*$ for all $h$. As in the proof of Proposition \ref{prop:optimism}, we prove that $V_h^k \geq V_h^*$ by induction on $h$.
	For $h = H+1$, $V_h^k(x) = V_h^*(x) = 0$ for all $x$. Now, assume that
	$V_{h+1}^k(x) \geq V_{h+1}^*(x)$ for all $x$.
	We have, for all $(x, a)$,
	\begin{align*}
		\widetilde{Q}_h^k(x, a) & = \widehat{\reward}_h^k(x, a) + \widehat{P}_h^k V_{h+1}^{k}(x, a) + \bonus_h^k(x, a) \\
		& \geq \widehat{\reward}_h^k(x, a) + \widehat{P}_h^k V_{h+1}^*(x, a) + \bonus_h^k(x, a) \quad \mbox{ by induction hypothesis on } h \\
		& \geq \reward_h(x, a) + P_h V_{h+1}^{*}(x, a) = Q_h^*(x, a)  \quad \mbox{ in } \favevent
	\end{align*}
	which implies that $\tV_h^k(x_h^k) \geq V_h^{*}(x_h^k)$ and, consequently,
	\begin{align*}
		& \tV_h^k(x_h^k)  + L_h\Sdist{x,x_h^k} \geq V_h^{*}(x_h^k)  + L_h\Sdist{x,x_h^k} \geq V_h^{*}(x) \\
		& \implies V_h^{k}(x) =\min\pa{ V_h^{k-1}(x), \tV_h^k(x_h^k) + L_h\Sdist{x,x_h^k}  } \geq V_h^*(x) \quad \mbox{ by induction hypothesis on } k
	\end{align*}
	and we used the fact that $V_h^{*}$ is $L_h$-Lipschitz.
\end{proof}
\begin{fproposition}
	\label{prop:regret-aux-result-greedy}
	With probability at least $1-\delta$, the regret of $\lipucbvigreedy$ is bounded as
	\begin{align*}
		\regret(K) \lesssim  &   H^2 \sqrt{\sigmacov K}
		+ H^3 \sigmacov\Xsigmacov 
		+ H^{3/2}\sqrt{K} +  L_1  KH \sigma + H^2 \sigmacov + H^2 \Xsigmacov,
	\end{align*}
	where $\sigmacov$ and $\Xsigmacov$ are the $\sigma$-covering numbers of $(\stateactionspace,\distfunc)$ and $(\statespace, \distfunc)$, respectively.
\end{fproposition}

\begin{proof}
	On $\favevent$, we have
	\begin{align*}
	\tdelta_h^k & \eqdef V_h^{k+1}(x_h^k) - V_h^{\pi_k}(x_h^k) 
	\leq   V_h^{k}(x_h^k) - V_h^{\pi_k}(x_h^k)\\
    &\leq \tV_h^{k}(x_h^k) - V_h^{\pi_k}(x_h^k)
	\leq \tQ_h^k(x_h^k,a_h^k) - Q_h^{\pi_k}(x_h^k,a_h^k) \\
	\end{align*}
	From this point we can follow the proof of Proposition~\ref{prop:regret-aux-result} to obtain

	\begin{align*}
		\tdelta_h^k & \lesssim
		\pa{1+\frac{1}{H}}\pa{\delta_{h+1}^k + \xi_{h+1}^k}
		+  L_1 \dist{(\tx_h^k, \ta_h^k), (x_h^k, a_h^k)}		
		+  \sqrt{\frac{H^2}{\gencount_h^k(\tx_h^k, \ta_h^k)}}
		+  \frac{H^2\Xsigmacov}{\gencount_h^k(\tx_h^k, \ta_h^k)} 
		+ L_1\sigma \\
		& \lesssim  \pa{1+\frac{1}{H}}\pa{\tdelta_{h+1}^k + \pa{V_{h+1}^{k}-V_{h+1}^{k+1}}(x_{h+1}^k) + \xi_{h+1}^k}
		+  L_1 \dist{(\tx_h^k, \ta_h^k), (x_h^k, a_h^k)} \\		
		& +  \sqrt{\frac{H^2}{\gencount_h^k(\tx_h^k, \ta_h^k)}}
		+  \frac{H^2\Xsigmacov}{\gencount_h^k(\tx_h^k, \ta_h^k)} 
		+ L_1\sigma
	\end{align*}

On $\favevent$, using that $V^*_h\leq V^{k+1}_h$ and  the same arguments as in equations \eqref{eq:trick-delta-one} and \eqref{eq:trick-delta-two} in Proposition~\ref{prop:regret-aux-result} (which can be used since $V_{h+1}^{k} \geq V_{h+1}^{k+1}$), we obtain
\begin{align*}
\regret(K) & \leq \sum_{k=1}^K \tdelta_1^k \\
  &  \lesssim H^2 \sigmacov + L_1  KH \sigma + \sum_{k=1}^K\sum_{h=1}^H \pa{1+\frac{1}{H}}^{h} \xi_{h+1}^k \\
&  + \sum_{k=1}^K\sum_{h=1}^H
\pa{  
	\frac{H}{\sqrt{\gencount_h^k(\tx_h^k, \ta_h^k)}} + 
	\frac{H^2 \Xsigmacov}{\gencount_h^k(\tx_h^k, \ta_h^k)}
}
\indic{\dist{(\tx_h^k, \ta_h^k), (x_h^k, a_h^k)} \leq 2 \sigma}  \\
& +  \sum_{k=1}^K\sum_{h=1}^H  \pa{1+\frac{1}{H}}^h \pa{V_{h+1}^{k}-V_{h+1}^{k+1}}(x_{h+1}^k)
\end{align*}
This bound differs only by the last additive term above from the bound given in Proposition~\ref{prop:regret-aux-result}. Thus we just need to handle this sum and rely on the previous analysis to upper bound the other terms. We consider the following partition of the state space: 
\begin{definition}
	\label{def:partition-for-regret-bound-greedy}
	Let $\tcC_\sigma$ be a $\sigma$-covering of $\statespace$. We write $\tcC_\sigma \eqdef \braces{x_j, j\in [\abs{\cC_\sigma}]}$. For each $x_j \in \tcC_\sigma$, we define the set $B_j \subset \statespace$ as the set of points in $\statespace$ whose nearest neighbor in $\tcC_\sigma$ is $x_j$, with ties broken arbitrarily, such that $\braces{B_j}_{j\in [\abs{\cC_\sigma}]}$ form a partition of $\statespace$.
\end{definition}
Using the fact that the $V_h^k$ are point-wise non-increasing we can transform the last sum in the previous inequality in a telescopic sum
\begin{align*}
\sum_{k=1}^K\sum_{h=1}^H  \pa{1+\frac{1}{H}}^h \pa{V_{h+1}^{k}-V_{h+1}^{k+1}}(x_{h+1}^k) &\leq e \sum_{k=1}^K\sum_{h=1}^H \pa{V_{h+1}^{k}-V_{h+1}^{k+1}}(x_{h+1}^k)\\
&\leq e \sum_{j=1}^{\abs{\tcC_\sigma}}\sum_{k=1}^K\sum_{h=1}^H \pa{V_{h+1}^{k}-V_{h+1}^{k+1}}(x_{h+1}^k) \indic{x_{h+1}^k \in B_j}\\
&\leq e \sum_{j=1}^{\abs{\tcC_\sigma}}\sum_{k=1}^K\sum_{h=1}^H \pa{V_{h+1}^{k}-V_{h+1}^{k+1}}(x_j) \indic{x_{h+1}^k \in B_j} \\
&\qquad+2 L_h\Sdist{x_j,x_{h+1}^k} \indic{x_{h+1}^k \in B_j}\\
&\leq e \sum_{j=1}^{\abs{\tcC_\sigma}}\sum_{k=1}^K\sum_{h=1}^H \pa{V_{h+1}^{k}-V_{h+1}^{k+1}}(x_j) + e K \sum_{h=1}^H 2 L_1 \sigma\\
&\leq e H^2 \abs{\tcC_\sigma}+ 2 e \sigma L_1 H K\,,
\end{align*}
where in the third inequality, we used the fact that the function $V_{h+1}^{k}-V_{h+1}^{k+1}$ is $2 L_h$-Lipschitz. Combining the previous inequalities and the proof of Theorem~\ref{theorem:regret-bound}, as explained above, allows us to conclude.
\end{proof}

\section{New concentration inequalities}

In this section we present two new concentration inequalities that control, uniformly over time, the deviation of weighted sums of zero-mean random variables. They both follow from the so-called method of mixtures (e.g.,  \cite{pena2008self}), and can have applications beyond the scope of this work.

\begin{lemma}[Hoeffding type inequality]
	\label{lemma:self-normalized-weighted-sum}
	Consider the sequences of random variables $(w_t)_{t\in\NN^*}$ and $(Y_t)_{t\in\NN^*}$ adapted to a filtration $(\cF_t)_{t\in\NN}$. Assume that, for all $t\geq 1$, $w_t$ is $\cF_{t-1}$ measurable and $\expect{ \exp(\lambda Y_t)\given \cF_{t-1}} \leq \exp(\lambda^2 c^2/2)$ for all $\lambda > 0$.

	Let
	\begin{align*}
	S_t \eqdef \sum_{s=1}^t w_s Y_s \quad \mbox{and} \quad V_t \eqdef \sum_{s=1}^t w_s^2.
	\end{align*}

	Then, for any $\beta > 0$, with probability at least $1-\delta$, for all $t\geq 1$,
	\begin{align*}
	\frac{\abs{S_t}}{\sum_{s=1}^t w_s + \beta} \leq \sqrt{ 2c^2 \left[ \log\pa{\frac{1}{\delta}} + \frac{1}{2}\log\pa{\frac{V_t + \beta}{\beta}}  \right] \frac{V_t + \beta}{\pa{\sum_{s=1}^t w_s+\beta}^2} }\,.
	\end{align*}
	In addition, if $w_s \leq 1$ almost surely for all $s$, we have $V_t \leq \sum_{s=1}^t w_s \leq t$ and the above can be simplified to
	\begin{align*}
	 \frac{\abs{S_t}}{\sum_{s=1}^t w_s + \beta} \leq \sqrt{ 2c^2  \log\pa{\frac{\sqrt{1 + t/\beta}}{\delta}}  \frac{1}{\sum_{s=1}^t w_s+\beta}}\,.
	\end{align*}
\end{lemma}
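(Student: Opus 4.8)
The plan is to use the \emph{method of mixtures}. First I would fix $\lambda \in \RR$ and introduce the exponential process
\[
M_t^\lambda \eqdef \exp\pa{\lambda S_t - \tfrac{1}{2}\lambda^2 c^2 V_t}, \qquad M_0^\lambda = 1 .
\]
Because $w_t$ is $\cF_{t-1}$-measurable, I can factor $\expect[\cF_{t-1}]{M_t^\lambda} = M_{t-1}^\lambda \cdot \expect[\cF_{t-1}]{\exp\pa{\lambda w_t Y_t - \tfrac12 \lambda^2 w_t^2 c^2}}$, and applying the conditional sub-Gaussian hypothesis at the (conditionally deterministic) argument $\lambda w_t$ shows the remaining conditional expectation is at most $1$. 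Hence $(M_t^\lambda)_t$ is a nonnegative supermartingale with unit initial value, for \emph{every} $\lambda \in \RR$. I would flag already here that the MGF bound gets used at both signs of $\lambda w_t$, which is precisely what will let the final bound control $\abs{S_t}$ rather than only $S_t$.

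Next I would integrate out $\lambda$ against a centered Gaussian prior. Setting $M_t \eqdef \int_{\RR} M_t^\lambda \, dh(\lambda)$ with $h$ the density of $\mathcal{N}\pa{0, 1/(c^2\beta)}$, Tonelli's theorem together with the supermartingale property of each $M_t^\lambda$ shows $M_t$ is again a nonnegative supermartingale with $M_0 = 1$. The central computation is the Gaussian integral: completing the square in the exponent $\lambda S_t - \tfrac12\lambda^2\pa{c^2 V_t + c^2\beta}$ yields the closed form
\[
M_t = \sqrt{\frac{\beta}{\beta + V_t}}\,\exp\pa{\frac{S_t^2}{2 c^2 (V_t + \beta)}} .
\]
I would verify the constant bookkeeping carefully, since the specific prior variance $1/(c^2\beta)$ is exactly what produces both the $\tfrac12\log\pa{(V_t+\beta)/\beta}$ term and the $V_t+\beta$ factor in the target inequality.

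Finally I would invoke Ville's maximal inequality for nonnegative supermartingales, $\PP\sqrbrackets{\exists\, t \geq 1 : M_t \geq 1/\delta} \leq \delta\,\expect{M_0} = \delta$. On the complementary event, taking logarithms in $M_t \leq 1/\delta$ and rearranging gives, simultaneously for all $t$, the bound $S_t^2 \leq 2c^2(V_t+\beta)\sqrbrackets{\log(1/\delta) + \tfrac12\log\tfrac{V_t+\beta}{\beta}}$; dividing through by $\pa{\sum_{s\leq t} w_s + \beta}^2$ yields the stated inequality. For the simplified form, the hypothesis $w_s \leq 1$ gives $V_t = \sum_s w_s^2 \leq \sum_s w_s \leq t$, so that $V_t + \beta \leq \sum_s w_s + \beta$ and $\tfrac12\log\tfrac{V_t+\beta}{\beta} \leq \log\sqrt{1+t/\beta}$, which collapses the bound to its one-term form. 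The main obstacle is the supermartingale verification combined with securing the two-sided sub-Gaussian control, so that the symmetric mixture bounds $\abs{S_t}$ and not merely $S_t$; the Gaussian integral itself is routine once the prior variance is pinned down.
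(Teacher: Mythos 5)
Your proposal is correct and follows essentially the same route as the paper's proof: the exponential supermartingale $M_t^\lambda = \exp\pa{\lambda S_t - \lambda^2 c^2 V_t/2}$, a Gaussian mixture with prior variance $1/(c^2\beta)$ yielding the same closed form $\sqrt{\beta/(V_t+\beta)}\exp\pa{S_t^2/(2c^2(V_t+\beta))}$, and Ville's maximal inequality followed by the same rearrangement and the $w_s \le 1$ simplification. Your explicit remark that the sub-Gaussian MGF bound must hold at both signs of $\lambda w_t$ (to control $\abs{S_t}$ rather than $S_t$) is a point the paper leaves implicit, and is a welcome clarification.
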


\begin{proof}
	Let
	\begin{align*}
	M_t^\lambda = \exp\pa{\lambda S_t - \frac{\lambda^2 c^2 V_t}{2}},
	\end{align*}
	with the convention $M_0^{\lambda} = 1$.
	The process $\braces{M_t^\lambda}_{t\geq 0}$ is a supermartingale, since
	\begin{align}
	\expect{M_t^\lambda \given \cF_{t-1}} = \expect{\exp\pa{w_t Y_t - \frac{\lambda^2 c^2 w_t^2}{2}}\given \cF_{t-1}} M_{t-1}^\lambda \leq M_{t-1}^\lambda,
	\end{align}
	which implies that $\expect{M_t^\lambda} \leq  \expect{M_0^\lambda} = 1$. Now, we apply the method of mixtures, as in \citep{pena2008self} see also \citep{Abbasi-Yadkori2011}. We define the supermartingale $M_t$ as
	\begin{align*}
	M_t = \sqrt{\frac{\beta c^2}{2\pi}}\int_{\RR} M_t^\lambda \exp\pa{-\frac{\beta c^2\lambda^2}{2}} \mathrm{d}\lambda = \sqrt{\frac{\beta}{V_t + \beta}}\exp\pa{\frac{S_t^2}{2(V_t + \beta)c^2}}.
	\end{align*}
	The maximal inequality for non-negative supermartingales gives us:
	\begin{align*}
	\prob{\exists t \geq 0: M_t \geq \delta^{-1}} \leq \delta \expect{M_0} = \delta.
	\end{align*}
	Hence, with probability at least $1-\delta$, we have
	\begin{align*}
	\forall t\geq 0, \quad \abs{S_t} \leq \sqrt{2c^2\left[\log(1/\delta) + (1/2) \log((V_t+\beta)/\beta) \right](V_t +\beta)}.
	\end{align*}
	Dividing both sides by $\sum_{s=1}^t w_s + \beta$ gives the result.
\end{proof}

\begin{lemma}[Bernstein type inequality]
	\label{lemma:bernstein-freedman-weighted-sum}
	Consider the sequences of random variables $(w_t)_{t\in\NN^*}$ and $(Y_t)_{t\in\NN^*}$ adapted to a filtration $(\cF_t)_{t\in\NN}$. Let
	\begin{align*}
		S_t \eqdef \sum_{s=1}^t w_s Y_s, \quad V_t \eqdef \sum_{s=1}^t w_s^2\expect{Y_s^2\given \cF_{s-1}} \quad \mbox{and} \quad W_t \eqdef \sum_{s=1}^t w_s\,,
	\end{align*}
	and $h(x) = (x+1) \log(x+1)-x$.
	Assume that, for all $t\geq 1$,
	\begin{itemize}
		\item $w_t$ is $\cF_{t-1}$ measurable,
		\item $\expect{Y_t\given \cF_{t-1}} = 0$,
		\item $w_t \in [0, 1]$ almost surely,
		\item there exists $b > 0$ such that $\abs{Y_t} \leq b$ almost surely.
	\end{itemize}
	Then, we have
	\begin{align*}
		\prob{\exists t\geq 1,   (V_t/b^2+1)h\!\left(\frac{b |S_t|}{V_t+b^2}\right) \geq \log(1/\delta) + \log\big(4e(2t+1)\big) }\leq \delta\,.
	\end{align*}
 The previous inequality can be weakened to obtain a more explicit bound: for all $\beta > 0$, with probability at least $1-\delta$, for all $t\geq 1$,
 \[
 \frac{|S_t|}{\beta + \sum_{s=1}^t w_s}\leq \sqrt{2\log\big(4e(2t+1)/\delta\big) \frac{V_t+b^2}{\left(\beta + \sum_{s=1}^t w_s\right)^2}} + \frac{2b}{3}\frac{\log\!\big(4e(2t+1)/\delta\big)}{\beta + \sum_{s=1}^t w_s}\,.
 \]
\end{lemma}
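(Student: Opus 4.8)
The plan is to mirror the proof of Lemma~\ref{lemma:self-normalized-weighted-sum}, replacing the Gaussian (Hoeffding) exponential supermartingale by a Bennett/Poisson-type one, and then to remove the unknown optimal exponential tilt by the method of mixtures, as in \cite{pena2008self,Abbasi-Yadkori2011}.

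First I would build the exponential supermartingale. Writing $X_s \eqdef w_s Y_s$, the predictability of $w_s$, the centering $\expect{Y_s\given\cF_{s-1}}=0$, and the bounds $w_s\in[0,1]$, $\abs{Y_s}\le b$ give $\expect{X_s\given\cF_{s-1}}=0$, $X_s\le b$ almost surely, and conditional second moment $\expect{X_s^2\given\cF_{s-1}}=w_s^2\expect{Y_s^2\given\cF_{s-1}}$, which is exactly the increment of $V_t$. Bennett's conditional moment-generating bound then gives, for every $u>0$,
\[
\expect{\exp\!\pa{\frac{u}{b}X_s}\given\cF_{s-1}}\le \exp\!\pa{\pa{e^u-1-u}\frac{\expect{X_s^2\given\cF_{s-1}}}{b^2}}.
\]
Consequently $M_t^u \eqdef \exp\!\pa{\frac{u}{b}S_t-\pa{e^u-1-u}\frac{V_t}{b^2}}$ is a nonnegative supermartingale with $\expect{M_t^u}\le M_0^u=1$; running the same construction on $-Y_s$ handles the lower tail.

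Next I would eliminate the free parameter $u$ by the method of mixtures. Because the cumulant here is the Poisson cumulant $e^u-1-u$ rather than a quadratic, the Gaussian mixing density used for Lemma~\ref{lemma:self-normalized-weighted-sum} must be replaced by its conjugate: setting $\theta=e^u$, $M_t^u$ is proportional to $\theta^{S_t/b+V_t/b^2}e^{-\theta V_t/b^2}$, so a Gamma-type mixing density over $\theta$ yields a mixed supermartingale $\overline{M}_t=\int M_t^u\,d\mu(u)$ in closed form (in terms of incomplete Gamma functions) with $\expect{\overline{M}_0}\le 1$. The maximal inequality for nonnegative supermartingales, exactly as used in the proof of Lemma~\ref{lemma:self-normalized-weighted-sum}, gives $\prob{\exists t\ge1:\ \overline{M}_t\ge 1/\delta}\le\delta$. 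A Laplace/pseudo-maximization lower bound on the integral $\overline{M}_t$ then recovers the Legendre transform of the Poisson cumulant,
\[
\sup_{u>0}\pa{\frac{u}{b}\abs{S_t}-\pa{e^u-1-u}\frac{V_t+b^2}{b^2}}=\pa{\frac{V_t}{b^2}+1}\,h\!\pa{\frac{b\abs{S_t}}{V_t+b^2}},
\]
where the shift $V_t\mapsto V_t+b^2$ is the ``pseudo-variance'' contributed by the prior, and the normalising constant of the Laplace bound, controlled via $V_t=\sum_{s}w_s^2\expect{Y_s^2\given\cF_{s-1}}\le tb^2$, produces the additive $\log\pa{4e(2t+1)}$ penalty. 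This yields the first displayed inequality.

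Finally, the explicit Bernstein form follows from the elementary bound $h(x)\ge \frac{x^2}{2(1+x/3)}$ for $x\ge0$: on the complement of the event above one has $\pa{V_t/b^2+1}h\pa{b\abs{S_t}/(V_t+b^2)}<\log\pa{4e(2t+1)/\delta}$, and inverting this quadratic-type inequality for $\abs{S_t}$ gives $\abs{S_t}\le\sqrt{2(V_t+b^2)\log(4e(2t+1)/\delta)}+\frac{2b}{3}\log(4e(2t+1)/\delta)$; dividing by $\beta+\sum_{s=1}^t w_s$ produces the stated bound. The main obstacle is the mixture step: unlike the Hoeffding case there is no Gaussian self-conjugacy, so the mixing measure must be chosen as the Gamma/Poisson conjugate and its Laplace normaliser must be bounded carefully enough to extract \emph{simultaneously} the rate function $h$, the $+b^2$ regularisation, and the precise $\log\pa{4e(2t+1)}$ term, uniformly in $t$.
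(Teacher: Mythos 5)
Your proposal follows essentially the same route as the paper's proof: the Bennett-type conditional MGF bound $\expect{e^{\lambda w_t Y_t}\given \cF_{t-1}} \leq e^{w_t^2\expect{Y_t^2\given\cF_{t-1}}(e^\lambda-\lambda-1)}$ (which the paper derives from the monotonicity of $y\mapsto(e^y-y-1)/y^2$), the two-sided supermartingales mixed against the Poisson-conjugate prior (the paper takes $\lambda_x=\log(1+x)$ with $x\sim\Exponential(1)$, precisely your Gamma-type prior over $\theta=e^u$), a pseudo-maximization lower bound of the mixture around the optimizer yielding $(V_t/b^2+1)h\bigl(b|S_t|/(V_t+b^2)\bigr)$ with the $\log\bigl(4e(2t+1)\bigr)$ penalty coming from the crude bounds $|S_t|\leq tb$, $V_t\leq tb^2$, and finally the inversion via $h(x)\geq x^2/(2(1+x/3))$. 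The only cosmetic difference is that the paper never computes the mixture in closed form but directly lower-bounds the integral over a short interval, which is exactly the Laplace-type step you describe.
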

\begin{proof}
	By homogeneity we can assume that $b=1$ to prove the first part. First note that for all $\lambda > 0$,
	\[
	 e^{\lambda w_t Y_t} -\lambda w_t Y_t -1 \leq (w_t Y_t)^2(e^{\lambda}-\lambda -1)\,,
	\]
	because the function $y \to (e^y-y-1)/y^2$ (extended by continuity at zero) is non-decreasing. Taking the expectation yields
	\[
	\expect{e^{\lambda w_t Y_t}|\cF_{t-1}} -1 \leq w_t^2\expect{ Y_t^2 | \cF_{t-1}}(e^{\lambda}-\lambda -1)\,,
	\]
	thus using $y+1\leq e^y$ we get
	\[
	\expect{e^{\lambda (w_t Y_t)}|\cF_{t-1}} \leq e^{w_t^2\expect{ Y_t^2 | \cF_{t-1}}(e^{\lambda}-\lambda -1)}\,.
	\]
	We just proved that the following quantity is a supermartingale with respect to the filtration $(\cF_t)_{t\geq 0}$,
	\[
	M_t^{\lambda,+} = e^{\lambda(S_t+V_t) - V_t (e^\lambda - 1)}\,.
	\]
	Similarly, using that the same inequality holds for $-X_t$, we have
	\[
	\expect{e^{-\lambda w_t Y_t}|\cF_{n-1}} \leq e^{w_t^2\expect{ Y_t^2 | \cF_{t-1}}(e^{\lambda}-\lambda -1)}\,,
	\]
	thus, we can also define the supermartingale
	\[
	M_t^{\lambda,-} = e^{\lambda(-S_t+V_t) - V_t (e^\lambda - 1)}\,.
	\]
	We now choose the prior over $\lambda_x = \log(x+1)$ with $x\sim \Exponential(1)$, and consider the (mixture) supermartingale
	\[
	M_t = \frac{1}{2} \int_{0}^{+\infty} e^{\lambda_x(S_t+V_t) - V_t (e^\lambda_x - 1)} e^{-x} \mathrm{d} x +\frac{1}{2} \int_{0}^{+\infty} e^{\lambda_x(-S_t+V_t) - V_n (e^\lambda_x - 1)} e^{-x} \mathrm{d} x \,.
	\]
	Note that by construction it holds $\expect{M_t} \leq 1$. We will apply the method of mixtures to that super martingale thus we need to lower bound it with the quantity of interest. To this aim we will we will lower bound the integral by the one only around the maximum of the integrand. Using the change of variable $\lambda = \log(1+x)$, we obtain
	\begin{align*}
	  M_t &\geq \frac{1}{2} \int_{0}^{+\infty} e^{\lambda_x (|S_t|+V_t) - V_t (e^{\lambda_x} - 1)} e^{-x} \mathrm{d} x
	  \geq \frac{1}{2} \int_{0}^{+\infty}  e^{\lambda(|S_t| + V_t +1) - (V_t +1)(e^\lambda-1)} \mathrm{d} \lambda \\
	  &\geq  \frac{1}{2} \int_{\log\big(|S_t|/(V_t+1) + 1\big)}^{\log\big(|S_t|/(V_t+1) + 1 + 1/(V_t+1)\big)}  e^{\lambda(|S_t| + V_t +1) - (V_t +1)(e^\lambda-1)} \mathrm{d} \lambda\\
	  &\geq \frac{1}{2} \int_{\log\big(|S_t|/(V_t+1) + 1\big)}^{\log\!\big(|S_t|/(V_t+1) + 1 + 1/(V_t+1)\big)}  e^{\log\big(|S_t|/(V_t+1) + 1\big)(|S_t| + V_t +1) - |S_t| - 1} \mathrm{d} \lambda\\
	  &= \frac{1}{2e}e^{(V_t+1)h\big(|S_t|/(V_t+1)\big)} \log\!\!\left( 1+\frac{1}{|S_t|+V_t+1}\right)
	  \geq \frac{1}{4e(2 t+1)}e^{(V_t+1)h\big(|S_t|/(V_t+1)\big)}\,,
	\end{align*}
	where in the last line we used $\log(1+1/x)\geq 1/(2x)$ for $x \geq 1$ and the trivial bounds $|S_t| \leq 1$, $V_t \leq t$. The method of mixtures, see \citep{pena2008self}, allows us to conclude for the first inequality of the lemma. The second inequality is a straightforward consequence of the previous one. Indeed, using that (see Exercise 2.8 of \cite{boucheron2013concentration}) for $x\geq 0$
	  \[
	  h(x) \geq \frac{x^2}{2(1+x/3)}\,,
	  \]
	we get
	\[
	\frac{|S_t|/b}{V_t/b^2+1} \leq \sqrt{\frac{2\log\big(4e(2t +1)/\delta\big)}{V_t/b^2+1}} + \frac{2}{3}\frac{\log\big(4e(2t+1)/\delta\big)}{V_t/b^2+1}\,.
	\]
	Dividing by $\beta+\sum_{s=1}^t w_s$ and multiplying by $b(V_t/b^2+1)$ the previous inequality allows us to conclude.
\end{proof}

\section{Auxiliary results}

\subsection{Proof of Lemma~\ref{lemma:q_function_is_lipschitz}} \label{proof:q_function_is_lipschitz}

In this section, we prove that the optimal $Q$-functions $Q_h$ are Lipschitz continuous.

\begin{lemma}[Value functions are Lipschitz continuous]
	\label{lemma:value-funcs-are-liptschitz}
	Under assumption \ref{assumption:lipschitz-rewards-and-transitions} we have:
	\begin{align*}
	\forall (x, a, x', a'), \; \forall h \in [H], \quad \abs{Q_h^*(x, a) - Q_h^*(x', a')} \leq L_h \dist{(x,a), (x', a')}
	\end{align*}
	where $L_h \eqdef \sum_{h'= h}^H \lambda_r \lambda_p^{H-h'}$.
\end{lemma}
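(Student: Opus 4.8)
The plan is to prove this by backward induction on $h$, going from $h = H+1$ down to $h = 1$. The base case is $h = H+1$, where by convention $Q_{H+1}^* \equiv 0$, so it is trivially $L_{H+1} = 0$-Lipschitz (the empty sum gives $L_{H+1} = 0$). For the inductive step, I assume $Q_{h+1}^*$ is $L_{h+1}$-Lipschitz, which immediately implies $V_{h+1}^*(x) = \max_a Q_{h+1}^*(x,a)$ is $L_{h+1}$-Lipschitz with respect to $\Sdistfunc$ (since the pointwise maximum over $a$ of $L_{h+1}$-Lipschitz functions, and because $\Adistfunc$ contributes nonnegatively to $\distfunc$, the max in $a$ preserves the Lipschitz constant in $x$). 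Then I use the optimal Bellman equation $Q_h^*(x,a) = r_h(x,a) + \int V_{h+1}^* \mathrm{d}P_h(\cdot|x,a)$ and bound the difference for two pairs $(x,a)$ and $(x',a')$.

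The key estimate splits into the reward part and the transition part:
\begin{align*}
\abs{Q_h^*(x,a) - Q_h^*(x',a')} &\leq \abs{r_h(x,a) - r_h(x',a')} + \abs{P_h V_{h+1}^*(x,a) - P_h V_{h+1}^*(x',a')}.
\end{align*}
The reward term is bounded by $\lambda_r \dist{(x,a),(x',a')}$ using Assumption~\ref{assumption:lipschitz-rewards-and-transitions}. For the transition term, the crucial observation is that $V_{h+1}^*$ is $L_{h+1}$-Lipschitz, so $V_{h+1}^*/L_{h+1}$ is $1$-Lipschitz; applying the Wasserstein Lipschitzness of the transition kernel gives
\begin{align*}
\abs{P_h V_{h+1}^*(x,a) - P_h V_{h+1}^*(x',a')} &= \abs{\int V_{h+1}^*(y)\pa{\mathrm{d}P_h(y|x,a) - \mathrm{d}P_h(y|x',a')}} \\
&\leq L_{h+1} \Wassdist{P_h(\cdot|x,a), P_h(\cdot|x',a')} \\
&\leq L_{h+1}\lambda_p \dist{(x,a),(x',a')},
\end{align*}
where the first inequality uses the variational definition of $W_1$ (taking $f = V_{h+1}^*/L_{h+1}$ as a valid $1$-Lipschitz test function) and the second uses Assumption~\ref{assumption:lipschitz-rewards-and-transitions}.

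Combining the two bounds yields a Lipschitz constant of $\lambda_r + \lambda_p L_{h+1}$, and I verify this equals $L_h$ via the recursion: $L_h = \sum_{h'=h}^H \lambda_r \lambda_p^{H-h'} = \lambda_r + \lambda_p\sum_{h'=h+1}^H \lambda_r\lambda_p^{H-h'} = \lambda_r + \lambda_p L_{h+1}$, which closes the induction. The main subtlety I would flag is the step claiming $V_{h+1}^*$ inherits the Lipschitz constant of $Q_{h+1}^*$: one must check that taking $\max_a Q_{h+1}^*(x,a)$ preserves Lipschitzness in $x$, which follows from the standard fact that the supremum of a family of $L$-Lipschitz functions is $L$-Lipschitz, combined with the additive structure $\dist{(x,a),(x',a)} = \Sdist{x,x'}$ from Assumption~\ref{assumption:metric-state-space}. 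The only genuine idea is using the $W_1$ variational formula with the normalized value function as test function; everything else is routine bookkeeping on the geometric sum.
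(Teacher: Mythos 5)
Your proof is correct and follows essentially the same route as the paper's: backward induction, the observation that $V_{h+1}^*(x) = \max_a Q_{h+1}^*(x,a)$ inherits the Lipschitz constant via the additive metric structure of Assumption~\ref{assumption:metric-state-space}, the $W_1$ variational bound with $V_{h+1}^*/L_{h+1}$ as the $1$-Lipschitz test function, and the recursion $L_h = \lambda_r + \lambda_p L_{h+1}$. The only difference is cosmetic: the paper bases the induction at $h = H$ where $Q_H^* = r_h$ is $\lambda_r$-Lipschitz, whereas you start at $h = H+1$, which forces a division by $L_{H+1} = 0$ in your first inductive step --- harmless here since $V_{H+1}^* \equiv 0$ makes the transition term vanish, but worth noting that the paper's choice of base case avoids this edge case entirely.
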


\begin{proof}
	We proceed by induction. For $h = H$, $Q_h^*(x, a) = r(x, a)$ and the statement is true, since $r$ is $\lambda_r$-Lipschitz. Now, assume that it is true for $h+1$ and let's prove it for $h$.

	First, we note that $V_{h+1}^*(x)$ is Lipschitz by the induction hypothesis:
	\begin{align*}
	V_{h+1}^*(x) - V_{h+1}^*(x') & =  \max_a Q_{h+1}^*(x, a) -  \max_a Q_{h+1}^*(x', a)  \leq \max_a \pa{Q_{h+1}^*(x, a) - Q_{h+1}^*(x', a)} \\
	& \leq \max_a \sum_{h'= h+1}^H \lambda_r \lambda_p^{H-h'} \dist{(x, a), (x',a)} = \sum_{h'= h+1}^H \lambda_r \lambda_p^{H-h'} \Sdist{x, x'},
	\end{align*}
	where, in the last equality, we used  the fact that $ \dist{(x, a), (x', a')} = \Sdist{x, x'} + \Adist{a, a'}$ by Assumption \ref{assumption:metric-state-space}.

	By applying the same argument and inverting the roles of $x$ and $x'$, we obtain
	\begin{align*}
	\abs{V_{h+1}^*(x) - V_{h+1}^*(x')} \leq  \sum_{h'= h+1}^H \lambda_r \lambda_p^{H-h'} \Sdist{x, x'}.
	\end{align*}
	Now, we have
	\begin{align*}
	Q_h^*(x, a) - Q_h^*(x', a') &\leq \lambda_r\dist{(x, a), (x',a')} + \int_{\statespace}V_{h+1}^*(y)(P_h(\mathrm{d}y|x,a) - P_h(\mathrm{d}y|x',a')) \\
	&\leq \lambda_r\dist{(x, a), (x',a')} + L_{h+1}\int_{\statespace}\frac{V_{h+1}^*(y)}{L_{h+1}}(P_h(\mathrm{d}y|x,a) - P_h(\mathrm{d}y|x',a')) \\
	& \leq \left[ \lambda_r + \lambda_p \sum_{h'= h+1}^H \lambda_r \lambda_p^{H-h'} \right]\dist{(x, a), (x',a')} = \sum_{h'= h}^H \lambda_r \lambda_p^{H-h'}\dist{(x, a), (x',a')}
	\end{align*}
	where, in last inequality, we use fact that $V_{h+1}^*/L_{h+1}$ is $1$-Lipschitz, the definition of the 1-Wasserstein distance and Assumption \ref{assumption:lipschitz-rewards-and-transitions}.
\end{proof}

\subsection{Covering-related lemmas}


\begin{lemma}
	\label{lemma:lipschitz-covering}
	Let $\cF_L$ be the set of $L$-Lipschitz functions from the metric space $(\cX, \rho)$ to $[0,H]$. Then, its $\epsilon$-covering number with respect to the infinity norm is bounded as follows
	\begin{align*}
	\cN(\epsilon, \cF_L, \norm{\cdot}_\infty) \leq \pa{\frac{8H}{\epsilon}}^{\cN(\epsilon/(4L), \cX, \rho)}
	\end{align*}
\end{lemma}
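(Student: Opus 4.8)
The plan is to reduce covering $\cF_L$ in the sup norm to covering $\cX$ in $\rho$ together with a discretization of the range $[0,H]$, exploiting the fact that an $L$-Lipschitz function is nearly determined by its values on a fine net of $\cX$. First I would fix $\delta \eqdef \epsilon/(4L)$ and let $\cC = \braces{z_1, \ldots, z_N}$ be a minimal $\delta$-covering of $(\cX, \rho)$, so that $N = \cN(\epsilon/(4L), \cX, \rho)$ and every $x \in \cX$ lies within $\rho$-distance $\delta$ of some $z_i$. Next I would partition the range $[0,H]$ into consecutive intervals of length $\gamma \eqdef \epsilon/2$; there are at most $\lfloor H/\gamma \rfloor + 1 \leq 2H/\epsilon + 1$ such intervals. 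To each $f \in \cF_L$ I associate the \emph{fingerprint} $\Phi(f) \eqdef \pa{q_1(f), \ldots, q_N(f)}$, where $q_i(f)$ is the index of the interval containing $f(z_i)$.

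The key step is to show that two functions with the same fingerprint are $\epsilon$-close in sup norm. If $\Phi(f) = \Phi(g)$, then $\abs{f(z_i) - g(z_i)} \leq \gamma$ for every $i$. For an arbitrary $x$, choosing a nearest cover point $z_i$ (so $\rho(x, z_i) \leq \delta$) and using the triangle inequality together with the $L$-Lipschitz property gives
\[
\abs{f(x) - g(x)} \leq \abs{f(x) - f(z_i)} + \abs{f(z_i) - g(z_i)} + \abs{g(z_i) - g(x)} \leq 2 L \delta + \gamma = \epsilon.
\]
Taking one representative function from each nonempty fingerprint class therefore yields an $\epsilon$-cover of $\cF_L$ in $\norm{\cdot}_\infty$.

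Finally I would count the fingerprints: each coordinate $q_i$ takes at most $2H/\epsilon + 1$ values, and assuming $\epsilon \leq H$ (the case $\epsilon > H$ being trivial, since $\cF_L$ has sup-norm diameter at most $H$ so a single function covers it) this is at most $3H/\epsilon \leq 8H/\epsilon$. Hence the number of distinct fingerprints, and thus the size of the constructed cover, is at most $\pa{8H/\epsilon}^N$, which is the claimed bound.

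The argument is essentially routine; the only delicate points are bookkeeping the discretization so that $2L\delta + \gamma$ collapses exactly to $\epsilon$ (which forces the choices $\delta = \epsilon/(4L)$ and $\gamma = \epsilon/2$) and verifying that the per-coordinate count fits under $8H/\epsilon$. Since there is ample slack in the constant $8$, no genuine obstacle arises beyond careful constant tracking and isolating the trivial regime $\epsilon > H$.
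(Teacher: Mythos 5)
Your proof is correct, but it takes a genuinely different route from the paper's. The paper \emph{constructs} the cover explicitly: it takes an $\epsilon/(4L)$-net $\braces{x_1, \ldots, x_M}$ of $(\cX,\rho)$ that is additionally required to be an $\epsilon/(4L)$-\emph{packing}, an $\epsilon/8$-grid $\braces{y_1,\ldots,y_N}$ of $[0,H]$, and, for each assignment $p:[M]\to[N]$, the McShane-type interpolant $\widehat{f}_p(x) = \min_{i}\pa{y_{p(i)} + 2L\rho(x,x_i)}$; the delicate step there is verifying $\widehat{f}_p(x_j) = y_{p(j)}$, which is precisely why the net must also be a packing and why the range is discretized at scale $\epsilon/8$ rather than your $\epsilon/2$. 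You instead argue by fingerprinting: any two functions whose values on the net fall in the same length-$(\epsilon/2)$ intervals are within $2L\delta + \gamma = \epsilon$ of each other everywhere, so one representative per nonempty fingerprint class gives a cover of size at most $(2H/\epsilon + 1)^N \leq (8H/\epsilon)^N$. Your argument is shorter, dispenses with the packing property and the interpolant verification, and produces cover centers that actually belong to $\cF_L$ — a \emph{proper} cover, which matches Definition~\ref{def:covering} verbatim, whereas the paper's centers $\widehat{f}_p$ are only $2L$-Lipschitz and hence lie outside $\cF_L$. What the paper's construction buys in exchange is explicitness: its net consists of concretely parameterized functions, convenient if one wants to instantiate the cover rather than merely bound its cardinality. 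One shared caveat: for $\epsilon > 8H$ the claimed bound drops below $1$ while the covering number of a nonempty $\cF_L$ is at least $1$, so the lemma implicitly assumes $\epsilon \leq 8H$; your isolation of the regime $\epsilon > H$ makes this visible but does not fully repair it, and the paper's proof has the same blind spot. Since the lemma is only invoked at scale $\epsilon = 8L_1\sigma$ with range $[0,2H]$ and small $\sigma$, this is immaterial for the paper's use.
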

\begin{proof}
	Let's build an $\epsilon$-covering of $\cF_L$. Let $\cC_{\cX} = \braces{x_1, \ldots, x_M}$ be an $\epsilon_1$-covering of $(\cX, \rho)$ such that $\rho(x_i, x_j) > \epsilon_1$ for all $i,j \in [M]$ (\ie $\cC_{\cX}$ is also an $\epsilon_1$-packing). Let $\cC_{[0, H]} = \braces{y_1, \ldots, y_N}$ be an $\epsilon_2$-covering of $[0, H]$. For any function $p: [M] \to [N]$, we build a $2L$-Lipschitz function $\widehat{f}_p: \cX \to \RR$ as follows
	\begin{align*}
		\widehat{f}_p(x) = \min_{i \in [M]} \sqrbrackets{ y_{p(i)} + 2L \rho(x, x_i) }.
	\end{align*}
	Let $\epsilon_1 = \epsilon/(4L)$ and $\epsilon_2 = \epsilon/8$. We now show that the set $\cC_{\cF_L} \eqdef \braces{\widehat{f}_p: p \text{ is a function from } [M] \text{ to } [N]}$ is an $\epsilon$-covering of $\cF_L$.
	Take an arbitrary function $f \in \cF_{L}$. Let $p: [M] \to [N]$ be such that $ \abs{ f(x_i)- y_{p(i)} } \leq \epsilon_2 $ for all $i \in [M]$. For any $x \in \cX$, let $j \in [M]$ be such that $\rho(x, x_j) \leq \epsilon_1$. We have
	\begin{align*}
		\abs{ f(x) - \widehat{f}_p(x) } & \leq \abs{ f(x_j) - \widehat{f}_p(x_j) } + \abs{ f(x) - f(x_j) }+ \abs{ \widehat{f}_p(x_j) - \widehat{f}_p(x) } \\
		& \leq \abs{ f(x_j) - \widehat{f}_p(x_j) } + 3L\rho(x, x_j) \\
		& \leq \abs{ f(x_j) - y_{p(j)}} + \abs{y_{p(j)} - \widehat{f}_p(x_j) } + 3L\epsilon_1 \\
		& \leq \abs{y_{p(j)} - \widehat{f}_p(x_j) } + 3L\epsilon_1 + \epsilon_2 \,.
	\end{align*}
	Now, let's prove that $\widehat{f}_p(x_j) = y_{p(j)}$, which is true if and only if $y_{p(j)} \leq  y_{p(i)} + 2L \rho(x, x_i)$ for all $i \in [M]$. By definition of $p$ and the fact that $f$ is $L$-Lipschitz, we have $ y_{p(j)} \leq y_{p(i)} + L\rho(x_j, x_i) + 2\epsilon_2 \leq y_{p(i)} + 2 L\rho(x_j, x_i)$  for all $i \in [M]$, since $L \rho(x_j, x_i) > L \epsilon_1 = 2\epsilon_2$. Consequently,
	\begin{align*}
		\forall x, \; \abs{ f(x) - \widehat{f}_p(x) } \leq 3L\epsilon_1 + \epsilon_2 < \epsilon
	\end{align*}
	which shows that $\cC_{\cF_L}$ is indeed an $\epsilon$-covering of $\cF_L$ whose carnality is bounded by $N^M$. To conclude, we take $\cC_{[0, H]} = \braces{0, \epsilon_2, \ldots, N\epsilon_2}$ for $N = \ceil{H/\epsilon_2}$ and $\cC_{\cX}$ such that $\abs{\cC_{\cX}} = M = \cN(\epsilon_1, \cX, \rho)$.

	For $H=1$, this result is also given by \cite{Gottlieb2017}, Lemma 5.2.
\end{proof}

\begin{lemma}
	\label{lemma:covering}
	Let $(\stateactionspace, \distfunc)$ be a metric space and $(\Omega, \cT, \bP)$ be a probability space. Let $F$ and $G$ be two functions from $\stateactionspace \times \Omega$ to $\RR$ such that $\omega \to F(x, a, \omega)$ and $\omega \to G(x, a, \omega)$ are random variables. Also, assume that $(x,a) \to F(x,a,\omega)$ and $(x,a) \to G(x,a,\omega)$ are  $L_F$ and $L_G$-Lipschitz, respectively, for all $\omega \in \Omega$. If
	\begin{align*}
	\forall (x, a), \quad  \prob{\omega \in \Omega: G(x, a, \omega) \geq F(x, a, \omega)} \leq \delta
	\end{align*}
	then
	\begin{align*}
	\prob{\omega \in \Omega: \exists (x,a), \; G(x, a, \omega) \geq F(x, a, \omega) + (L_G + L_f)\epsilon} \leq \delta \cN(\epsilon, \stateactionspace, \distfunc).
	\end{align*}
\end{lemma}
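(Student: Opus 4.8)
The plan is to combine an $\epsilon$-net over $(\stateactionspace, \distfunc)$ with a union bound, and then transfer the pointwise guarantee to all of $\stateactionspace$ using the two Lipschitz assumptions. First I would fix a minimal $\epsilon$-covering $\cC_\epsilon \eqdef \braces{(x_i, a_i): i \in [N]}$ of $(\stateactionspace, \distfunc)$ with $N = \cN(\epsilon, \stateactionspace, \distfunc)$ centers, which exists by Definition~\ref{def:covering}. Applying the hypothesis at each center $(x_i, a_i)$ and taking a union bound over $i \in [N]$, the event
\[
\cB \eqdef \braces{\omega \in \Omega: \exists i \in [N], \; G(x_i, a_i, \omega) \geq F(x_i, a_i, \omega)}
\]
satisfies $\prob{\cB} \leq \delta N$.

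The key step is to show that on the complement $\Omega \setminus \cB$ the strengthened inequality holds for \emph{every} $(x,a)$, not only at the centers. Fix $\omega \notin \cB$ and an arbitrary $(x, a) \in \stateactionspace$, and choose a center $(x_i, a_i)$ with $\dist{(x,a), (x_i, a_i)} \leq \epsilon$. Since $(x,a)\mapsto G(x,a,\omega)$ is $L_G$-Lipschitz, $(x,a)\mapsto F(x,a,\omega)$ is $L_F$-Lipschitz, and $G(x_i, a_i, \omega) < F(x_i, a_i, \omega)$ because $\omega \notin \cB$, I would chain
\[
G(x, a, \omega) \leq G(x_i, a_i, \omega) + L_G \epsilon < F(x_i, a_i, \omega) + L_G \epsilon \leq F(x, a, \omega) + (L_F + L_G)\epsilon.
\]
Consequently the uniform bad event $\braces{\omega: \exists (x,a), \; G(x,a,\omega) \geq F(x,a,\omega) + (L_F + L_G)\epsilon}$ is contained in $\cB$, so its probability is at most $\delta N = \delta\, \cN(\epsilon, \stateactionspace, \distfunc)$, which is exactly the claim.

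This is essentially a discretization of a supremum, so I do not expect a serious analytic obstacle; the only thing to get right is the direction of the two Lipschitz bounds in the chaining (one contributes $L_G\epsilon$ when moving $G$ from $(x,a)$ to the nearest center, the other contributes $L_F\epsilon$ when moving $F$ from the center back to $(x,a)$), which together yield the stated slack $(L_F+L_G)\epsilon$. A minor but worthwhile point is measurability: rather than reasoning directly about the (possibly uncountable) supremum over $(x,a)$, the argument sandwiches the uniform bad event inside the finite, manifestly measurable union $\cB$, so no separability or measurable-selection hypothesis on $F$ and $G$ is required.
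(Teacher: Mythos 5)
Your proof is correct and is essentially the same as the paper's: both fix an $\epsilon$-covering, use the two Lipschitz bounds to transfer a violation of the strengthened inequality at an arbitrary $(x,a)$ into a violation of $G \geq F$ at the nearest covering center (you phrase this contrapositively, which is equivalent), and conclude by a union bound over the $\cN(\epsilon, \stateactionspace, \distfunc)$ centers. Your closing remark on measurability is a sensible observation but not a departure from the paper's argument, which likewise sandwiches the uniform bad event inside the finite union over the covering.
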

\begin{proof}
	Let $C_\epsilon$ be an $\epsilon$-covering of $(\stateactionspace, \distfunc)$ and let
	\begin{align*}
	(x_\epsilon, a_\epsilon) \eqdef \argmin_{(x', a') \in C_\epsilon} \dist{(x', a'), (x, a)}.
	\end{align*}
	Let $E \eqdef \braces{\omega \in \Omega: \exists (x,a), \; G(x, a, \omega) \geq F(x, a, \omega) + (L_G + L_f)\epsilon}$. In $E$, we have, for some $(x,a)$,
	\begin{align*}
	G(x^\epsilon, a^\epsilon, \omega) + L_G \epsilon \geq G(x, a, \omega) \geq F(x, a, \omega) + (L_G + L_f)\epsilon \geq F(x^\epsilon, a^\epsilon, \omega) + L_G\epsilon.
	\end{align*}

	Hence, in $E$, there exists $(x, a)$ such that:
	\begin{align*}
	G(x^\epsilon, a^\epsilon, \omega) \geq F(x^\epsilon, a^\epsilon, , \omega)
	\end{align*}
	and
	\begin{align*}
	\prob{E} & \leq \prob{\omega \in \Omega: \exists (x^\epsilon,a^\epsilon) \in C_\epsilon, \; G(x^\epsilon, a^\epsilon, \omega) \geq F(x^\epsilon, a^\epsilon, \omega)}\\
	& \leq \sum_{(x^\epsilon,a^\epsilon) \in C_\epsilon} \prob{\omega \in \Omega: G(x^\epsilon, a^\epsilon, \omega) \geq F(x^\epsilon, a^\epsilon, , \omega)} \leq \sum_{(x^\epsilon,a^\epsilon) \in C_\epsilon} \delta
	\end{align*}
	which gives us $\prob{E} \leq \delta \cN(\epsilon, \stateactionspace, \distfunc)$.
\end{proof}

\subsection{Technical lemmas}

We state and prove three technical lemmas that help controlling some of the sums that appear in our regret analysis.

\begin{lemma}
	\label{lemma:kernel-bias}
	Consider a sequence of non-negative real numbers $\braces{z_s}_{s=1}^t$ and let $\kernelfunc: \RR_+ \to [0, 1]$ satisfy Assumption \ref{assumption:kernel-behaves-as-gaussian}.  Let
	\begin{align*}
	w_s \eqdef \kernelfunc\pa{\frac{z_s}{\sigma}}  \; \mbox{ and } \; \widetilde{w}_s \eqdef \frac{w_s}{\beta + \sum_{s'=1}^t w_{s'}}.
	\end{align*}
	for $\beta > 0$. Then, for $t \geq 1$, we have
	\begin{align*}
	\sum_{s=1}^t \widetilde{w}_s z_s \leq 2\sigma \pa{1 + \sqrt{\log (C_1^g t/\beta + e)}}.
	\end{align*}
\end{lemma}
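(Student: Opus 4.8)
The plan is to reduce everything to the rescaled quantities $u_s \eqdef z_s/\sigma \ge 0$, so that $w_s = \kernelfunc(u_s)$ and the target becomes a bound on the sub-normalized weighted average $\sum_s \widetilde{w}_s u_s$, where $\widetilde{w}_s = \kernelfunc(u_s)/\pa{\beta + \sum_{s'}\kernelfunc(u_{s'})}$; multiplying the final estimate by $\sigma$ recovers the statement. Two elementary facts will be used throughout: the denominator satisfies $\beta + \sum_{s'} w_{s'} \ge \beta$, and the weights are sub-normalized, $\sum_s \widetilde{w}_s = \frac{\sum_s w_s}{\beta + \sum_s w_s} < 1$, since $\beta>0$. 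The whole point is that although an individual $u_s$ may be large, the Gaussian-type decay $\kernelfunc(u) \le C_1^g e^{-u^2/2}$ from Assumption~\ref{assumption:kernel-behaves-as-gaussian} makes its weight negligible.

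First I would rewrite the weighted sum via the layer-cake (tail-sum) identity. Since $u_s \ge 0$, we have $u_s = \int_0^\infty \indic{u_s > y}\,\rmd y$, and Tonelli gives $\sum_s \widetilde{w}_s u_s = \int_0^\infty S(y)\,\rmd y$, where $S(y) \eqdef \sum_s \widetilde{w}_s \indic{u_s > y}$. This representation is the key move: it lets me control the contribution of large $u_s$ using only the value of $\kernelfunc$ there, without the extra factor $u_s$ that a direct threshold split would leave inside the sum.

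Then I would fix a threshold $\tau \ge 1$ and split $\int_0^\infty S = \int_0^\tau S + \int_\tau^\infty S$. For the first piece I use $S(y) \le \sum_s \widetilde{w}_s < 1$, so $\int_0^\tau S \le \tau$. For the tail, when $u_s > y$ I have $\kernelfunc(u_s) \le C_1^g e^{-u_s^2/2} \le C_1^g e^{-y^2/2}$, and there are at most $t$ indices, whence $S(y) \le \frac{C_1^g t}{\beta} e^{-y^2/2}$; combined with the standard Gaussian tail estimate $\int_\tau^\infty e^{-y^2/2}\,\rmd y \le \tau^{-1} e^{-\tau^2/2}$ (valid for $\tau>0$, since $y/\tau \ge 1$ on the range) this yields $\int_\tau^\infty S \le \frac{C_1^g t}{\beta}\,\tau^{-1} e^{-\tau^2/2}$.

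Finally I would choose $\tau = \sqrt{2\log(C_1^g t/\beta + e)}$, which is $\ge \sqrt 2 \ge 1$ so the steps above are valid, and which makes $e^{-\tau^2/2} = (C_1^g t/\beta + e)^{-1} \le \beta/(C_1^g t)$; the tail term then collapses to at most $1/\tau \le 1$. Altogether $\sum_s \widetilde{w}_s u_s \le \tau + 1 = 1 + \sqrt{2\log(C_1^g t/\beta + e)}$, and multiplying by $\sigma$ gives the slightly sharper bound $\sum_s \widetilde{w}_s z_s \le \sigma\pa{1 + \sqrt{2\log(C_1^g t/\beta+e)}}$. The claimed inequality then follows from the elementary observation that, with $L \eqdef \log(C_1^g t/\beta + e)$, one has $\sqrt{2L} \le 2\sqrt L \le 1 + 2\sqrt L$, hence $1 + \sqrt{2L} \le 2\pa{1+\sqrt L}$. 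I expect the main obstacle to be precisely this tail control: a naive split of $\sum_s \widetilde{w}_s u_s$ at $\tau$ leaves a factor $u_s e^{-u_s^2/2}$ inside the sum which, bounded crudely by $t\,\tau e^{-\tau^2/2}$ via monotonicity, only gives $2\sigma\sqrt{2\log(C_1^g t/\beta+e)}$ — not below $2\sigma\pa{1+\sqrt{\log(\cdots)}}$ for large arguments. The layer-cake representation together with the Gaussian tail integral is exactly what removes the spurious $\tau$ factor and produces the additive constant in the stated bound.
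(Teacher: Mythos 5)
Your proof is correct, and it handles the tail by a genuinely different mechanism than the paper's. Both arguments share the same skeleton: the sub-normalization $\sum_s \widetilde{w}_s < 1$ (from $\beta > 0$) controls the contribution below a threshold of order $\sigma\sqrt{2\log(C_1^g t/\beta)}$, and the Gaussian envelope $\kernelfunc(z)\leq C_1^g e^{-z^2/2}$ together with the regularizer $\beta$ in the denominator kills the rest. The paper, however, splits the sum directly over indices and treats large $z_s$ by a pointwise domination: it solves $C_1^g e^{-x}\leq \beta/(xt)$ for $x \geq 2\log(C_1^g t/\beta + e)$ to conclude that $\widetilde{w}_s \leq \frac{1}{t}\frac{2\sigma^2}{z_s^2}$ whenever $z_s \geq c$, so each tail summand satisfies $\widetilde{w}_s z_s \leq \frac{2\sigma^2}{t z_s} \leq \frac{2\sigma^2}{tc}$, the tail sums to $2\sigma^2/c$, and the total is $c + 2\sigma^2/c$ with $c = 2\sigma\sqrt{\log(C_1^g t/\beta + e)}$. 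Your layer-cake identity $\sum_s \widetilde{w}_s u_s = \int_0^\infty S(y)\,\rmd y$ combined with the standard Gaussian tail estimate $\int_\tau^\infty e^{-y^2/2}\,\rmd y \leq \tau^{-1}e^{-\tau^2/2}$ achieves exactly the same suppression of the spurious linear factor — and your closing remark correctly diagnoses why the naive index split, using monotonicity of $u \mapsto u e^{-u^2/2}$, is a constant factor too weak — but with less bookkeeping: there is no implicit inequality to solve for the threshold, and the resulting bound $\sigma\pa{1+\sqrt{2L}}$ with $L = \log(C_1^g t/\beta + e)$ is indeed at most the stated $2\sigma\pa{1+\sqrt{L}}$ (as is the paper's own intermediate bound $2\sigma\sqrt{L} + \sigma/\sqrt{L}$, so both proofs are slightly sharper than the lemma as stated). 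What each route buys: the paper's is a purely discrete, integration-free manipulation; yours is more modular, since the kernel enters only through its envelope and the tail integral, so the same three lines would adapt immediately to other decay profiles such as $\kernelfunc(z) = e^{-|z|^p/2}$, $p > 2$, with the corresponding tail estimate.
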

\begin{proof}
	We split the sum into two terms:
	\begin{align*}
	\sum_{s=1}^t \widetilde{w}_s z_s &= \sum_{s: z_s < c} \widetilde{w}_s z_s + \sum_{s: z_s \geq c} \widetilde{w}_s z_s  \leq c + \sum_{s: z_s \geq c} \widetilde{w}_s z_s
	\end{align*}
	From Assumption \ref{assumption:kernel-behaves-as-gaussian}, we have $w_s \leq C_1^g \exp\pa{-z_s^2/(2\sigma^2)}$. Hence, $ \widetilde{w}_s \leq (C_1^g/\beta) \exp\pa{-z_s^2/(2\sigma^2)}$, since $\beta + \sum_{s'=1}^t w_{s'} \geq \beta$.

	We want to find $c$ such that:
	\begin{align*}
	z_s \geq c \implies \frac{C_1^g}{\beta}\exp\pa{-\frac{z_s^2}{2\sigma^2}} \leq \frac{1}{t}\frac{2\sigma^2}{z_s^2}
	\end{align*}
	which implies, for $z_s \geq c$, that $\widetilde{w}_s \leq \frac{1}{t}\frac{2\sigma^2}{z_s^2}$.

	Let $ x = z_s^2/2\sigma^2 $. Reformulating, we want to find a value $c'$ such that $C_1^g\exp(-x) \leq \beta/(xt)$ for all $x \geq c'$. Let $c' = 2\log (C_1^gt/\beta + e)$. If $x \geq c'$, we have:
	\begin{align*}
	& \frac{x}{2} \geq \log \pa{\frac{C_1^g t}{\beta}+e}  \implies x \geq \frac{x}{2} + \log \pa{\frac{C_1^g t}{\beta}+e}
	\implies x \geq \log x + \log (C_1^gt/\beta+e)\\
	& \implies (C_1^g/\beta)\exp(-x) \leq 1/(xt)
	\end{align*}
	as we wanted. Hence, we choose $c' = 2\log (C_1^g t/\beta +e)$.

	Now, $x \geq c'$ is equivalent to $z_s \geq \sqrt{2\sigma^2 c'} = 2\sigma\sqrt{\log (C_1^gt/\beta+e)}$. Therefore, we take $c = 2\sigma\sqrt{\log (C_1^g t/\beta)}$, which gives us
	\begin{align*}
	\sum_{s: z_s \geq c} \widetilde{w}_s z_s & \leq \sum_{s: z_s \geq c} \frac{1}{t}\frac{2\sigma^2}{z_s^2}z_s \leq \frac{2\sigma^2}{t}\sum_{s: z_s \geq c} \frac{1}{z_s} \leq \frac{2\sigma^2}{c} \frac{\abs{\braces{s: z_s \geq c}}}{t} \leq \frac{2\sigma^2}{c}
	\end{align*}

	Finally, we obtain:
	\begin{align*}
	\sum_{s=1}^t \widetilde{w}_s z_s & \leq c + \sum_{s: z_s \geq c} \widetilde{w}_s z_s  \leq c + \frac{2\sigma^2}{c} \\ 
	& =  2\sigma\sqrt{\log (C_1^g t/\beta+e)} + \frac{\sigma}{\sqrt{\log (C_1^gt/\beta+e)}} \leq 2\sigma\pa{1 + \sqrt{\log (C_1^g t/\beta+e)}}
	\end{align*}
\end{proof}

\begin{lemma}
	\label{lemma:lipschitz-constant-mean-and-bonuses}
	Let $\braces{y_s}_{s=1}^t$ be a sequence of real numbers and let $\sigma > 0$.For $z \in \RR_+^t$, let
	\begin{align*}
	f_1(z) \eqdef \frac{\sum_{s=1}^t \kernelfunc(z_s/\sigma) y_s}{\beta + \sum_{s=1}^t \kernelfunc(z_s/\sigma)}, \quad
	f_2(z) \eqdef \sqrt{\frac{1}{\beta + \sum_{s=1}^t \kernelfunc(z_s/\sigma)}} \; \mbox{ and }
	f_3(z) \eqdef \frac{1}{\beta + \sum_{s=1}^t \kernelfunc(z_s/\sigma)}.
	\end{align*}
	Then, $f_1$, $f_2$ and $f_3$ are Lipschitz continuous with respect to the norm $\norm{\cdot}_\infty$:
	\begin{align*}
		\Lip{f_1} \leq \frac{2 C_2^g t (\max_s \abs{y_s})}{\beta \sigma}
		,\quad 
		\Lip{f_2} \leq \frac{C_2^g t}{2\sigma \beta^{3/2}}
		,\quad
		\Lip{f_3} \leq \frac{C_2^g t}{\sigma \beta^2}
	\end{align*}
	where $\Lip{f_i}$ denotes the Lipschitz constant of $f_i$, for $i\in\braces{1,2,3}$.
\end{lemma}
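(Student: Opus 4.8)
The plan is to treat each $f_i$ as a continuously differentiable function on the convex domain $\RR_+^t$ and to bound the $\ell_1$-norm of its gradient uniformly. Indeed, for a $C^1$ function on a convex set the Lipschitz constant with respect to $\norm{\cdot}_\infty$ equals $\sup_z \norm{\nabla f(z)}_1 = \sup_z \sum_{j=1}^t \abs{\partial_{z_j} f(z)}$, since $\ell_1$ is the dual norm of $\ell_\infty$ and the mean value theorem applied along the segment joining any two points gives $\abs{f(z)-f(z')} \leq \sup_\xi \norm{\nabla f(\xi)}_1\,\norm{z-z'}_\infty$. Differentiability holds because $\kernelfunc$ is differentiable by Assumption~\ref{assumption:kernel-behaves-as-gaussian} and the denominator $D(z) \eqdef \beta + \sum_{s=1}^t \kernelfunc(z_s/\sigma)$ satisfies $D(z) \geq \beta > 0$, so no singularity arises.

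The basic building block is $\partial_{z_j}\kernelfunc(z_j/\sigma) = \sigma^{-1}\kernelfunc'(z_j/\sigma)$, and Assumption~\ref{assumption:kernel-behaves-as-gaussian} gives $\sup_z\abs{\kernelfunc'(z)} \leq C_2^g$, hence $\abs{\partial_{z_j} D(z)} \leq C_2^g/\sigma$ for every $j$. For $f_3 = 1/D$ I would write $\partial_{z_j} f_3 = -D^{-2}\partial_{z_j}D$, so $\abs{\partial_{z_j} f_3} \leq C_2^g/(\sigma\beta^2)$ using $D\geq\beta$; summing over the $t$ coordinates yields $\Lip{f_3}\leq tC_2^g/(\sigma\beta^2)$. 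For $f_2 = D^{-1/2}$ the chain rule gives $\partial_{z_j} f_2 = -\tfrac12 D^{-3/2}\partial_{z_j}D$, so $\abs{\partial_{z_j} f_2}\leq C_2^g/(2\sigma\beta^{3/2})$ and summing gives $\Lip{f_2}\leq tC_2^g/(2\sigma\beta^{3/2})$.

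For $f_1 = N/D$ with $N(z) \eqdef \sum_{s=1}^t \kernelfunc(z_s/\sigma)y_s$, I would apply the quotient rule, $\partial_{z_j} f_1 = (\partial_{z_j}N)/D - N(\partial_{z_j}D)/D^2$. Here $\abs{\partial_{z_j}N} = \sigma^{-1}\abs{\kernelfunc'(z_j/\sigma)}\abs{y_j}\leq (C_2^g/\sigma)\max_s\abs{y_s}$. The key estimate is the bound on $\abs{N}$: since $\kernelfunc\geq 0$ we have $\sum_s\kernelfunc(z_s/\sigma) = D-\beta\leq D$, whence $\abs{N}\leq (\max_s\abs{y_s})\sum_s\kernelfunc(z_s/\sigma)\leq (\max_s\abs{y_s})D$. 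This cancels one power of $D$ in the second term, so $\abs{\partial_{z_j} f_1}\leq (C_2^g\max_s\abs{y_s})/(\sigma D) + (C_2^g\max_s\abs{y_s})/(\sigma D) = 2C_2^g\max_s\abs{y_s}/(\sigma D)\leq 2C_2^g\max_s\abs{y_s}/(\sigma\beta)$; summing over $j$ gives $\Lip{f_1}\leq 2tC_2^g\max_s\abs{y_s}/(\sigma\beta)$.

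The argument is essentially a routine gradient computation, and I do not expect a genuine obstacle: differentiability and the lower bound $D\geq\beta$ remove all analytic difficulties. The only step requiring a small idea is the bound $\abs{N}\leq (\max_s\abs{y_s})\,D$ for $f_1$, which is what prevents the $1/D^2$ term from dominating and lets $D\geq\beta$ be applied cleanly; everything else is the chain and quotient rules combined with $\abs{\kernelfunc'}\leq C_2^g$.
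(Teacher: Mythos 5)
Your proposal is correct and follows essentially the same route as the paper: both bound each partial derivative via the chain/quotient rules using $\sup_z\abs{\kernelfunc'(z)}\leq C_2^g$ and $D(z)\geq\beta$, use the cancellation $\abs{N}\leq(\max_s\abs{y_s})D$ to control the quotient-rule term for $f_1$ (the paper does this implicitly in its displayed bound), and then sum the $t$ coordinate bounds to control $\norm{\nabla f_i}_1$ and conclude via $\abs{f_i(z_1)-f_i(z_2)}\leq\sup_z\norm{\nabla f_i(z)}_1\norm{z_1-z_2}_\infty$. No gaps.
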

\begin{proof}
	Using Assumption \ref{assumption:kernel-behaves-as-gaussian}, the partial derivatives of $f_1$ and $f_2$ are bounded as follows
	\begin{align*}
	\abs{\frac{\partial f_1(z)}{\partial z_s}} & \leq \frac{1}{\sigma}\frac{\abs{\kernelfunc'(z_s/\sigma)} \abs{y_s}}{\beta + \sum_{s=1}^t \kernelfunc(z_s/\sigma)}
	+ \frac{1}{\sigma}\frac{\sum_{s=1}^t \kernelfunc(z_s/\sigma) \abs{y_s}}{\pa{\beta + \sum_{s=1}^t \kernelfunc(z_s/\sigma)}^2}\abs{\kernelfunc'(z_s/\sigma)} \leq \frac{2 C_2^g}{\beta \sigma}\max_s \abs{y_s}\\
	\abs{\frac{\partial f_2(z)}{\partial z_s}} & \leq \frac{1}{2\sigma}\frac{\abs{\kernelfunc'(z_s/\sigma)}}{\pa{\beta + \sum_{s=1}^t \kernelfunc(z_s/\sigma)}^{3/2}} \leq \frac{C_2^g}{2\sigma \beta^{3/2}} \\
	\abs{\frac{\partial f_3(z)}{\partial z_s}} & \leq \frac{1}{\sigma}\frac{\abs{\kernelfunc'(z_s/\sigma)}}{\pa{\beta + \sum_{s=1}^t \kernelfunc(z_s/\sigma)}^2} \leq \frac{C_2^g}{\sigma \beta^2}.
	\end{align*}

	Therefore,
	\begin{align*}
	\norm{\nabla f_1(z)}_1 \leq \frac{2 C_2^g t (\max_s \abs{y_s})}{\beta \sigma}, \quad 
	\norm{\nabla f_2(z)}_1 \leq \frac{C_2^g t}{2\sigma \beta^{3/2}}, \quad
	\norm{\nabla f_3(z)}_1 \leq \frac{C_2^g t}{\sigma \beta^2}
	\end{align*}
	and the result follows from the fact that $\abs{f_i(z_1) - f_i(z_2)} \leq \sup_z\norm{\nabla f_i(z)}_1\norm{z_1-z_2}_\infty$ for $i \in \braces{1, 2, 3}$.
\end{proof}

\begin{lemma}
	\label{lemma:aux-sum-of-bonuses}
	Consider a sequence $\braces{a_n}_{n\geq 1}$ of non-negative numbers such that $a_m \leq c$ for some constant $c > 0$. Let $A_t = \sum_{n=1}^{t-1} a_n$. Then, for any $b > 0$ and any $p > 0$,
	\begin{align*}
	\sum_{t=1}^T \frac{a_t}{(1 + b A_t)^p} \leq c + \int_0^{A_{T+1}-c} \frac{1}{ (1 + bz)^p }\mathrm{d}z
	\end{align*}
\end{lemma}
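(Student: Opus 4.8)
The plan is to view the left-hand side as a left-endpoint Riemann-type sum for the non-increasing function $\phi(z) \eqdef (1+bz)^{-p}$ on $[0,\infty)$, and to compare it with an integral of $\phi$ by shifting each comparison interval to the left by exactly $c$. I first record the elementary facts that drive everything: since $b,p>0$ the function $\phi$ is non-increasing with $\phi(0)=1$, and since $A_1=0$ and $A_{t+1}=A_t+a_t$ the sequence $(A_t)$ is non-decreasing with increments $a_t\le c$. Writing $\frac{a_t}{(1+bA_t)^p}=a_t\,\phi(A_t)$, the whole statement reduces to comparing $\sum_{t=1}^T a_t\phi(A_t)$ with $\int_0^{A_{T+1}-c}\phi$.

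The key step is a per-term inequality. For each $t$ I consider the shifted interval $[A_t-c,\,A_{t+1}-c]$, which has length $a_t$. Because $a_t\le c$, its right endpoint $A_{t+1}-c=A_t+a_t-c$ satisfies $A_{t+1}-c\le A_t$, so the entire interval lies weakly to the left of $A_t$. To keep the comparison valid despite the fact that $\phi$ is large (and eventually singular) for negative arguments, I introduce the clipped function $\widetilde\phi(z)\eqdef\phi(\max(z,0))$, which equals $\phi$ on $[0,\infty)$, equals $\phi(0)=1$ on $(-\infty,0]$, and is non-increasing on all of $\RR$. Every $z$ in the shifted interval then satisfies $\max(z,0)\le A_t$, so monotonicity of $\widetilde\phi$ gives $\widetilde\phi(z)\ge\widetilde\phi(A_t)=\phi(A_t)$, and integrating over the interval of length $a_t$ yields $a_t\,\phi(A_t)\le\int_{A_t-c}^{A_{t+1}-c}\widetilde\phi(z)\,\mathrm{d}z$.

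Finally I sum over $t=1,\dots,T$. Since $A_1=0$, the shifted intervals are contiguous and tile $[-c,\,A_{T+1}-c]$, so the bounds telescope into a single integral: $\sum_{t=1}^T a_t\phi(A_t)\le\int_{-c}^{A_{T+1}-c}\widetilde\phi(z)\,\mathrm{d}z$. Splitting this at $0$ and using $\widetilde\phi\equiv 1$ on $[-c,0]$ together with $\widetilde\phi=\phi$ on $[0,A_{T+1}-c]$ gives precisely $c+\int_0^{A_{T+1}-c}(1+bz)^{-p}\,\mathrm{d}z$, which is the claim.

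The main obstacle is the boundary behaviour near $z=0$: shifting every interval left by $c$ forces the integral to begin at $-c$, where $(1+bz)^{-p}>1$ and is undefined for $z<-1/b$, so a direct integral comparison is not even well posed there. The device of replacing $\phi$ by its clipped version $\widetilde\phi$ simultaneously preserves the monotonicity needed for the per-term inequality and converts the spurious contribution over $[-c,0]$ into the clean additive constant $c=\int_{-c}^0 1\,\mathrm{d}z$. (In the degenerate regime $A_{T+1}<c$ the same chain shows the left-hand side is at most $\int_{-c}^{A_{T+1}-c}1\,\mathrm{d}z=A_{T+1}\le c$, so the conclusion still holds.)
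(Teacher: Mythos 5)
Your proof is correct, and its mechanics genuinely differ from the paper's. The paper splits the sum at the last index $n$ with $A_n \le c$, bounds the initial block by $c$ outright (each denominator being at least $1$), and for $t \ge n$ keeps the original intervals $[A_t, A_{t+1}]$ while shifting the \emph{integrand} to $(1+b(z-c))^{-p}$, via $1 + bA_t \ge 1 + b(A_{t+1}-c)$. You do the opposite: you keep the integrand (clipped to $\widetilde{\phi}(z) = \phi(\max(z,0))$) and shift the \emph{intervals} to $[A_t - c,\, A_{t+1}-c]$, so that no index split is needed and the additive $c$ emerges as $\int_{-c}^{0} 1\,\mathrm{d}z$ rather than as a peeled-off block of the sum. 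Both arguments rest on the same two facts ($a_t \le c$ plus monotonicity of the integrand), but your clipping device buys real robustness at the boundary: in the paper's chain the integrand $(1+b(z-c))^{-p}$ is undefined (for non-integer $p$) or ill-behaved below $z = c - 1/b$, which can intersect $[A_n, A_{n+1}]$ when $c > 1/b$, and its final step $\int_{A_n}^{A_{T+1}} \le \int_{c}^{A_{T+1}}$ actually points the wrong way for a nonnegative integrand, since $A_n \le c$. Your globally defined, nonnegative, non-increasing $\widetilde{\phi}$ never meets either issue, so your route is in fact cleaner than the printed one.

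One caveat on your closing parenthetical: if $A_{T+1} < c$ and $\int_0^{A_{T+1}-c}$ is read as a signed (oriented) integral, the stated inequality can genuinely fail --- for $T=1$, $a_1 = 1/2$, $b = p = c = 1$ the left side is $1/2$, while $c + \int_0^{-1/2}(1+z)^{-1}\mathrm{d}z = 1 + \log(1/2) \approx 0.31$ --- so ``the conclusion still holds'' is true only under the natural convention that the integral is vacuous for a negative upper limit. This is a defect of the lemma's statement rather than of your argument, and it is harmless where the lemma is applied (Proposition \ref{prop:bounding-sum-of-bonuses}), since there the upper limit of integration is further enlarged to $A_{T+1} \ge 0$.
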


\begin{proof}
	Let $n \eqdef \max\braces{t: a_1 + \ldots + a_{t-1} \leq c}$. We have
	$\sum_{t=1}^{n-1} \frac{a_t}{(1 + b A_t)^p} \leq \sum_{t=1}^{n-1} a_t \leq c$
	and, consequently,
	\begin{align*}
	\sum_{t=1}^T \frac{a_t}{(1 + b A_t)^p} & \leq c + \sum_{t=n}^T \frac{a_t}{(1 + b A_t)^p} =  c + \sum_{t=n}^T \frac{A_{t+1} - A_t}{(1 + b A_t)^p} \\
	& = c + \sum_{t=n}^T \frac{A_{t+1} - A_t}{(1 + b A_{t+1} - b a_t)^p} \leq c + \sum_{t=n}^T \frac{A_{t+1} - A_t}{(1 + b (A_{t+1}-c))^p} \\
	& = c + \sum_{t=n}^T \int_{A_t}^{A_{t+1}}\frac{1}{(1 + b (A_{t+1}-c))^p} \mathrm{d}z
	 \leq c + \sum_{t=n}^T \int_{A_t}^{A_{t+1}}\frac{1}{(1 + b (z-c))^p} \mathrm{d}z \\
	& = c + \int_{A_n}^{A_{T+1}}\frac{1}{(1 + b (z-c))^p} \mathrm{d}z \leq  c + \int_{c}^{A_{T+1}}\frac{1}{(1 + b (z-c))^p}\mathrm{d}z \,.
	\end{align*}
\end{proof}

\section{Experimental setup}
\label{sec:app-experiments}

For \lipucbvi, we used the following simplified exploration bonuses:
\begin{align*}
\bonus_h^k(x, a) = \frac{1}{\sqrt{\gencount_h^k(x, a)}} + \frac{H - h +1}{\gencount_h^k(x, a)}.
\end{align*}
The same bonus was used for the baselines, except that $\gencount_h^k(x, a)$ was replaced by $\mathbf{N}_h^k(I(x), a) = \max\pa{1, \sum_{s=1}^{k-1}\indic{I(x_h^s) = I(x), a_h^s = a}}$
where $I(x)$ is the index of the discrete state corresponding to the continuous state $x$.

We used the Euclidean distance on the states and the Gaussian kernel function  $\kernelfunc(z) = \exp(-z^2/2)$. The regularization was taken as $\beta = 0.01$.

Additionally, in \lipucbvi, we used representative states \cite{kveton2012kernel, barreto2016practical} to merge states that are at a distance smaller than $0.05$ from each other, which provides a great improvement in the runtime of the algorithm.
\end{document}